\newcommand{\glmnet}{\textsc{glmnet}\xspace}
\newcommand\tagthis{\addtocounter{equation}{1}\tag{\theequation}}
\newcommand{\calG}{\mathcal{G}}
\newcommand{\Lone}{L_1}
\newcommand{\Ltwo}{L_2}
\DeclareMathOperator{\dom}{dom}         %
\newcommand{\ve}[2]{\left\langle #1 ,  #2 \right\rangle}    %
\newcommand{\R}{\mathbb{R}}                      %
\newcommand{\Exp}{\mathbb{E}}                      %
\newcommand{\xv}{ {\bf x}}
\newcommand{\yv}{ {\bf y}}
\newcommand{\zv}{ {\bf z}}
\newcommand{\uv}{ {\bf u}}
\newcommand{\vv}{ {\bf v}}
\newcommand{\wv}{ {\bf w}}
\newcommand{\sv}{ {\bf s}}
\newcommand{\alphav}{ {\boldsymbol \alpha}}
\newcommand{\thetav}{ {\boldsymbol \theta}}
\newcommand{\av}{ {\bf a}}
\newcommand{\bv}{ {\bf b}}
\newcommand{\ev}{ {\bf e}}
\newcommand{\0}{ {\bf 0}}
\newcommand{\id}{\bm{\iota}} %
\newcommand{\vc}[2]{#1^{(#2)}}                   %
\newcommand{\norm}[1]{\left\lVert{#1}\right\rVert}
\newcommand{\cC}{\mathcal{C}}
\newcommand{\bP}{\mathcal{P}}
\newcommand{\bD}{\mathcal{D}}
\newcommand{\bB}{\mathcal{B}}
\newcommand{\bL}{\mathcal{L}}
\newcommand{\bS}{\mathcal{S}}
\newtheorem*{rep@theorem}{\rep@title}
\newcommand{\newreptheorem}[2]{%
\newenvironment{rep#1}[1]{%
 \def\rep@title{#2 \ref{##1}}%
 \begin{rep@theorem}}%
 {\end{rep@theorem}}}
\theoremstyle{plain}
\newtheorem{theorem}{Theorem}
\newtheorem{lemma}[theorem]{Lemma}
\newtheorem{remark}{Remark}
\newtheorem{corollary}[theorem]{Corollary}
\theoremstyle{definition}
\newtheorem{definition}{Definition}
\icmltitlerunning{Primal-Dual Rates and Certificates}
\begin{document}

\twocolumn[
\icmltitle{
Primal-Dual Rates and Certificates
}

\icmlauthor{Celestine D{\"u}nner}{cdu@zurich.ibm.com}\vspace{-1pt}
\icmladdress{IBM Research, Z\"urich, Switzerland}
\vspace{-1mm}
\icmlauthor{Simone Forte}{fortesimone90@gmail.com}\vspace{-1pt}
\icmladdress{ETH Z\"urich, Switzerland}
\vspace{-1mm}
\icmlauthor{Martin Tak\'a\v{c}}{Takac.MT@gmail.com}\vspace{-1pt}
\icmladdress{Lehigh University, USA}
\vspace{-1mm}
\icmlauthor{Martin Jaggi}{jaggim@inf.ethz.ch}\vspace{-1pt}
\icmladdress{ETH Z\"urich, Switzerland}

\icmlkeywords{boring formatting information, machine learning, ICML}

\vskip 0.3in
\vspace{-1mm}
]

\begin{abstract}
We propose an algorithm-independent framework to equip existing optimization methods with primal-dual certificates. Such certificates and corresponding rate of convergence guarantees are important for practitioners to diagnose progress, in particular in machine learning applications.\\
We obtain new primal-dual convergence rates, e.g., for the Lasso as well as many $L_1$, Elastic Net, group Lasso and TV-regularized problems. The theory applies to any norm-regularized generalized linear model. Our approach provides efficiently computable duality gaps which are globally defined, without modifying the original problems in the region of interest.
\end{abstract}

\setlength{\belowdisplayskip}{5pt} \setlength{\belowdisplayshortskip}{4pt}
\setlength{\abovedisplayskip}{5pt} \setlength{\abovedisplayshortskip}{4pt}

\section{Introduction}

The massive growth of available data has moved data analysis and machine learning to center stage in many industrial as well as scientific fields, ranging from web and sensor data to astronomy, health science, and countless other applications. 
With the increasing size of datasets, machine learning methods are limited by the scalability of the underlying optimization algorithms to train these models, which has spurred significant research interest in recent years.

However, practitioners face a significant problem arising with the larger model complexity in large-scale machine learning and in particular deep-learning methods - it is increasingly hard to diagnose if the optimization algorithm used for training works well or not. With the optimization algorithms also becoming more complex (e.g., in a distributed setting), it can often be very hard to pin down if {\it bad performance of a predictive model either comes from slow optimization, or from poor modeling choices.}
In this light, easily verifiable guarantees for the quality of an optimization algorithm are very useful --- note that the optimum solution of the problem is unknown in most cases. For convex optimization problems, a primal-dual gap can serve as such a certificate. If available, the gap also serves as a useful stopping criterion for the optimizer.

So far, the majority of popular optimization algorithms for learning applications comes without a notion of primal-dual gap.
In this paper, we aim to change this for a relevant class of machine learning problems. We propose a primal-dual framework which is algorithm-independent, and allows to equip existing algorithms with additional primal-dual certificates as an add-on.

Our approach is motivated by the recent analysis of SDCA \cite{ShalevShwartz:2013wl}. We extend their setting to the significantly larger class of convex optimization problems of the form
\[
 \min_{\alphav\in\R^n}  \ f(A\alphav) \,+\, g(\alphav)
\]
for a given matrix $A\in\R^{d\times n}$, $f$ being smooth, and $g$ being a general convex function.
This problem class includes the most prominent regression and classification methods as well as generalized linear models. We will formalize the setting in more details in Section \ref{sec:primaldual}, and highlight the associated dual problem, which has the same structure.
An overview over some popular examples that can be formulated in this setting either as primal or dual problem is given in Table \ref{tbl:MLProblems}.

{\bf Contributions.}
The main contributions in this work can be summarized as follows:\vspace{-2mm}
\begin{itemize}[itemsep=0pt]

 \item Our new primal-dual framework is algorithm-independent, that is it allows users to equip existing algorithms with primal-dual certificates and convergence rates.
 
 \item We introduce a new Lipschitzing trick allowing duality gaps (and thus accuracy certificates) which are globally defined, for a large class of convex problems which did not have this property so far. Our approach does not modify the original problems in the region of interest.
In contrast to existing methods adding a small strongly-convex ($L_2$) term as, e.g., in~\cite{ShalevShwartz:2014dy,Zhang:2015vj}, our approach leaves both the algorithms and the optima unaffected.
 
 \item Compared with the well-known duality setting of SDCA \cite{ShalevShwartz:2013wl,ShalevShwartz:2014dy} which is restricted to strongly convex regularizers and finite sum optimization problems,
our framework encompasses a significantly larger class of problems.
We obtain new primal-dual convergence rates, e.g., for the Lasso as well as many $L_1$, Elastic Net, group Lasso and TV-regularized problems. The theory applies to any norm-regularized generalized linear model. 

 \item Existing primal-dual guarantees for the class of ERM problems with Lipschitz loss from \cite{ShalevShwartz:2013wl} (e.g., SVM) are valid only for an ``average'' iteration.
 We show that the same rate of convergence can be achieved, e.g., for accelerated SDCA, but without the necessity of computing an average iterate.

 \item Our primal-dual theory captures a more precise notion of data-dependency compared with existing results (which relied on per-coordinate information only). To be precise, our shown convergence rate for the general algorithms is dependent on the
\emph{spectral norm of the data}, see also \cite{takavc2013mini,takavc2015distributed}. 

\end{itemize}\vspace{-2mm}

\section{Related Work}

{\bf Linearized ADMM solvers.}
For the problem structure of our interest here, one of the most natural algorithms is the splitting method known as the Chambolle-Pock algorithm (also known as Primal-Dual Hybrid Gradient,  Arrow-Hurwicz method, or linearized ADMM) \cite{Chambolle:2010hk}.
While this algorithm can give rise to a duality gap, it is significantly less general compared to our framework. In each iteration, it requires a complete solution of the proximal operators of both $f$ and $g$, which can be computationally expensive. Its convergence rate is sensitive to the used step-size \cite{Goldstein:2015vn}. Our framework is not algorithm-specific, and holds for arbitrary iterate sequences.
More recently, the SPDC method \cite{Zhang:2015vj} was proposed as a coordinate-wise variant of \cite{Chambolle:2010hk}, but only for strongly convex $g$.

{\bf Stochastic coordinate solvers.}
Coordinate descent/ascent methods have become state-of-the-art 
for many machine learning problems \cite{hsieh2008dual,Friedman:2010wm}.
In recent years, theoretical convergence rate guarantees have been developed for the primal-only setting, e.g., by 
\cite{nesterov2012efficiency,
richtarik2014iteration}%
, as well as more recently also for primal-dual guarantees, see, e.g., \cite{LacosteJulien:2013ue,ShalevShwartz:2013wl,ShalevShwartz:2014dy}.
The influential Stochastic\footnote{Here 'stochastic' refers to randomized selection of the active coordinate.} Dual Coordinate Ascent (SDCA) framework \cite{ShalevShwartz:2013wl} was motivated by the $L_2$-regularized SVM, where coordinate descent is very efficient on the dual SVM formulation, with every iteration only requiring access to a single datapoint (i.e. a column of the matrix $A$ in our setup).
In contrast to primal stochastic gradient descent (SGD) methods, the SDCA algorithm family is often preferred as it is free of learning-rate parameters, and has a fast linear (geometric) convergence rate. SDCA and recent extensions \cite{takavc2013mini,ShalevShwartz:2014dy,Qu:2014uw,shalev2015sdca,Zhang:2015vj} require $g$ to be strongly convex.

Under the weaker assumption of weak strong convexity \cite{necoara2015linear}, a linear rate for the primal-dual convergence of SDCA was recently shown by \cite{ma2015linear}.

In the Lasso literature, a similar trend in terms of solvers has been observed recently, but with the roles of primal and dual problems reversed. For those problems, coordinate descent algorithms on the primal formulation have become the state-of-the-art, as in \glmnet~\cite{Friedman:2010wm} and extensions \cite{ShalevShwartz:2011vo,Yuan:2012wi,Yuan:2010ub}. Despite the prototypes of both problem types---SVM for the $\Ltwo$-regularized case and Lasso for the $\Lone$-regularized case---being closely related \cite{Jaggi:2014co}, we are not aware of existing primal-dual guarantees for coordinate descent on unmodified $L_1$-regularized problems.

\begin{table*}[t]
 \caption{Primal-dual convergence rates of general algorithms applied to \eqref{eq:A}, for some machine learning and signal processing problems which are examples of our optimization problem formulations~\eqref{eq:A} and~\eqref{eq:B}. Note that several can be mapped to either~\eqref{eq:A} or~\eqref{eq:B}.
 $\lambda>0$ is a regularization parameter specified by the user. 
 We will discuss the most prominent examples in more detail in Sections \ref{sec:normreg} and \ref{sec:CD}.\vspace{1mm}
 }
\centering 
\setlength\tabcolsep{2pt} %
\begin{tabular}{|l|lc|l|l|c|}
\hline
\multicolumn{2}{|c}{\textbf{Problem}}  & &
\multicolumn{1}{c|}{ $f/f^*$}  & 
\multicolumn{1}{c|}{$g/g^*$} & 
\multicolumn{1}{c|}{\textbf{Convergence}} \\ \hline \hline 
Regularizer\hspace{0.5mm} & $L_1$ & \eqref{eq:A} & $f=\ell(A\alphav)$ & $g = \lambda \|\alphav\|_1$ &  Thm \ref{thm:convGeneral}, Cor \ref{cor:l1CD}
\\ \cline{2-6}
&Elastic Net & \eqref{eq:A} & $f=\ell(A\alphav)$ & $g=\lambda \left(\frac{\eta}{2}\|\alphav\|_2^2+(1\!-\!\eta)\|\alphav\|_1\right) $ & Thm \ref{thm:convFastSCg},\ref{thm:SDCAthm5}, Cor \ref{cor:elasticCD}
\\ \cline{4-6}
&& \eqref{eq:B} & $f^*\!\!=\lambda\!\left(\frac{\eta}{2}\|\wv\|_2^2+(1\!-\!\eta)\|\wv\|_1\right) $ & $g^*=\ell(-A^\top\wv)$&Thm \ref{thm:convFastSCg},\ref{thm:SDCAthm5}, Cor \ref{cor:elasticCD}
\\ \cline{2-6} 
& $L_2$& \eqref{eq:A} & $f=\ell(A\alphav)$ & $g = \frac{\lambda}{2} \|\alphav\|_2^2$ & Thm \ref{thm:convFastSCg},\ref{thm:SDCAthm5}
\\ \cline{4-6}
&& \eqref{eq:B} & $f^*= \frac{\lambda}{2} \|\wv\|_2^2$ & $g^* =\ell(-A^\top\wv)$ & Thm \ref{thm:convFastSCg},\ref{thm:SDCAthm5}
\\ \cline{2-6}
&Fused Lasso & \eqref{eq:A} & $f=\ell(A\alphav)$ & g = $\lambda \|M\alphav\|_1$ & Thm \ref{thm:convGeneral}
\\ \cline{2-6}
&Group Lasso & \eqref{eq:A} & $f=\ell(A\alphav)$ & $g=%
\lambda \sum_{k} \|\alphav_{\calG_k}\|_2 $, $\{\calG_k\} $ part. of $\{1..n\}\!$& Thm \ref{thm:convGeneral}
\\
\hline
\multicolumn{2}{|l}{{SVM {\small (hinge loss)}}}
& \eqref{eq:A} &$f=\frac{\lambda}{2} \|A\alphav\|_2^2$& $g=%
\sum_i -y_i\alpha_i\ $ s.t. $\ y_i\alpha_i\in[0,1]$& Thm \ref{thm:convBoundSup},\ref{thm:SDCAthm2}
\\ \hline
\end{tabular}
\vspace{2mm}

\setlength\tabcolsep{4pt}
\begin{tabular}{|l|l|lr|} 
\hline
Loss $\ell$ & Logistic Regression &$\ell_{\textsf{LR}}(\zv) := \frac1m\sum_{j=1}^m \log (1\!+\!\exp(-y_j \zv_j))$ & for $\zv=A\alphav$ or $\zv=-A^\top\wv$
\\\cline{2-4}
&Least Squares & $\ell_{\textsf{LS}}(\zv) := \frac12\|\zv-\bv\|_2^2$ & for $\zv=A\alphav$ or $\zv=-A^\top\wv$
\\\cline{1-4}
\end{tabular}\vspace{-2mm}

 \label{tbl:MLProblems}
\end{table*}

{\bf Comparison to smoothing techniques.}
Existing techniques for bringing $\Lone$-regularized and related problems into a primal-dual setting compatible with SDCA do rely on the classical Nesterov-smoothing approach %
- By adding a small amount of $\Ltwo$ to the part $g$, the objective becomes strongly convex; see, e.g.,~\citep{Nesterov:2005ic,nesterov2012efficiency,TranDinh:2014vx,
richtarik2014iteration,ShalevShwartz:2014dy}. %
However, the appropriate strength of smoothing is difficult to tune. It will depend on the accuracy level, and will influence the chosen algorithm, as it will change the iterates, the resulting convergence rate as well as the tightness of the resulting duality gaps.
The line of work of \citet{TranDinh:2014vx,TranDinh:2015wd} provides duality gaps for smoothed problems and rates for proximally tractable objectives \citep{TranDinh:2015wd} and also objectives with efficient Fenchel-type operator \citep{Yurtsever:2015vd}. %
In contrast, our approach preserves all solutions of the original $\Lone$-optimization --- it leaves the iterate sequences of existing algorithms unchanged, which is desirable in practice, and allows the reusability of existing solvers. We do not assume proximal or Fenchel tractability. %

{\bf Distributed algorithms.}
For $\Lone$-problems exceeding the memory capacity of a single computer, a communication-efficient distributed scheme leveraging this Lipschitzing trick is presented in \cite{Smith:2015ua,Forte:2015wv}.

\vspace{-2mm}
\section{Setup and Primal-Dual Structure}
\label{sec:primaldual}
In this paper, we consider optimization problems of the following primal-dual structure. As we will see, the relationship between primal and dual objectives has many benefits, including computation of the duality gap, which allows us to have certificates for the approximation quality.

We consider the following pair of optimization problems, which are dual\footnote{For a self-contained derivation see Appendix \ref{app:duality}.}
to each other:
\begin{align}
    \label{eq:A}\tag{A}\quad  %
    \min_{\alphav \in \R^{n}} \quad& \Big[ \ \ 
    \bD(\alphav) :=&& f(A\alphav )
    \ +\ g(\alphav) &\!\!\Big],\ 
\\
    \label{eq:B}\tag{B}\quad %
    \min_{\wv \in \R^{d}} \quad& \Big[ \ \ 
    \bP(\wv) :=&& f^*(\wv )
    \ +\ g^*(-A^\top\wv) &\!\!\Big].\ 
\end{align}
The two problems are associated to a given data matrix $A\in\R^{d\times n}$, and the functions $f : \R^d \rightarrow \R$ and $g : \R^n \rightarrow \R$ are allowed to be arbitrary closed convex functions.
Here $\alphav \in \R^n$ and $\wv \in \R^d$ are the respective variable vectors.
The relation of \eqref{eq:A} and \eqref{eq:B} is called  \emph{Fenchel-Rockafellar Duality} where the functions $f^*,g^*$ in formulation~\eqref{eq:B} are defined as the \textit{convex conjugates}\footnote{The conjugate is defined as $h^*(\vv) := \sup_{\uv\in\R^d} \vv^\top \uv - h(\uv)$.}
 of their corresponding counterparts $f,g$ in~\eqref{eq:A}.
The two main powerful features of this general duality structure are first that it includes many more machine learning methods than more traditional duality notions, and secondly that the two problems are fully symmetric, when changing respective roles of~$f$ and~$g$. %
In typical machine learning problems, the two parts typically play the roles of a data-fit (or loss) term as well as a regularization term. As we will see later, those two roles can be swapped, depending on the application.

{\bf Optimality Conditions.}
The first-order optimality conditions for our pair of vectors $\wv\in \R^d, \alphav \in \R^n$ in problems~\eqref{eq:A} and \eqref{eq:B} are given as\vspace{-3mm}

\begin{minipage}{0.44\columnwidth}
  \begin{subequations}
  \begin{align}
 \wv \in&\ \partial f(A\alphav) \label{eq:opt_f} \ ,\\ 
A\alphav \in&\ \partial f^*(\wv) \label{eq:opt_fstar} \ ,
\end{align}
  \end{subequations}
\end{minipage}%
\hfill
\begin{minipage}{0.56\columnwidth}
  \begin{subequations}
\begin{align}
- A^\top \wv \in&\ \partial g(\alphav) \label{eq:opt_g} \ ,\\ 
\alphav \in&\ \partial g^*(-A^\top\wv) \!\!\!\!\!\!\!\!\!\!&\label{eq:opt_gstar}
\end{align}
  \end{subequations}
\end{minipage}

see, e.g. \citep[Proposition 19.18]{Bauschke:2011ik}.
The stated optimality conditions are equivalent to $\alphav,\wv$ being a saddle-point of the Lagrangian, which is given as $\mathcal{L}(\alphav,\wv) = f^*(\wv) - \langle A\alphav,\wv\rangle - g(\alphav)$ if $\alphav\in \dom(g)$ and $\wv \in \dom(f^*)$. %

{\bf Duality Gap.}
From the definition of the dual problems in terms of the convex conjugates, we always have $\bP(\wv)\geq\bP(\wv^\star)\geq -\bD(\alphav^\star) \geq - \bD(\alphav) $, giving rise to the definition of the general duality gap 
$G(\wv,\alphav) := \bP(\wv) - (-\bD(\alphav)) $.

For differentiable $f$, the duality gap can be used more conveniently:
Given $\alphav \in \R^{n}$ s.t. $A\alphav\in \dom(f)$ in the context of~\eqref{eq:A}, a corresponding variable vector $\wv\in \R^d$ for problem~\eqref{eq:B} is given by the first-order optimality condition~\eqref{eq:opt_f} as\vspace{-2mm}
\begin{equation}
\label{eq:primaldualrelation}
\wv = \wv(\alphav) := \nabla f( A\alphav ) \, .
\end{equation}
\vspace{-3mm}
\newline
Under strong duality, we have $\bP(\wv^\star)=-\bD(\alphav^\star)$ and $\wv(\alphav^\star) = \wv^\star$, where $\alphav^\star$ is an optimal solution of \eqref{eq:A}. 
This implies that the suboptimality 
$\bP(\wv(\alphav))-\bP(\wv^\star)$ is always bounded above by the simpler duality gap function
\begin{align}
\label{eq:gap}
G(\alphav) \hskip-2pt:=&\bP(\wv(\alphav))\hskip-2pt-\hskip-2pt(-\bD(\alphav)) \hskip-2pt\geq \bP(\wv(\alphav))-\bP(\wv^\star) 
\end{align}
which hence acts as a certificate of the approximation quality of the variable 
vector $\alphav$. 

\section{Primal-Dual Guarantees for Any Algorithm Solving \eqref{eq:A}}

In this section we state an important lemma, which will later allow us to transform a suboptimality guarantee of any algorithm into a duality gap guarantee, for optimization problems of the form specified in the previous section.

\begin{lemma}
\label{lemma:dualityGapAndDualSuboptimality}
Consider an optimization problem of the form~\eqref{eq:A}.
Let $f$ be $1/\beta$-smooth w.r.t. a norm $\|.\|_f$ and let $g$ be $\mu$-strongly convex with convexity parameter $\mu \geq 0$ w.r.t. a norm $\|.\|_g$. The general convex case $\mu=0$ is explicitly allowed, but only if $g$ has bounded support.

Then, for any %
$\alphav \in \dom(\bD)$  and any $s\in [0,1]$, it holds that
\begin{align}\label{eq:dualityGapAndDualSuboptimality} 
 \bD(\alphav)-\bD(\alphav^\star)
&\geq s
G(\alphav)
\\ &  +
\tfrac{s^2}2
\big(
\tfrac{\mu(1-s) }{s}
  \|\uv-\alphav\|_g^2
    - 
 \tfrac1\beta  
\|   A   (\uv-\alphav) \|_f^2
\big),\nonumber 
\end{align}
where $G(\alphav)$ is the gap function defined in 
\eqref{eq:gap} and
\begin{equation}
\label{asdfafsafsa}
\uv \in \partial g^*( -A^\top \wv(\alphav)).\vspace{-1mm}
\end{equation}
\end{lemma}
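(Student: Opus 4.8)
The plan is to produce the inequality by evaluating $\bD$ at a cleverly chosen feasible point and then invoking optimality of $\alphav^\star$. Concretely, I would fix $\alphav$, set $\wv := \wv(\alphav) = \nabla f(A\alphav)$, and take the convex combination $\alphav_s := (1-s)\alphav + s\uv = \alphav + s(\uv-\alphav)$, where $\uv \in \partial g^*(-A^\top\wv)$ is the point in the statement. Since $\bD(\alphav^\star) \le \bD(\alphav_s)$, any upper bound on $\bD(\alphav_s) - \bD(\alphav)$ immediately yields a lower bound on $\bD(\alphav) - \bD(\alphav^\star)$, which is exactly the quantity we want to bound from below.

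To bound $\bD(\alphav_s) = f(A\alphav_s) + g(\alphav_s)$ I would treat the two summands separately. For the smooth part, applying $1/\beta$-smoothness of $f$ at the point $A\alphav$ in the direction $sA(\uv-\alphav)$ gives $f(A\alphav_s) \le f(A\alphav) + s\langle \wv, A(\uv-\alphav)\rangle + \tfrac{s^2}{2\beta}\|A(\uv-\alphav)\|_f^2$. For the regularizer, the standard consequence of $\mu$-strong convexity of $g$ along the segment $[\alphav,\uv]$ gives $g(\alphav_s) \le (1-s)g(\alphav) + s\,g(\uv) - \tfrac{\mu}{2}s(1-s)\|\uv-\alphav\|_g^2$. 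Adding these and subtracting $\bD(\alphav) = f(A\alphav) + g(\alphav)$ isolates a first-order term of magnitude $s$, namely $s\big[g(\uv) - g(\alphav) + \langle A^\top\wv,\,\uv-\alphav\rangle\big]$, plus exactly the two quadratic correction terms that appear in the claim.

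The crux is to show that this first-order coefficient equals $-G(\alphav)$. Here the choice $\uv \in \partial g^*(-A^\top\wv)$ is essential: it is equivalent to $-A^\top\wv \in \partial g(\uv)$, so Fenchel--Young holds with \emph{equality}, $g(\uv) + g^*(-A^\top\wv) = -\langle A^\top\wv, \uv\rangle$. Substituting this, together with the Fenchel--Young equality $f(A\alphav) + f^*(\wv) = \langle A\alphav,\wv\rangle$ valid because $\wv = \nabla f(A\alphav)$, makes the linear term collapse precisely to $-s\big[\langle A\alphav,\wv\rangle + g^*(-A^\top\wv) + g(\alphav)\big] = -s\,G(\alphav)$, recalling that the gap form reduces to $G(\alphav) = \langle A\alphav,\wv\rangle + g^*(-A^\top\wv) + g(\alphav)$. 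I expect recognizing this exact cancellation --- that the order-$s$ term is nothing but the negative duality gap --- to be the main conceptual step; the rest is rearrangement. Combining with $\bD(\alphav^\star)\le\bD(\alphav_s)$ and factoring out $s^2/2$ then reproduces \eqref{eq:dualityGapAndDualSuboptimality}.

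Finally, I would check that $\uv$ is well-defined and finite and that the above Fenchel--Young equality is legitimate, since this is the only place the hypotheses on $g$ are used. When $\mu>0$, strong convexity of $g$ forces $g^*$ to be finite and differentiable, so $\uv = \nabla g^*(-A^\top\wv)$ exists; when $\mu=0$, the bounded-support assumption on $g$ guarantees that $g^*$ is finite (indeed Lipschitz) everywhere, so $\partial g^*(-A^\top\wv)$ is nonempty and a finite $\uv$ can still be selected. This is exactly where the bounded-support caveat for the general convex case enters the argument.
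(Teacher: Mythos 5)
Your proposal is correct and follows essentially the same route as the paper's proof: both compare $\bD(\alphav)$ against $\bD(\alphav + s(\uv-\alphav))$ for the same choice $\uv \in \partial g^*(-A^\top\wv(\alphav))$, apply $1/\beta$-smoothness of $f$ and $\mu$-strong convexity of $g$ along that segment, and collapse the order-$s$ term to $-sG(\alphav)$ via the two Fenchel--Young equalities (for $f$ through $\wv=\nabla f(A\alphav)$, and for $g$ through the subgradient choice of $\uv$). Your closing remark on well-definedness of $\uv$ (differentiability of $g^*$ when $\mu>0$, and nonempty subdifferential via bounded support when $\mu=0$) also matches the paper's argument.
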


We note that the improvement bound here bears similarity to the proof of \citep[Prop 4.2]{Bach:2015bz} for the case of an extended Frank-Wolfe algorithm. In contrast, our result here is algorithm-independent, and leads to tighter results due to the more careful choice of $\uv$, as we'll see in the following.

\subsection{Linear Convergence Rates} %

In this section we assume that we are using an arbitrary optimization algorithm applied to problem~\eqref{eq:A}.
It is assumed that the algorithm produces a sequence of (possibly random) iterates
$\{\vc{\alphav}{t}\}_{t=0}^\infty$
such that there exists $C \in (0,1], D \geq 0$ such that
\begin{equation}
\label{eq:afcewwa}
 \Exp[\bD(\vc{\alphav}{t})- \bD(\alphav^\star) ]
  \leq (1-C)^t \ D.
\end{equation}
In the next two theorems, we define $\sigma := \big(\max_{\alphav\ne0} \|A\alphav\|_f/\|\alphav\|_g\big)^2$, i.e., the squared spectral norm of the matrix $A$ in the Euclidean norm case.
\vspace{-1mm}

\subsubsection{Case I. Strongly convex $g$}
Let us assume $g$ is $\mu$-strongly convex ($\mu>0$) (equivalently, its conjugate $g^*$ has Lipschitz continuous gradient with a constant $1/\mu$).
The following theorem provides a linear convergence guarantee for any algorithm with given linear convergence rate for the suboptimality $ \bD(\alphav)-\bD(\alphav^\star)$.
\begin{theorem}
\label{thm:convFastSCg}
Assume the function $f$ is $1/\beta$-smooth w.r.t. a norm $\|.\|_f$ and $g$ is $\mu$-strongly convex w.r.t. a norm $\|.\|_g$.
Suppose we are using a linearly convergent algorithm as specified in \eqref{eq:afcewwa}.
Then, for any 
\begin{equation}\label{eqafsdfasdfas}
 t \geq T := \tfrac1C  \log \tfrac{D(\frac \sigma \beta+\mu)}{\mu \epsilon} 
\end{equation}
it holds that
$ \Exp[G(\vc{\alphav}{t})] \leq \epsilon$.\vspace{-1mm}
\end{theorem}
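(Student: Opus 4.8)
The plan is to feed the assumed linear suboptimality rate \eqref{eq:afcewwa} through the key inequality of Lemma~\ref{lemma:dualityGapAndDualSuboptimality}, after choosing the free parameter $s$ so as to eliminate the quadratic terms. First I would start from \eqref{eq:dualityGapAndDualSuboptimality} and rearrange it to isolate the gap, obtaining
\begin{equation*}
s\,G(\alphav) \leq \bD(\alphav)-\bD(\alphav^\star) + \tfrac{s^2}{2}\Big(\tfrac1\beta\|A(\uv-\alphav)\|_f^2 - \tfrac{\mu(1-s)}{s}\|\uv-\alphav\|_g^2\Big).
\end{equation*}
The next idea is to replace $\|A(\uv-\alphav)\|_f^2$ by $\sigma\|\uv-\alphav\|_g^2$, using the definition $\sigma := \big(\max_{\alphav\neq 0}\|A\alphav\|_f/\|\alphav\|_g\big)^2$. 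This collapses the bracketed expression to $\big(\tfrac\sigma\beta - \tfrac{\mu(1-s)}{s}\big)\|\uv-\alphav\|_g^2$, a single quadratic in the (unknown) displacement $\uv-\alphav$ whose coefficient I can steer via $s$.

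The decisive step is to choose $s$ so that this coefficient is nonpositive, which lets me discard the entire (nonnegative) quadratic term. Requiring $\tfrac\sigma\beta \leq \mu\big(\tfrac1s - 1\big)$ gives precisely $s = \tfrac{\mu}{\sigma/\beta + \mu}$, which lies in $(0,1]$ exactly because $\mu>0$ and $\sigma/\beta \geq 0$. With this $s$ the coefficient vanishes and I am left with the clean deterministic bound
\begin{equation*}
G(\alphav) \leq \tfrac1s\big(\bD(\alphav)-\bD(\alphav^\star)\big) = \tfrac{\sigma/\beta+\mu}{\mu}\big(\bD(\alphav)-\bD(\alphav^\star)\big),
\end{equation*}
valid for every $\alphav \in \dom(\bD)$, hence in particular for each realization of the (possibly random) iterate $\vc{\alphav}{t}$.

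Finally I would take expectations on both sides, invoke the rate \eqref{eq:afcewwa} to obtain $\Exp[G(\vc{\alphav}{t})] \leq \tfrac{\sigma/\beta+\mu}{\mu}(1-C)^t D$, bound $(1-C)^t \leq e^{-Ct}$, and impose that the right-hand side be at most $\epsilon$; solving for $t$ reproduces the stated threshold $T = \tfrac1C\log\tfrac{D(\sigma/\beta+\mu)}{\mu\epsilon}$. I expect the only genuinely nontrivial point to be the choice of $s$: the upstream rearrangement and spectral-norm substitution, and the downstream step of taking expectations and inverting the exponential, are all routine, but the balance $s = \mu/(\sigma/\beta+\mu)$ is exactly what converts the suboptimality-to-gap constant into the factor inside the logarithm of $T$, so pinning it down correctly is what makes the threshold match.
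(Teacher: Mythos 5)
Your proposal is correct and follows essentially the same route as the paper's own proof: bounding $\|A(\uv-\alphav)\|_f^2$ by $\sigma\|\uv-\alphav\|_g^2$, choosing $s=\mu/(\sigma/\beta+\mu)$ to annihilate the quadratic term in Lemma~\ref{lemma:dualityGapAndDualSuboptimality}, then taking expectations and invoking \eqref{eq:afcewwa}. The only cosmetic difference is that you pass through $(1-C)^t\leq e^{-Ct}$ explicitly, whereas the paper solves $\frac{\sigma/\beta+\mu}{\mu}(1-C)^tD\leq\epsilon$ directly and uses $-\log(1-C)\geq C$ implicitly; these are the same estimate.
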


From \eqref{eq:afcewwa}
we can obtain that after
$\tfrac1C\log \tfrac{D}{\epsilon}$ iterations, we would have a point $\vc{\alphav}{t}$ such that $\Exp[D(\vc{\alphav}{t})-D( \alphav^\star)]\leq \epsilon$.
Hence, comparing with \eqref{eqafsdfasdfas}
only few more iterations are needed to get the guarantees for the duality gap.
The rate~\eqref{eq:afcewwa} 
is achieved by most of the first order algorithms, including
proximal gradient descent \cite{nesterov2013gradient}
or SDCA \cite{richtarik2014iteration}
with  $C \sim  \mu $ 
or 
accelerated SDCA \cite{lin2014accelerated}
with $C \sim \sqrt{ \mu}$.  \vspace{-1mm}

\subsubsection{Case II. General Convex $g$ (Of bounded support)}

In this section we will assume that $g^*$ is Lipschitz (in contrast to smooth as in Theorem \ref{thm:convFastSCg}) and show that the linear convergence rate is preserved.
\begin{theorem}
\label{thm:convFastLipsch}
Assume that the function  $f$ is $1/\beta$-smooth w.r.t. a norm $\|.\|$, $g^*$ is $L$-Lipschitz continuous w.r.t the dual norm $\|.\|_*$, and we are using a linearly convergent algorithm
\eqref{eq:afcewwa}.
Then, for any
\begin{equation}
\label{eq:fsarwjvlawjs} 
t \geq T
:=\tfrac1C  \log    \tfrac{2D
\max\{1,  2  \sigma L^2 / \epsilon \beta \}
}{ \epsilon}
\end{equation}
it holds that
$ \Exp[G(\vc{\alphav}{t})] \leq \epsilon$.
\end{theorem}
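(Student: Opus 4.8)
The plan is to specialise Lemma~\ref{lemma:dualityGapAndDualSuboptimality} to the general convex regime $\mu=0$ and convert the resulting inequality into a bound on $G$ that can be pushed below $\epsilon$. Here the norms coincide, $\|.\|_f=\|.\|_g=\|.\|$ with dual $\|.\|_*$, so $\sigma=\big(\max_{\alphav\ne0}\|A\alphav\|/\|\alphav\|\big)^2$. Setting $\mu=0$ in~\eqref{eq:dualityGapAndDualSuboptimality} and rearranging, for any $\alphav\in\dom(\bD)$ and any $s\in[0,1]$,
\begin{equation*}
s\,G(\alphav)\ \le\ \big(\bD(\alphav)-\bD(\alphav^\star)\big)+\tfrac{s^2}{2\beta}\,\|A(\uv-\alphav)\|^2,
\end{equation*}
with $\uv\in\partial g^*(-A^\top\wv(\alphav))$. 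Unlike in Theorem~\ref{thm:convFastSCg}, the quadratic correction now has no compensating strongly convex counterpart, so it will not disappear by itself and must instead be bounded by a constant and then suppressed through the free parameter $s$.

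The next step is to control that correction via the Lipschitz assumption on $g^*$. A convex function is $L$-Lipschitz w.r.t.\ $\|.\|_*$ exactly when all its subgradients have $\|.\|$-norm at most $L$ and, dually, when its conjugate has support inside the $\|.\|$-ball of radius $L$. Hence every feasible iterate $\alphav=\vc{\alphav}{t}\in\dom(\bD)\subseteq\dom(g)$ obeys $\|\alphav\|\le L$, and $\uv$, being a subgradient of $g^*$, obeys $\|\uv\|\le L$ as well. By the triangle inequality and the definition of $\sigma$,
\begin{equation*}
\|A(\uv-\alphav)\|^2\ \le\ \sigma\,\|\uv-\alphav\|^2\ \le\ 4\sigma L^2.
\end{equation*}
Dividing the previous display by $s$ gives the pointwise estimate $G(\alphav)\le\tfrac1s(\bD(\alphav)-\bD(\alphav^\star))+\tfrac{2\sigma L^2}{\beta}\,s$.

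Finally I would insert the assumed rate. Taking expectations and applying~\eqref{eq:afcewwa} yields $\Exp[G(\vc{\alphav}{t})]\le\tfrac{D}{s}(1-C)^t+\tfrac{2\sigma L^2}{\beta}\,s$. I then balance the two terms at level $\epsilon/2$: choosing $s=\min\{1,\,\epsilon\beta/(4\sigma L^2)\}$ makes the second term at most $\epsilon/2$, and it is precisely the cap $s\le1$ that turns $1/s$ into the factor $\max\{1,\,\cdot\}$ of~\eqref{eq:fsarwjvlawjs}. Requiring the first term $\le\epsilon/2$, i.e.\ $(1-C)^t\le s\epsilon/(2D)$, and using $(1-C)^t\le e^{-Ct}$, gives $t\ge\tfrac1C\log\tfrac{2D}{s\epsilon}=\tfrac1C\log\big(\tfrac{2D}{\epsilon}\max\{1,4\sigma L^2/\epsilon\beta\}\big)$, which is the claimed $T$ up to the constant in the logarithm.

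The step I expect to be the main obstacle is the one flagged in the first paragraph: with $\mu=0$ the correction term is not self-suppressing, so linear convergence survives only by trading the target accuracy against the iteration count through $s$, and one must simultaneously respect $s\in[0,1]$ and keep the two contributions balanced. A marginally sharper bound on $\|\uv-\alphav\|$ (or a less symmetric split of $\epsilon$) is what replaces my constant $4\sigma L^2$ by the stated $2\sigma L^2$ inside the logarithm.
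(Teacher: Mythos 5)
Your proof is correct and follows essentially the same route as the paper's: specialize Lemma~\ref{lemma:dualityGapAndDualSuboptimality} to $\mu=0$, bound $\|\uv-\alphav\|$ via the Lipschitzness/bounded-support duality of Lemma~\ref{lem:dualLipschitz} together with the bounded-subgradient characterization, and then balance the suboptimality term against the quadratic correction through the free parameter $s=\min\{1,\epsilon\beta/(4\sigma L^2)\}$. The only discrepancy is the constant inside the logarithm: the paper asserts $\|\uv-\alphav\|^2\le 2L^2$ ``by the triangle inequality,'' which in fact only yields $4L^2$ as you derived, so your constant is the rigorously justified one and the difference amounts to an additive $\tfrac1C\log 2$ in the iteration bound.
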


In \cite{wang2014iteration}, it was proven that 
feasible descent methods when applied to the dual of an SVM do improve the objective geometrically as in \eqref{eq:afcewwa}.
Later, \cite{ma2015linear} extended this to stochastic coordinate feasible descent algorithms (including SDCA).
Using our new Theorem \ref{thm:convFastLipsch}, we can therefore extend their results to linear convergence for the duality gap for the SVM application.

\subsection{Sub-Linear Convergence Rates} %

In this case we will focus only on general $L$-Lipschitz continuous functions $g^*$ %
(if $g$ is strongly convex, then many existing algorithms are available and converge with a linear rate).
\\
We will assume that we are applying some (possibly randomized) algorithm on optimization problem \eqref{eq:A} which produces a sequence (of possibly random) iterates $\{\vc{\alphav}{t}\}_{t=0}^\infty$ such that
\begin{equation}
 \label{eq:sublinearAlgorithm}
 \Exp[ \bD(\vc{\alphav}{t})- \bD(\alphav^\star)] 
 \leq \tfrac{C}{D(t)},
\end{equation}
where $D(t)$ is a function wich has usually a linear or quadratic growth (i.e. $D(t)\sim \mathcal{O}(t)$ or $D(t)\sim \mathcal{O}(t^2)$).

The following theorem will allow to equip existing algorithms with sub-linear convergence in suboptimality, as specified in ~\eqref{eq:sublinearAlgorithm}, with duality gap convergence guarantees.

\begin{theorem}
\label{thm:convBoundSup}
Assume the function $f$ is $1/\beta$-smooth w.r.t. the norm $\|.\|$, $g^*$ is $L$-Lipschitz continuous, w.r.t. the dual norm $\|.\|_*$,
and we are using a sub-linearly convergent algorithm as quantified by \eqref{eq:sublinearAlgorithm}.
Then, for any $t \geq 0$ such that
\begin{align}\label{eq:asfdwafeawf}
D(t)&\geq  \max\{ \tfrac{2C\beta}{\sigma L^2},   
 \tfrac{2 C \sigma L^2}{\beta  \epsilon^2}\},   
\end{align}
it holds that
$ \Exp[G(\vc{\alphav}{t})] \leq \epsilon$.\vspace{-2mm}
\end{theorem}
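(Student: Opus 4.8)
The plan is to feed the sub-optimality guarantee~\eqref{eq:sublinearAlgorithm} into Lemma~\ref{lemma:dualityGapAndDualSuboptimality}, specialised to the general convex case, and then to optimise the free parameter $s$. Since $g^*$ is $L$-Lipschitz, $g$ has bounded support, so the case $\mu=0$ of Lemma~\ref{lemma:dualityGapAndDualSuboptimality} is available. Setting $\mu=0$ there annihilates the strong-convexity term and leaves, for every $\alphav\in\dom(\bD)$ and every $s\in[0,1]$,
\begin{equation}
\bD(\alphav)-\bD(\alphav^\star)\ \geq\ s\,G(\alphav)-\tfrac{s^2}{2\beta}\norm{A(\uv-\alphav)}^2,
\end{equation}
with $\uv\in\partial g^*(-A^\top\wv(\alphav))$ as in the lemma.

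\textbf{Bounding the quadratic term.} First I would bound $\norm{A(\uv-\alphav)}^2$ by combining the spectral-norm definition of $\sigma$ with the boundedness of $\dom(g)$. Because $g^*$ is $L$-Lipschitz w.r.t. $\norm{\cdot}_*$, its subgradients are bounded and $\dom(g)$ is contained in a ball whose radius is controlled by $L$; as both the iterate $\alphav$ and the point $\uv$ lie in $\dom(g)$, this gives a uniform diameter bound $\norm{\uv-\alphav}\leq L$. Together with $\norm{A\zv}\leq\sqrt{\sigma}\,\norm{\zv}$ (the defining property of $\sigma$) this yields $\norm{A(\uv-\alphav)}^2\leq\sigma L^2$. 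Rearranging the displayed inequality to isolate the gap then produces the deterministic estimate
\begin{equation}
G(\alphav)\ \leq\ \tfrac1s\big(\bD(\alphav)-\bD(\alphav^\star)\big)+\tfrac{s\sigma L^2}{2\beta},
\end{equation}
valid for every $\alphav\in\dom(\bD)$ and every $s\in(0,1]$.

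\textbf{Taking expectation and optimising $s$.} Since this estimate is uniform in $\alphav$ and $s$ is a deterministic constant, I would substitute $\alphav=\vc{\alphav}{t}$, take expectations, and apply~\eqref{eq:sublinearAlgorithm} to get $\Exp[G(\vc{\alphav}{t})]\leq \tfrac{C}{s\,D(t)}+\tfrac{s\sigma L^2}{2\beta}$. Minimising the right-hand side over $s$ gives $s^\star=\sqrt{2C\beta/(\sigma L^2 D(t))}$, and the first requirement $D(t)\geq 2C\beta/(\sigma L^2)$ is exactly what guarantees $s^\star\leq 1$, i.e. feasibility of this choice. Substituting $s^\star$ collapses the bound to $\Exp[G(\vc{\alphav}{t})]\leq\sqrt{2C\sigma L^2/(\beta D(t))}$, which is at most $\epsilon$ precisely when $D(t)\geq 2C\sigma L^2/(\beta\epsilon^2)$, the second requirement. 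Taking the maximum of the two thresholds yields condition~\eqref{eq:asfdwafeawf} and hence the claim.

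\textbf{Main obstacle.} I expect the delicate step to be not the optimisation but the passage from ``$g^*$ is $L$-Lipschitz'' to the \emph{uniform} diameter bound $\norm{\uv-\alphav}\leq L$: one must verify that $\uv\in\partial g^*(-A^\top\wv(\alphav))$ genuinely lies in $\dom(g)$, that the Lipschitz constant controls the diameter in the correct (primal) norm, and that the resulting bound holds for \emph{every} realisation of the random iterate so that it survives the expectation. The exact constant here (whether the diameter enters as $L$ rather than as twice a subgradient bound) is what fixes the numerical factors in~\eqref{eq:asfdwafeawf}, and the otherwise mysterious, $\epsilon$-independent lower bound on $D(t)$ is explained entirely by the feasibility constraint $s^\star\leq 1$, which is easy to overlook.
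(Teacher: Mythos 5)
Your proof is correct and follows essentially the same route as the paper's own: Lemma~\ref{lemma:dualityGapAndDualSuboptimality} with $\mu=0$, the bound $\|A(\uv-\alphav)\|^2\leq\sigma\|\uv-\alphav\|^2$ combined with the Lipschitz/bounded-support duality, expectation plus \eqref{eq:sublinearAlgorithm}, and the identical choice $s=\sqrt{2C\beta/(\sigma L^2 D(t))}$, whose feasibility $s\leq 1$ is exactly the first threshold in \eqref{eq:asfdwafeawf}. The one caveat you yourself flagged is real but is shared by the paper: from $\dom(g)$ lying in a ball of radius $L$ the triangle inequality only gives $\|\uv-\alphav\|\leq 2L$ (the paper's proof of Theorem~\ref{thm:convFastLipsch} states $\|\uv-\alphav\|^2\leq 2L^2$, yet its computation for Theorem~\ref{thm:convBoundSup} tacitly uses $L^2$, which is what your bound does too), so your argument reproduces the theorem's stated constants exactly, and the fully rigorous diameter bound would only worsen the thresholds in \eqref{eq:asfdwafeawf} by a constant factor.
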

Let us comment on Theorem  \ref{thm:convBoundSup} stated above.
If $D(t)\sim \mathcal{O}(t)$ then this shows a rate of $\mathcal{O}{(\epsilon^{-2})}$.
We note two important facts:
\begin{enumerate}[,nolistsep]
 \item The guarantee holds for the duality gap of the iterate $\vc{\alpha}{t}$ and not for some averaged solution.
 \item For the SVM case, this rate is consistent with the result of \cite{hush2006qp}.
 Our result is much more general as it holds for any $L$-Lipschitz continuous convex function $g^*$ and any $\beta$-strongly convex $f^*$.
\end{enumerate}

Let us make one more important remark.
In \cite{ShalevShwartz:2013wl} the authors showed that SDCA (when applied on $L$-Lipschitz continuous $g^*$) has $D(t)\sim \mathcal{O}(t)$ and they also showed that an averaged solution (over few last iterates) needs only $\mathcal{O}(\epsilon^{-1})$ 
 iterations to have duality gap $\leq \epsilon$.
However, as a direct consequence of our Theorem~\ref{thm:convBoundSup} we can show, e.g., that
FISTA \cite{beck2009fast} (aka. accelerated gradient descent algorithm)
or APPROX \cite{Fercoq:2015kd}\footnote{APPROX requires $g$ being separable.} (a.k.a. accelerated coordinate descent algorithm)
will need $\mathcal{O}(\epsilon^{-1})$ iterations to produce an iterate $\alphav$ such that  $G(\alphav)\leq \epsilon$.
Indeed, e.g., for APPROX, the function 
$D(t) = ((t-1)\tau + 2n)^2$ (where $\tau$ is the size of a mini-batch) and $C$ is a constant which depends on $\vc{\alphav}{0}$ and $\alphav^\star$ and $\tau$.
Hence, to obtain an iterate $\vc{\alphav}{t}$ such that $\Exp[G(\vc{\alphav}{t})]\leq \epsilon$ it is sufficient to choose $\tau\geq 1$ such that $t  \overset{\eqref{eq:asfdwafeawf}}{\geq}
 1-\tfrac{2n}{\tau}
 +\max\{
  \tfrac1{\tau}\sqrt{
 \tfrac{2C\beta}{\sigma L^2}
 }, \tfrac{L}{\epsilon \tau}
\sqrt{  \tfrac{2 C\sigma}\beta  }
 \} $ is satisfied.

\section{Extending Duality to Non-Lipschitz Functions}

\subsection{Lipschitzing Trick}
\label{lipschTrick}
In this section we present a trick that allows to generalize our results of the previous section from Lipschitz functions~$g^*$ to non-Lipschitz functions. 
The approach we propose, which we call the \emph{Lipschitzing trick}, will make a convex function Lipschitz on the entire domain~$\R^d$, by modifying its conjugate dual to have bounded support. Formally, the modification is as follows: Let $\bB\subset \R^d$ be some closed, bounded convex set. We modify $g:\R^n\rightarrow\R$ by restricting its support to the set $\bB$, i.e.
  \begin{equation}
\bar g(\alphav) := \begin{cases}
g(\alphav) & \text{if } \alphav\in \bB\\
+\infty & \text{ otherwise}
\end{cases}.
\label{eq:Lip}
\end{equation}
By definition, this function has bounded support, and hence, by Lemma \ref{lem:dualLipschitz}, its conjugate function $\bar g^*$ is Lipschitz continuous.

{\bf Motivation.}
We will apply this trick to the part $g$ of optimization problems of the form \eqref{eq:A} (such that $g^*$ will become Lipschitz). 
We want that this modification to  have no impact on the outcome of any optimization algorithm running on~\eqref{eq:A}. Instead, this trick should only affect convergence theory in that it allows us to present a strong primal-dual rate. 
In the following, we will discuss how this can indeed be achieved by imposing only very weak assumptions on the original problem~\eqref{eq:A}.

The modification is based on the following duality of Lipschitzness and bounded support, as given in Lemma~\ref{lem:dualLipschitz} below, which is a generalization of~\citep[Corollary 13.3.3]{Rockafellar:1997ww}. We need the following definition:

\begin{definition}[$B$-Bounded Support]
\label{def:lbounded}
A function $g: \R^d \to \R$ has \emph{$B$-bounded support} if its effective domain is bounded by $B$ w.r.t. a norm $\|.\|$, i.e.,
\begin{equation}
  g(\uv) < + \infty  \ \Rightarrow \  \|\uv\| \le B \, .
\end{equation}
\end{definition}

\begin{lemma}[{Duality between Lipschitzness and L-Bounded Support}]
\label{lem:dualLipschitz}
Given a proper convex function $g$, it holds that
$g$ has $L$-bounded support w.r.t. the norm $\|.\|$ if and only if 
$g^*$ is $L$-Lipschitz w.r.t. the dual norm $\|.\|_*$.
\end{lemma}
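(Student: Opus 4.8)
The plan is to prove the two directions of the equivalence separately, working directly from the definition of the convex conjugate, since this is fundamentally a statement relating the growth/finiteness behavior of $g$ on its domain to the Lipschitz modulus of $g^*$.

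\textbf{Forward direction ($L$-bounded support $\Rightarrow$ $g^*$ is $L$-Lipschitz).} First I would assume $g$ has $L$-bounded support, so $g(\uv)<+\infty$ implies $\|\uv\|\le L$. The key observation is that the supremum defining $g^*(\vv)=\sup_{\uv}\big(\ve{\vv}{\uv}-g(\uv)\big)$ is effectively taken only over the domain $\dom(g)\subseteq\{\uv:\|\uv\|\le L\}$, since $\uv\notin\dom(g)$ contributes $-\infty$. Then for any two points $\vv_1,\vv_2\in\R^d$, I would estimate
\begin{equation*}
g^*(\vv_1)-g^*(\vv_2) = \sup_{\uv\in\dom(g)}\big(\ve{\vv_1}{\uv}-g(\uv)\big) - \sup_{\uv\in\dom(g)}\big(\ve{\vv_2}{\uv}-g(\uv)\big).
\end{equation*}
Picking $\uv_1$ near-achieving the first supremum and bounding the second from below by its value at $\uv_1$, the $g(\uv_1)$ terms cancel and one is left with $\ve{\vv_1-\vv_2}{\uv_1}\le \|\vv_1-\vv_2\|_*\,\|\uv_1\|\le L\|\vv_1-\vv_2\|_*$ by the generalized Cauchy--Schwarz (dual norm) inequality. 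A standard limiting argument over near-maximizers handles the case where the supremum is not attained. Symmetry in $\vv_1,\vv_2$ then yields the $L$-Lipschitz bound.

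\textbf{Reverse direction ($g^*$ is $L$-Lipschitz $\Rightarrow$ $g$ has $L$-bounded support).} Here I would argue by contraposition: suppose $g$ does \emph{not} have $L$-bounded support, so there exists $\uv_0\in\dom(g)$ with $\|\uv_0\|>L$. Using biconjugation ($g=g^{**}$ for proper closed convex $g$, via Fenchel--Moreau), I would exploit that $g(\uv_0)=\sup_{\vv}\big(\ve{\vv}{\uv_0}-g^*(\vv)\big)<+\infty$, which forces $\ve{\vv}{\uv_0}-g^*(\vv)$ to stay bounded as $\vv$ grows. Choosing a direction $\vv$ that nearly realizes $\|\uv_0\|=\sup_{\|\vv\|_*\le1}\ve{\vv}{\uv_0}$ and scaling it up, one sees that $g^*$ must grow at least like $\|\uv_0\|\cdot t$ along the ray $t\vv$, contradicting the $L$-Lipschitz bound $g^*(t\vv)\le g^*(0)+L\|t\vv\|_*$ whenever $\|\uv_0\|>L$. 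This pins the reciprocal relationship between the support radius and the Lipschitz constant.

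\textbf{Main obstacle.} The delicate points are the technical hypotheses needed to make biconjugation valid (properness and closedness/lower semicontinuity of $g$, which the statement grants) and handling suprema that are not attained, so I would phrase both directions via $\varepsilon$-approximate maximizers rather than assuming attainment. Since this lemma is stated as a generalization of \citep[Corollary 13.3.3]{Rockafellar:1997ww} from the Euclidean to an arbitrary norm $\|.\|$ with its dual $\|.\|_*$, the real work is checking that every inequality invoking Cauchy--Schwarz is replaced by the dual-norm inequality $\ve{\vv}{\uv}\le\|\vv\|_*\|\uv\|$ and that $\|\uv\|=\sup_{\|\vv\|_*\le1}\ve{\vv}{\uv}$ (the biduality of norms) is used in the reverse direction; the convexity structure itself carries over unchanged.
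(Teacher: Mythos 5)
Your proposal is correct, but it takes a genuinely different route from the paper. The paper proves the lemma by following Rockafellar's proof of \citep[Corollary 13.3.3]{Rockafellar:1997ww} and generalizing it to arbitrary norms: it invokes the recession function $f0^+(\yv)=\sup_{\xv\in\dom(f)}f(\xv+\yv)-f(\xv)$, the fact that $f0^+$ is the support function of $\dom(f^*)$ \citep[Theorem 13.3]{Rockafellar:1997ww}, the equivalence of boundedness of $\dom(f^*)$ with finiteness of $f0^+$ \citep[Theorem 10.5]{Rockafellar:1997ww}, and then identifies $L\|\cdot\|$ as the support function of the dual-norm ball of radius $L$; the whole equivalence then falls out of support-function calculus. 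You instead argue from first principles: the forward direction is a direct estimate on the conjugate using near-maximizers and the dual-norm inequality $\ve{\vv}{\uv}\le\|\vv\|_*\|\uv\|$, and the reverse direction is a ray-growth contradiction. Your approach is more elementary and self-contained (no recession-function machinery), and it makes explicit exactly where the dual-norm pairing enters; the paper's approach is shorter given the cited results and exhibits the lemma as a one-line consequence of the support-function viewpoint. Two small remarks on your sketch. First, the statement grants only properness and convexity, not closedness, so biconjugation $g=g^{**}$ is not immediately available; the paper handles this by passing to $\mathrm{cl}(g)$, which has the same conjugate and the same (closure of the) domain. In fact your reverse direction does not need biconjugation at all: the Fenchel--Young inequality $g(\uv_0)\ge\ve{\vv}{\uv_0}-g^*(\vv)$, valid for any proper $g$, already gives $g^*(t\vv)\ge t\,\ve{\vv}{\uv_0}-g(\uv_0)$ along the ray, which is all the contradiction requires. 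Second, in the forward direction one should note that $g^*$ is finite everywhere (so that Lipschitz continuity is meaningful): this follows since a proper convex $g$ admits an affine minorant and hence is bounded below on its $L$-bounded domain, giving $g^*(\vv)\le L\|\vv\|_*-\inf_{\dom(g)}g<+\infty$. Neither point is a gap in the underlying idea; both are the kind of bookkeeping your ``main obstacle'' paragraph anticipates.
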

\vspace{-2mm}

The following Theorem \ref{thm:convGeneral} generalizes our previous convergence results, which were restricted to Lipschitz $g^*$.

\begin{theorem}
\label{thm:convGeneral}

For an arbitrary optimization algorithm running on problem~\eqref{eq:A}, let $\vc\alphav t$ denote its iterate sequence. Assume there is some closed convex set $\bB$ containing all these iterates. Then, the same optimization algorithm run on the modified problem --- given by Lipschitzing of $g^*$ using $\bB$ --- would produce exactly the same iterate sequence.
Furthermore,  Theorem~\ref{thm:convFastLipsch} as well as Theorem~\ref{thm:convBoundSup} give primal-dual convergence guarantees for this algorithm (for $L$ such that  $\bB$ is  $L$-bounded).
\end{theorem}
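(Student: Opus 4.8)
The plan is to prove the two assertions of the theorem separately: first that the Lipschitzing modification \eqref{eq:Lip} leaves the iterate sequence untouched, and then that the modified problem satisfies all hypotheses of Theorems~\ref{thm:convFastLipsch} and~\ref{thm:convBoundSup}, so that their conclusions transfer to the common iterate sequence. Throughout I write $\bar\bD(\alphav) := f(A\alphav) + \bar g(\alphav)$ for the modified objective and record the two facts I will use repeatedly: by construction $\bar g(\alphav)=g(\alphav)$ for every $\alphav\in\bB$ (so $\bar\bD$ and $\bD$ agree on $\bB$), while $\bar g \ge g$ on all of $\R^n$.

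For the iterate invariance I would argue by induction on $t$, starting from the same $\vc\alphav 0 \in \bB$ and --- in the randomized case --- coupling the two runs to use identical random choices. The inductive step only requires checking that every quantity the update of $\vc\alphav{t+1}$ reads off the objective agrees for $\bD$ and $\bar\bD$. The smooth part $f\circ A$ is unchanged, so its values and gradients agree; function values and subgradients of $g$ are queried at iterates lying in $\bB$, where $\bar g = g$. The only genuinely global operation is a proximal step $\vc\alphav{t+1}=\mathrm{prox}_{\eta g}(\vv)$ for some $\vv$, and here the key observation is that the point defining this prox lands in $\bB$ (being the next iterate, or by convexity of $\bB$ a convex combination of iterates). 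Since the prox objective is strongly convex and $\bB$ is convex, its unique unconstrained minimizer, if it lies in $\bB$, is automatically the minimizer over $\bB$; as $\mathrm{prox}_{\eta\bar g}(\vv)$ is precisely that constrained minimization over $\bB$, the two prox outputs coincide. Hence $\vc\alphav{t+1}$ is the same for both problems.

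For the second assertion, since $\bar g$ has support contained in the $L$-bounded set $\bB$, Definition~\ref{def:lbounded} and Lemma~\ref{lem:dualLipschitz} give that $\bar g^*$ is $L$-Lipschitz w.r.t. the dual norm, while $f$ stays $1/\beta$-smooth. It remains to transfer the algorithm's suboptimality rate to $\bar\bD$: from $\bar g \ge g$ we get $\min\bar\bD \ge \bD(\alphav^\star)$, and from $\vc\alphav t\in\bB$ we get $\bar\bD(\vc\alphav t)=\bD(\vc\alphav t)$, whence
\[
\bar\bD(\vc\alphav t) - \min\bar\bD \ \le\ \bD(\vc\alphav t) - \bD(\alphav^\star).
\]
Thus whichever rate \eqref{eq:afcewwa} or \eqref{eq:sublinearAlgorithm} holds for \eqref{eq:A} holds verbatim for the modified problem, so all hypotheses of Theorem~\ref{thm:convFastLipsch} (linear case) and Theorem~\ref{thm:convBoundSup} (sub-linear case) are met, and their $\epsilon$-bounds on the expected duality gap of the Lipschitzed problem apply to this iterate sequence.

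The delicate point I expect to be the main obstacle is the iterate-invariance claim for an \emph{arbitrary} algorithm: the assertion that ``the update only reads objective information at points in $\bB$'' is transparent for (sub)gradient and coordinate methods, but requires the convexity argument above whenever a global proximal map appears. I would therefore either formalize an oracle model --- each update being a measurable function of previously observed values, (sub)gradients and prox evaluations, all taken at points the hypothesis (together with convexity of $\bB$) guarantees lie in $\bB$ --- or verify the prox-invariance case by case for the concrete families of interest (proximal and accelerated gradient, (accelerated) coordinate descent, SDCA). Once ``everything the algorithm sees lives on $\bB$'' is made precise, the Lipschitz conclusion and the rate transfer are immediate.
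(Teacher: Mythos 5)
Correct, and it is essentially the paper's own (implicit) argument: the paper offers no separate proof of Theorem~\ref{thm:convGeneral}, treating it as immediate from the construction in Section~\ref{lipschTrick} --- since $\bar g$ coincides with $g$ on $\bB$ and every iterate lies in $\bB$, the algorithm cannot distinguish the two objectives, and since $\bar g$ has $L$-bounded support, Lemma~\ref{lem:dualLipschitz} makes $\bar g^*$ $L$-Lipschitz, so Theorems~\ref{thm:convFastLipsch} and~\ref{thm:convBoundSup} apply to the modified problem. Your write-up in fact supplies two details the paper leaves tacit --- the induction/oracle (and prox-invariance) argument for why the iterates are unchanged, and the transfer of the assumed rate \eqref{eq:afcewwa}/\eqref{eq:sublinearAlgorithm} to the modified objective via $\bar\bD(\vc\alphav t)-\min\bar\bD\le\bD(\vc\alphav t)-\bD(\alphav^\star)$ --- so it is, if anything, more rigorous than the original.
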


\begin{corollary} Assume the objective of optimization problem~\eqref{eq:A} has bounded level sets. For $\vc\alphav t$ being the iterate sequence of a \emph{monotone} optimization algorithm on problem~\eqref{eq:A} we denote $\delta_t := \bD(\vc\alphav t)$ and let $\bB_t$ be the $\delta_t$-level set of $\bD$. Write $B_t>0$ for a value such that $\bB_t$ is $B_t$-bounded w.r.t. the norm $\|.\|$. Then, at any state $t$ of the algorithm, the set $\bB_t$ contains all future iterates and Theorem \ref{thm:convGeneral} applies for $L:=B_t$.
\vspace{-2mm}

\end{corollary}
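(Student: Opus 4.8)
The plan is to verify the corollary's two assertions separately: that $\bB_t$ captures all future iterates, and that it satisfies the hypotheses required to invoke Theorem~\ref{thm:convGeneral} with $L := B_t$.

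First I would show that $\bB_t$ contains every future iterate. By definition, a monotone algorithm produces a non-increasing objective sequence, so $\delta_s = \bD(\vc\alphav s)$ satisfies $\delta_s \le \delta_t$ for all $s \ge t$. Hence $\bD(\vc\alphav s) \le \delta_t$, which is exactly membership of $\vc\alphav s$ in the $\delta_t$-level set $\bB_t$. Taking $s = t$ shows $\vc\alphav t \in \bB_t$ as well, so the whole tail $\{\vc\alphav s\}_{s \ge t}$ lies in $\bB_t$.

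Next I would check the structural requirements of Theorem~\ref{thm:convGeneral}, namely that $\bB_t$ is closed, convex and bounded. Since $f$ and $g$ are closed convex and $A$ is linear, $\bD = f(A\,\cdot\,)+g$ is closed convex, hence its sublevel set $\bB_t$ is closed and convex. Boundedness is guaranteed by the standing assumption that \eqref{eq:A} has bounded level sets, which yields a finite $B_t$ with $\|\alphav\|\le B_t$ for all $\alphav\in\bB_t$; by Definition~\ref{def:lbounded} this is precisely the statement that $\bB_t$ is $B_t$-bounded w.r.t. $\|.\|$.

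Finally I would apply Theorem~\ref{thm:convGeneral} with $\bB := \bB_t$, treating $\vc\alphav t$ as the algorithm's starting point so that its iterate sequence is exactly the tail $\{\vc\alphav s\}_{s\ge t}\subseteq\bB_t$. The theorem then ensures that Lipschitzing $g^*$ via $\bB_t$ leaves these iterates unchanged and that Theorems~\ref{thm:convFastLipsch} and~\ref{thm:convBoundSup} supply primal-dual rates with $L=B_t$. The one step needing care --- and the main obstacle --- is this change of viewpoint: Theorem~\ref{thm:convGeneral} demands that the \emph{entire} iterate sequence lie in $\bB$, whereas $\bB_t$ only contains iterates from time $t$ onward. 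This is resolved by observing that Lipschitzing alters $g$ only outside $\bB_t$, so the algorithm's behavior on the tail is independent of its earlier history; equivalently, one re-indexes the run to begin at state $t$. Because $\delta_t$ is non-increasing, the sets $\bB_t$ are nested and shrinking, so the admissible constant $B_t$ --- and hence the bound it feeds into the convergence rates --- can only improve as $t$ grows.
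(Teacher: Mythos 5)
Your proposal is correct and follows exactly the argument the paper intends: the corollary is stated without a separate proof precisely because it reduces to (i) monotonicity placing all iterates from time $t$ onward in the sublevel set $\bB_t$, (ii) closedness, convexity and boundedness of $\bB_t$ from the standing assumptions, and (iii) invoking Theorem~\ref{thm:convGeneral} on the tail of the sequence with $L:=B_t$. Your explicit handling of the re-indexing point --- viewing $\vc\alphav t$ as the starting iterate so that the Lipschitzing of $g^*$ via $\bB_t$ leaves the tail unchanged --- is a detail the paper glosses over, but it is resolved in the same spirit as the corollary's own phrasing ``at any state $t$ of the algorithm.''
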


\begin{figure}[t]
\centering
\begin{tikzpicture}[xscale=3,yscale=3]
\draw [->] (-1,0) -- node[above, at end]{$g(\alpha)$}(-1,0.5);
\draw [->] (-1.5,0) -- node[right, at end]{$\alpha$}(-0.5,0);
\draw [thick](-1,0) -- (-1.5,0.5);
\draw [thick](-1,0) -- (-0.5,0.5);
\draw [thick, dashed, blue](-1,0) -- (-1.4,0.4);
\draw [dashed, blue,thick](-1,0) -- (-0.6,0.4);
\draw[dashed,thin](-1.4,0.4) -- node[below, at end, black]{$-B$}(-1.4,0);
\draw[dashed,thick, blue](-1.4,0.4) -- (-1.4,0.6);
\draw[dashed,thin](-0.6,0.4) -- node[below, at end, black]{$B$}(-0.6,0);
\draw[dashed,thick, blue](-0.6,0.4) -- (-0.6,0.6);
\draw [->] (0.5,0) -- node[above, at end]{$g^*(u)$}(0.5,0.5);
\draw [->] (0,0) -- node[right, at end]{$u$}(1,0);
\draw [thick](0.8,0) -- (0.8,0.5);
\draw [thick](0.2,0) -- (0.2,0.5);
\draw [thick](0.2,0) -- (0.8,0);
\draw [dashed, blue,thick](0.2,0) -- (0.8,0);
\draw [dashed,thick, blue](0.8,0) -- (0.9,0.5);
\draw [dashed,thick, blue] (0.2,0) -- (0.1,0.5);
\draw [white](-0.2,0) -- node[black,above, at end]{$\large\xrightarrow{()^*}$}(-0.2,0.2);
\end{tikzpicture}
\caption[Illustration of $\theta$-orthogonality]{Illustration Lipschitzing trick for the scalar function $g(\alpha)=|\cdot|$ with $\bB=\{x: |x|\leq B\}$.}
\label{fig:LipschitzingTrick}
\vskip-0.4cm
\end{figure}
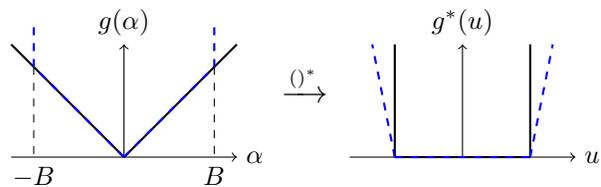
\vspace{-0.1cm}
\subsection{Norm-Regularized Problems}
\label{sec:normreg}

We now focus on some applications. First, we demonstrate how the Lipschitzing trick can be applied to find primal-dual convergence rates for problems regularized by an arbitrary norm. We discuss in particular the Lasso problem and show how the  suboptimality bound can be evaluated in practice.
In a second part, we discuss the Elastic Net regularizer and show how it fits into our framework.

We consider a special structure of problem \eqref{eq:A}, namely 
\begin{equation}\label{eq:NormReg}
\min_{\alphav\in\R^n} f(A\alphav)+ \lambda \norm {\alphav}.
\end{equation}
where $f$ is some convex non-negative smooth loss function regularized by any norm $\|.\|$. We choose $\bB$ to be the $\|.\|$-norm ball of radius $B$, such that  $\|\alphav\|< B$ for every iterate. The size, $B$, of this ball can be controled by the amount of regularization $\lambda$. 
Using the Lipschitzing trick with this $\bB$, the convergence guarantees of Theorem \ref{thm:convGeneral} apply to the general class of norm regularized problems.

Note that for any monotone  algorithm (initialized at $\0$) applied to \eqref{eq:NormReg} we have $\|\alphav\|\leq \tfrac 1\lambda f(\0)$ for every iterate. Furthermore, at every iterate $\alphav$ we can bound $\|\alphav^+\|\leq\tfrac1 {\lambda} ( f(A\alphav)+ \lambda \norm {\alphav})$ for every future iterate $\alphav^+$. Hence, $B:=\tfrac 1\lambda f(\0)$ is a safe choice, e.g. $B:= \tfrac{1}{2 \lambda} \|\bv\|_2^2$ for least squares loss and  $B:= \tfrac{m}{\lambda} \log(2)$ for logistic regression loss.

{\bf Duality Gap.} For any problem of the form \eqref{eq:NormReg} we can now determine the duality gap.
We apply the Lipschitzing trick to $g(\alphav):=\lambda\|\alphav\|$ as in \eqref{eq:Lip}, then the convex conjugate of $\bar g$ is\vspace{-2mm}
\begin{equation}\bar g^*(\uv) = 
\begin{cases}
0 & \|\uv\|_*\leq \lambda\\
\underset{\alphav:  \alphav \in \bB}{\max} \uv^\top \alphav - \lambda\|\alphav\|& \text{else}
\end{cases}.
\label{eq:barg*}
\end{equation}
where $\|.\|_*$ denotes the dual norm of $\|.\|$. Hence, using the primal-dual mapping $\wv(\alphav):=\nabla f(A \alphav)$ we can write the duality gap of the modified problem  as
\begin{equation}
\label{eq:gapLip}
\bar G(\alphav)=\langle \wv(\alphav), A\alphav\rangle+ \lambda \|\alphav\| + \bar g^*(-A^\top\wv(\alphav))
\end{equation}
Note that the computation of the modified duality gap $\bar G(\alphav)$ is not harder than the computation of the original gap $G(\alphav)$ - it requires one pass over the data, assuming the choice of a suitable set $\bB$ in \eqref{eq:barg*}.
Furthermore,  note that  in contrast to the unmodified duality gap $G(\alphav)$, which is only defined on the set $\left\{\alphav: \|A^\top\wv(\alphav)\|_*\leq \lambda \right\}$, our new gap $\bar G(\alphav)$ is defined on the entire space~$\R^d$.

As an alternative, \citep[Sec. 1.4.1 and App. D.2]{Mairal:2010vm} has defined a different duality gap by shrinking the dual $\wv$ variable until it becomes feasible in $\|A^\top\wv\|_*\leq \lambda$. %

\subsubsection{$L_1$-Regularized Problems}
\label{sec:Lasso}
The results from the previous section can be ported to the important special case of $L_1$-regularization:
\begin{equation}\label{eq:LassoObj}
\min_{\alphav\in\R^n} f(A\alphav)+ \lambda \norm {\alphav}_1.
\end{equation}
We choose $\bB$ to be the $L_1$-norm ball of radius $B$ and then apply the Lipschitzing trick with $\bB$ to the regularization term in \eqref{eq:LassoObj}. An illustration of this modification as well as the impact on its dual are illustrated in Figure \ref{fig:LipschitzingTrick}.
Hence, our theory (Theorem \ref{thm:convGeneral}) gives primal-dual convergence guarantees for any algorithm applied to the Lasso problem~\eqref{eq:LassoObj}. Furthermore, if the algorithm is monotone (initialized at $\alphav:=\0$) we know that $\bB$ is $\tfrac 1\lambda f(\0)$-bounded.

{\bf Duality Gap.}
The duality gap \eqref{eq:gapLip} for the modified Lasso problem can be computed at every iterate $\alphav$ as
\begin{equation}
\bar G(\alphav) =\ \langle \wv,A \alphav\rangle + B \left[\| A^\top\wv\|_\infty-\lambda\right]_+ +\lambda \|\alphav\|_1\label{eq:GLasso}
\end{equation}
for $\wv=\wv(\alphav)$. For the derivation, see Appendix  \ref{sec:DualityGapLasso}.

\subsubsection{Group Lasso, Fused Lasso and TV-Norm}
Our algorithm-independent, primal-dual convergence guarantees and certificates presented so far are not restricted to $L_p$-regularized problems, but do in fact directly apply to many more general structured regularizers, some of them shown in see Table~\ref{tbl:MLProblems}.
This includes group Lasso ($L_1/L_2$) and other norms inducing structured sparsity \cite{Bach:2012fe}, as well as other penalties such as e.g. the fused Lasso $g(\alphav) := \lambda \|M\alphav\|_1$.
The total variation denoising problem is obtained for suitable choice of the matrix $M$.

\subsubsection{Elastic Net Regularized Problems}
\label{sec:ElasticNet}
The second application we will discuss  is Elastic Net regularization
\begin{equation}\label{eq:elasticNet}
\min_{\alphav \in \R^n} \ell(A\alphav) +\lambda \left(\frac{\eta}{2} \|\alphav\|_2^2 + (1-\eta)\|\alphav\|_1\right),
\end{equation}
for fixed trade-off parameter $\eta\in(0,1]$.

Our framework  allows two different ways to solve problem~\eqref{eq:elasticNet}: Either mapping it to formulation \eqref{eq:A} (for $\ell$=$f$ smooth) as in row~2 of  Table \ref{tbl:MLProblems}, or to \eqref{eq:B} (for general $\ell$) as in row~3 of Table~\ref{tbl:MLProblems}. 
In both scenarios, Theorem \ref{thm:convFastSCg} gives us a fast linear convergence guarantee for the duality gap, if~$\ell$ is smooth. The other theorems apply accordingly for general~$\ell$ when the problem is mapped to \eqref{eq:B}.
\\
Whether the choice of a dual or primal optimizer will be more beneficial in practice depends on the case, and will be discussed in more detail for coordinate descent methods in Section \ref{sec:CD}.

{\bf Duality Gap.}
For the Elastic Net problem \eqref{eq:elasticNet} mapped to~\eqref{eq:A}, we can compute the duality gap \eqref{eq:gap} as follows:
\begin{align*}
G(\alphav) &=\langle\wv, A\alphav\rangle+\tfrac{1}{2\eta \lambda}\sum_{i=1}^n  \left[\left|A_{:i}^\top\wv\right|-(1-\eta)\lambda\right]_+^2\\
&+\lambda \left(\tfrac{\eta}{2} \|\alphav\|_2^2 + (1-\eta)\|\alphav\|_1\right)
\end{align*}
with $\wv=\wv(\alphav) = \nabla \ell(A \alphav)$, see Appendix \ref{sec:DualityGapElastic}. %
\begin{remark}
As $\eta\rightarrow0$ we approach the pure $L_1$-case and  this gap blows up as $G(\alphav)\rightarrow\infty$. Comparing this to \eqref{eq:GLasso}, we see that the Lipschitzing trick  allows to get certificates even in cases where the duality gap of the unmodified problem is infinity.
\end{remark}

\section{Coordinate Descent Algorithms}
\label{sec:CD}
We now focus on a very important class of algorithms, that is coordinate descent methods.
In this section, we show how our theory implies much more general primal-dual convergence guarantees for coordinate descent algorithms.

{\bf Partially Separable Problems.}
A widely used subclass of optimization problems arises when one part of the objective becomes separable. 
Formally, this is expressed as $g(\alphav) = \sum_{i=1}^{n} g_i(\alpha_i)$ for univariate functions $g_i:\R\rightarrow\R$ for $i\in[n]$.
Nicely in this case, the conjugate of $g$ also separates as
$g^*(\yv) = \sum_i g_i^*(y_i)$. Therefore, the two optimization problems \eqref{eq:A} and \eqref{eq:B} write as
\begin{align}
    \bD(\alphav) :=&\ f(A\alphav )
    + \textstyle\sum_i g_i(\alpha_i) \label{eq:Dsep}\tag{SA}\\
    \bP(\wv) :=&\ f^*(\wv )
    + \textstyle\sum_i g_i^*(-A_{:i}^{\ \top}\wv) \ ,\label{eq:Psep}\tag{SB}\
\end{align}
where $A_{:i} \in \R^d$ denotes the $i$-th column of $A$.

\paragraph{The Algorithm.}
We consider the coordinate descent algorithm described in Algorithm \ref{alg:SDCA}. Initialize $\vc{\alphav}{0}=\0$ and then, at each iteration, sample and update a random coordinate $i\in[n]$ of the parameter vector $\alphav$ to iteratively minimize \eqref{eq:Dsep}. Finally, after $T$ iterations output $\bar \alphav$, the average vector over the latest $T-T_0$ iterates. The parameter $T_0$ is some positive number smaller than $T$. 

\setlength{\textfloatsep}{15pt}
\begin{algorithm}[h]
\caption{Coordinate Descent on $\bD(\alphav)$}
\label{alg:SDCA}
\begin{algorithmic}[1]
\STATE {\bf Input:} Data matrix $A$.\\
Starting point $\vc{\alphav}{0} := \0 \in \R^n$, $\vc{\wv}{0}=\wv(\vc{\alphav}{0})$.
\FOR {$t =  1, 2, \dots T$}
     \STATE Pick $i\in[n]$ randomly
	  \STATE Find $\Delta\alpha_i$ minimizing $\bD(\vc{\alphav}{t-1}+\ev_i \Delta \alpha_i)$
	  \STATE $\vc{\alphav}{t}\leftarrow \vc{\alphav}{t-1}+\Delta \alpha_i \ev_i$
	  \STATE $\vc{\wv}{t} \leftarrow \wv(\vc{\alphav}{t})$
\ENDFOR 
\STATE Let $\bar{\alphav}= \frac{1}{T-T_0}\sum_{t=T_0}^{T-1}\vc{\alphav}{t}$ 
\end{algorithmic}
\end{algorithm}

As we will show in the following section, coordinate descent on $\bD(\alphav)$ is not only an efficient optimizer of the objective $\bD(\alphav)$, but also provably reduces the duality gap. Therefore, the same algorithm will simultaneously optimize the dual objective $\bP(\wv)$.

\subsection{Primal-Dual Analysis for Coordinate Descent} 
We first show linear primal-dual convergence rate of Algorithm \ref{alg:SDCA} applied to \eqref{eq:Dsep} for strongly convex $g_i$. 
Later, we will generalize this result to also apply to the setting of general Lipschitz $g_i$. This generalization together with the Lipschitzing trick will allow us to derive primal-dual convergence guarantees of coordinate descent for a much broader class of problems, including the Lasso problem.

For the following theorems we assume that the columns of the data matrix $A$ are scaled such that $\|A_{:i}\| \leq R$ for all $i\in[n]$ and $\|A_{j:}\| \leq P$ for all $ j\in[d]$, for some norm $\|.\|$.

\begin{theorem}
\label{thm:SDCAthm5}
Consider Algorithm \ref{alg:SDCA} applied to \eqref{eq:Dsep}. Assume $f$ is a $1/\beta$-smooth function w.r.t. the norm $\|.\|$. Then, if $g_i$ is $\mu$-strongly convex for all~$i$, it suffices to have a total number of iterations of 
\begin{align*}
T&\geq \left(n+\tfrac{n R^2}{\mu \beta}\right) \log\left( \left[n+\tfrac{n R^2}{\mu \beta}\right] \tfrac{\epsilon_D^{(0)}}{\epsilon}\right)
\end{align*}
to get $\Exp[G(\vc{\alphav}{T})] \leq \epsilon$.
Moreover, to obtain an expected duality gap of $\Exp[G(\bar \alphav)] \leq \epsilon$ it suffices to have $T>T_0$ with 
\begin{align*}
T_0&\geq \left(n+\tfrac{n R^2}{\mu \beta}\right) \log\left( \left[n+\tfrac{n R^2}{\mu \beta}\right] \tfrac{\epsilon_D^{(0)}}{(T-T_0)\epsilon}\right)
\end{align*}
where $ \epsilon_D^{(0)}$ is the initial suboptimality in $\bD(\alphav)$.
\end{theorem}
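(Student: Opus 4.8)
The plan is to base the entire argument on a single one-step descent inequality that retains the full duality gap, that is, a coordinate-wise analogue of Lemma~\ref{lemma:dualityGapAndDualSuboptimality}. Write $\epsilon_D^{(t)}:=\bD(\vc{\alphav}{t})-\bD(\alphav^\star)$, let $\alphav:=\vc{\alphav}{t-1}$ and $\wv:=\wv(\alphav)=\nabla f(A\alphav)$, and let $\uv$ be the point with $u_i=\nabla g_i^*(-A_{:i}^\top\wv)$ (unique and finite since each $g_i$ is $\mu$-strongly convex). The heart of the proof is the recursion
\[
\Exp_i[\epsilon_D^{(t)}]\ \leq\ \epsilon_D^{(t-1)}-C\,G(\vc{\alphav}{t-1}),\qquad C:=\Big(n+\tfrac{nR^2}{\mu\beta}\Big)^{-1},
\]
where $\Exp_i$ is the expectation over the coordinate sampled at step $t$. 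Everything else is bookkeeping on top of this inequality.

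To derive it, I would lower-bound the exact coordinate update by the particular feasible step $\delta=s(u_i-\alpha_i)$ toward $\uv$ for a free parameter $s\in[0,1]$: by the exact coordinate minimization in Algorithm~\ref{alg:SDCA}, $\bD(\vc{\alphav}{t})\leq\bD(\alphav+s(u_i-\alpha_i)\ev_i)$. I then expand the right-hand side using (i) $1/\beta$-smoothness of $f$ along the column $A_{:i}$, contributing $\tfrac{s^2(u_i-\alpha_i)^2}{2\beta}\|A_{:i}\|^2\leq\tfrac{s^2(u_i-\alpha_i)^2R^2}{2\beta}$, and (ii) $\mu$-strong convexity of the single-coordinate function $g_i$, contributing $s(g_i(u_i)-g_i(\alpha_i))-\tfrac{\mu s(1-s)}2(u_i-\alpha_i)^2$. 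Averaging over $i$ with probability $1/n$ introduces a factor $1/n$, and the key algebraic step is to recognize, via the Fenchel equalities $f(A\alphav)+f^*(\wv)=\langle\wv,A\alphav\rangle$ and $g_i(u_i)+g_i^*(-A_{:i}^\top\wv)=-u_iA_{:i}^\top\wv$, that $\sum_i\big[(u_i-\alpha_i)A_{:i}^\top\wv+g_i(u_i)-g_i(\alpha_i)\big]=-G(\alphav)$. Collecting terms leaves a residual $\tfrac{s}{2n}\|\uv-\alphav\|^2\big(\tfrac{sR^2}\beta-\mu(1-s)\big)$, which I annihilate by choosing $s=\tfrac{\mu\beta}{R^2+\mu\beta}$; since $C=s/n$, this yields exactly the displayed recursion.

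From the recursion the last-iterate bound follows in two steps. Using $G\geq\epsilon_D^{(t-1)}$ (the gap dominates suboptimality, cf.~\eqref{eq:gap}) gives $\Exp[\epsilon_D^{(t)}]\leq(1-C)\epsilon_D^{(t-1)}$, hence $\Exp[\epsilon_D^{(t)}]\leq(1-C)^t\epsilon_D^{(0)}$; feeding this back into the recursion gives $C\,\Exp[G(\vc{\alphav}{t})]\leq\Exp[\epsilon_D^{(t)}]\leq(1-C)^t\epsilon_D^{(0)}$, so $\Exp[G(\vc{\alphav}{t})]\leq\tfrac1C(1-C)^t\epsilon_D^{(0)}$. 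Forcing the right-hand side $\leq\epsilon$ and using $(1-C)^t\leq e^{-Ct}$ gives $t\geq\tfrac1C\log\tfrac{\epsilon_D^{(0)}}{C\epsilon}=\tfrac1C\log\!\big(\tfrac1C\tfrac{\epsilon_D^{(0)}}\epsilon\big)$, which is precisely the first claimed threshold since $\tfrac1C=n+\tfrac{nR^2}{\mu\beta}$.

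For the averaged iterate I telescope the recursion over $t=T_0+1,\dots,T$, obtaining $C\sum_{t=T_0}^{T-1}\Exp[G(\vc{\alphav}{t})]\leq\Exp[\epsilon_D^{(T_0)}]\leq(1-C)^{T_0}\epsilon_D^{(0)}$, and divide by $T-T_0$. Passing from the average of the gaps to $\Exp[G(\bar\alphav)]$ is the one genuinely delicate point: it follows from convexity of $\bD$ and of $\bP$ together with affineness of the primal--dual map $\wv(\cdot)=\nabla f(A\cdot)$ (as for the least-squares/ridge type $f$ in Table~\ref{tbl:MLProblems}), which gives $\bD(\bar\alphav)\leq\tfrac1{T-T_0}\sum_t\bD(\vc{\alphav}{t})$ and $\bP(\wv(\bar\alphav))\leq\tfrac1{T-T_0}\sum_t\bP(\wv(\vc{\alphav}{t}))$, hence $G(\bar\alphav)\leq\tfrac1{T-T_0}\sum_t G(\vc{\alphav}{t})$; I expect most of the remaining care to go into justifying this averaging. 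Granting it, requiring $\tfrac{(1-C)^{T_0}\epsilon_D^{(0)}}{C(T-T_0)}\leq\epsilon$ and solving for $T_0$ (again with $(1-C)^{T_0}\leq e^{-CT_0}$ and $\tfrac1C=n+\tfrac{nR^2}{\mu\beta}$) reproduces the stated bound on $T_0$. The main obstacle throughout is the one-step inequality itself --- specifically the correct choice of $\uv$ and the Fenchel identity that collapses the averaged linear terms into $-G$.
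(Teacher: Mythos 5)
Your proposal follows essentially the same route as the paper: your one-step recursion is exactly the paper's Lemma~\ref{lemma:basic} with the same choice $s=\mu\beta/(R^2+\mu\beta)$ (so that the residual term $F^{(t)}$ vanishes and $C=s/n$), and both the last-iterate bound and the averaged bound are then obtained, as in the paper, from gap-dominates-suboptimality and from telescoping the recursion over $t=T_0,\dots,T-1$. The only divergence is that you explicitly flag the Jensen step $G(\bar\alphav)\le\tfrac{1}{T-T_0}\sum_t G(\vc{\alphav}{t})$, which the paper treats as immediate after summing; if anything, you are more careful there than the source.
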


Theorem \ref{thm:SDCAthm5} allows us to upper bound the duality gap, and hence the suboptimality, for every iterate $\vc \alphav T$, as well as the average $\bar \alphav$ returned by Algorithm \ref{alg:SDCA}.
In the following we generalize this result to apply to $L$-Lipschitz functions~$g_i$.

\begin{theorem}%
\label{thm:SDCAthm2}
Consider Algorithm \ref{alg:SDCA} applied to \eqref{eq:Dsep}. Assume $f$ is a $1/\beta$-smooth function w.r.t. the norm $\|.\|$. Then, if $g_i^*$ is $L$-Lipschitz for all $i$,  it suffices to have a total number of iterations of 
\begin{align*}
T&\geq \max\left\{0,n\log\tfrac{ \epsilon_D^{(0)} \beta}{ 2L^2R^{2} n}\right\}+  n +\tfrac{20 n^2 L^2  R^2}{\beta \epsilon }
\end{align*}
to get $\Exp[G(\bar \alphav)] \leq \epsilon$. Moreover, when $t\geq T_0$ with
\[T_0= \max\left\{0,n\log\tfrac{ \epsilon_D^{(0)} \beta}{ 2L^2R^{2} n}\right\}+\tfrac{16 n^2L^2R^2}{ \beta \epsilon}\]
 we have the suboptimality bound of $\Exp[\bD(\alphav^{(t)})-\bD(\alphav^\star)]\leq \epsilon/2$, where $ \epsilon_D^{(0)}$ is the initial suboptimality.
\end{theorem}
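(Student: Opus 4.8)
The plan is to reduce everything to a single expected one-step inequality relating the decrease in dual suboptimality to the duality gap $G$, in the spirit of Lemma~\ref{lemma:dualityGapAndDualSuboptimality} but localized to a single coordinate update, and then to run a two-phase argument on the resulting recursion: a geometric phase producing the logarithmic term, followed by an $\mathcal{O}(1/t)$ phase. Throughout I write $\epsilon_D^{(t)} := \Exp[\bD(\vc\alphav t) - \bD(\alphav^\star)]$.

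First I would fix an iterate $\alphav := \vc{\alphav}{t-1}$, set $\wv := \wv(\alphav) = \nabla f(A\alphav)$, and pick $\uv \in \partial g^*(-A^\top\wv)$ as in \eqref{asdfafsafsa}. Since step~4 of Algorithm~\ref{alg:SDCA} performs \emph{exact} coordinate minimization, $\bD(\vc\alphav t) \le \bD(\alphav + \ev_i\Delta\alpha_i)$ for the trial step $\Delta\alpha_i = s(u_i - \alpha_i)$, for any $s\in[0,1]$ (the point $\alpha_i + s(u_i-\alpha_i)$ stays in $\dom g_i$ since $\dom g_i$ is an interval containing both $\alpha_i$ and $u_i$). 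Expanding with the $1/\beta$-smoothness of $f$, which bounds the increase of $f$ by the quadratic $\tfrac{1}{2\beta}\|A_{:i}\|^2(\Delta\alpha_i)^2 \le \tfrac{R^2}{2\beta}(\Delta\alpha_i)^2$, and with convexity of $g_i$, which gives $g_i(\alpha_i + s(u_i-\alpha_i)) - g_i(\alpha_i) \le s(g_i(u_i) - g_i(\alpha_i))$, then averaging over the uniformly random coordinate, the linear and $g$ terms collapse via the Fenchel--Young equality at $\uv$ (using separability and $\wv = \nabla f(A\alphav)$) into exactly $-\tfrac{s}{n}G(\alphav)$, while the quadratic term is controlled by $L$-bounded support: since $g_i^*$ is $L$-Lipschitz, Lemma~\ref{lem:dualLipschitz} puts both $u_i,\alpha_i\in[-L,L]$, so $(u_i-\alpha_i)^2\le 4L^2$. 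Using $G(\alphav)\ge \bD(\alphav)-\bD(\alphav^\star)$ this gives the master recursion
\[
\epsilon_D^{(t)} \;\le\; \Big(1 - \tfrac{s}{n}\Big)\,\epsilon_D^{(t-1)} \;+\; \tfrac{2 s^2 R^2 L^2}{\beta}.
\]

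Next I would run two phases on this recursion with $C_1 := \tfrac{2R^2L^2}{\beta}$. In Phase~1, taking $s=1$ contracts geometrically, $\epsilon_D^{(t)}\le(1-\tfrac1n)^t\epsilon_D^{(0)} + nC_1$, so after $\max\{0,\,n\log\tfrac{\epsilon_D^{(0)}\beta}{2L^2R^2 n}\}$ iterations the suboptimality falls to the threshold $\mathcal{O}(nR^2L^2/\beta)$, which is the source of the logarithmic term $t_1$. In Phase~2, once below the threshold I optimize $s$ pointwise, $s = \tfrac{\epsilon_D^{(t)}}{2nC_1}\in[0,1]$, turning the recursion into $a_{t+1}\le a_t - \tfrac{a_t^2}{4n^2 C_1}$; inverting this standard quadratic recursion yields $\epsilon_D^{(t)}\le \tfrac{8n^2R^2L^2}{\beta(t-t_1)}$, so $\tfrac{16n^2L^2R^2}{\beta\epsilon}$ further steps drive the suboptimality to $\epsilon/2$, which is exactly the second claim for $T_0$.

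For the averaged-iterate gap, I would instead rearrange the master inequality as $\tfrac{s}{n}\Exp[G(\vc\alphav t)]\le \epsilon_D^{(t)}-\epsilon_D^{(t+1)} + s^2 C_1$, sum from $T_0$ to $T-1$ so the suboptimality telescopes, and divide by $T-T_0$. Passing from the average of gaps to the gap of the average $\bar\alphav$ by Jensen, and choosing $s = \tfrac{\epsilon\beta}{4nR^2L^2}$ to equalize the two resulting terms (each $\le \epsilon/2$, using $\epsilon_D^{(T_0)}\le\epsilon/2$ from Phase~2), leaves the requirement $T-T_0 \ge \tfrac{4n^2R^2L^2}{\beta\epsilon}$, which together with the Phase~2 value of $T_0$ gives the stated total count, the additive $n$ absorbing the constant slack. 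The main obstacle I anticipate is twofold: the careful constant bookkeeping across the two phases and the averaging step (tracking the threshold out of Phase~1 and the balancing of $s$), and justifying the Jensen step for $G(\bar\alphav)$ — since $G$ is not convex in $\alphav$ for general smooth $f$, one must rely on the affine structure of $\wv(\alphav)=\nabla f(A\alphav)$ valid for the quadratic $f$ of the SVM application (equivalently, convexity of $\bP$ together with $\bar\wv = \wv(\bar\alphav)$).
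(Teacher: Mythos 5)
Your proposal is correct and follows essentially the same route as the paper: your master recursion is exactly the paper's Lemma~\ref{lemma:basic} specialized to $\mu=0$, combined with the bound $(u_i-\alpha_i)^2\le 4L^2$ of Lemma~\ref{lemma:F} (same Lipschitz/bounded-support duality); your Phase~1 with $s=1$ reproduces the paper's threshold and its $t_0$; your Phase~2 differs only cosmetically (adaptive $s\propto\epsilon_D^{(t)}$ plus inversion of $1/a_t$, versus the paper's deterministic schedule $s=\tfrac{2n}{2n+t-1-t_0}$ with an induction) and yields the same $\mathcal{O}(n^2L^2R^2/(\beta t))$ decay and constants; and the telescoping argument for the averaged iterate is also the paper's, with your $s=\epsilon\beta/(4nR^2L^2)$ playing the role of the paper's $s=n/(T-T_0)$.

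The one point where you deviate is the Jensen step, and there your resolution is needlessly restrictive. The paper never passes through $\wv(\bar\alphav)$: it sets $\bar\wv:=\tfrac{1}{T-T_0}\sum_t \vc{\wv}{t-1}$ (the average of the dual iterates the algorithm already maintains in step~6) and applies Jensen to the convex functions $\bP$ and $\bD$ \emph{separately}, so the window average of $\bP(\vc\wv{t-1})+\bD(\vc\alphav{t-1})$ dominates $\bP(\bar\wv)+\bD(\bar\alphav)$. This is the general two-variable gap $G(\bar\wv,\bar\alphav)=\bP(\bar\wv)-(-\bD(\bar\alphav))$ from Section~\ref{sec:primaldual}, which is a valid certificate (it upper bounds both primal and dual suboptimality) for \emph{arbitrary} $1/\beta$-smooth $f$ — no affine structure of $\nabla f$, hence no restriction to the quadratic/SVM case, is required. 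You are right that if one insists on the single-variable gap \eqref{eq:gap} evaluated at $\bar\alphav$, then $\wv(\bar\alphav)=\bar\wv$ forces $\nabla f$ to be affine; the paper's displayed identity $\Exp[G(\bar\alphav)]=\Exp[\bP(\bar\wv)+\bD(\bar\alphav)]$ indeed glosses over exactly this, but its proof (and the theorem's conclusion, read with the pair gap) goes through in full generality, so you should adopt that formulation rather than restrict $f$.
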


\begin{remark}
Theorem \ref{thm:SDCAthm2} shows that for Lipschitz $g_i^*$, Algorithm \ref{alg:SDCA} has $ \mathcal{O}(\epsilon^{-1})$ convergence in the suboptimality and $ \mathcal{O}(\epsilon^{-1})$ convergence in $G(\bar\alphav)$. 
Comparing this result to  Theorem \ref{thm:convBoundSup} which suggests $ \mathcal{O}(\epsilon^{-2})$ convergence in $G(\alphav)$ for $ \mathcal{O}(\epsilon^{-1})$ convergent algorithms, we see that  averaging the parameter vector crucially improves convergence in the case of non-smooth $f$.
\end{remark}

\begin{remark}\label{rem:recoverSDCA}
Note that our Algorithm~\ref{alg:SDCA} recovers the widely used SDCA setting \citep{ShalevShwartz:2013wl}  as a special case, when we choose $f^*:=\frac{\lambda}{2} \|.\|_2^2$ in \eqref{eq:Psep}.
Furthermore, their convergence results for SDCA are consistent with our results and can be recovered as a special case of our analysis. See Corollaries \ref{cor:Thm5SDCA},  \ref{cor:Thm2SDCA}, \ref{cor:Lemma19SDCA} in Appendix \ref{sec:CDproofs}.
\end{remark}

\subsection{Application to $L_1$ and Elastic Net Regularized Problems}

We now apply Algorithm \ref{alg:SDCA} to the $L_1$-regularized problems, as well as Elastic Net regularized problems. We state improved primal-dual convergence rates which are more tailored to the coordinate-wise setting.

{\bf Coordinate Descent on $L_1$-Regularized Problems.}
In contrast to the general analysis of $L_1$-regularized problems in Section \ref{sec:Lasso}, we can now exploit separability of $g(\alphav):=\lambda \|\alphav\|_1$ and apply the Lipschitzing trick coordinate-wise, choosing $\bB:=\left\{\alpha : |\alpha|\leq B\right\} \subset\R$. 
This results in the following stronger convergence results:
\begin{corollary}\label{cor:l1CD}
We can use the Lipschitzing trick together with Theorem \ref{thm:SDCAthm2} to derive a primal-dual convergence result for the Lasso problem \eqref{eq:LassoObj}. We find that the $g_i^*$ are $B$-Lipschitz after applying the Lipschitzing trick to every~$g_i$, and hence the total number of iterations needed on the Lasso problem to get a duality gap of  $\Exp[G(\bar \alphav)] \leq \epsilon$ is 
\begin{align*}
T&\geq \max\left\{0,n\log\tfrac{ \beta \epsilon_D^{(0)}}{2 B^2R^{2} n}\right\}+  n +\tfrac{20 n^2 B^2  R^2}{\beta \epsilon }
\end{align*}
\end{corollary}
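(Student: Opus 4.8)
The plan is to reduce the statement to a direct application of Theorem~\ref{thm:SDCAthm2}, the only real work being the verification of its single hypothesis (Lipschitzness of each $g_i^*$) for the coordinate-wise Lipschitzed Lasso objective. First I would note that the regularizer $g(\alphav) = \lambda\norm{\alphav}_1 = \sum_{i=1}^n \lambda|\alpha_i|$ is separable, so the Lasso objective~\eqref{eq:LassoObj} is exactly of the separable form~\eqref{eq:Dsep} with $g_i(\alpha_i) := \lambda|\alpha_i|$. This places us squarely in the coordinate-descent setting of Algorithm~\ref{alg:SDCA}, so that the only ingredient missing to invoke Theorem~\ref{thm:SDCAthm2} is a Lipschitz constant for each $g_i^*$.

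Second, I would apply the Lipschitzing trick~\eqref{eq:Lip} separately in each coordinate, replacing $g_i$ by $\bar g_i$ supported on $\bB := \{\alpha\in\R : |\alpha|\le B\}$. By construction $\bar g_i$ has $B$-bounded support in the sense of Definition~\ref{def:lbounded}, so Lemma~\ref{lem:dualLipschitz} immediately yields that $\bar g_i^*$ is $B$-Lipschitz w.r.t.\ the dual norm. This identifies the per-coordinate Lipschitz constant as $L := B$.

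Third, I would argue that this modification leaves the iterate sequence of Algorithm~\ref{alg:SDCA} unchanged, so that the guarantee for the Lipschitzed problem transfers verbatim to the original Lasso. This is where Theorem~\ref{thm:convGeneral} enters: it suffices to check that every iterate lies in the per-coordinate set $\bB$. Since the algorithm is monotone and initialized at $\0$, the discussion following~\eqref{eq:NormReg} gives $\norm{\vc{\alphav}{t}}_1 \le \tfrac1\lambda f(\0)$ for all $t$, whence $|\alpha_i^{(t)}| \le \norm{\vc{\alphav}{t}}_1 \le B$ for the safe choice $B := \tfrac1\lambda f(\0)$. Thus each coordinate stays inside $[-B,B]$ and the Lipschitzed and original runs coincide, so $g_i^* = \bar g_i^*$ along the trajectory.

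Finally, having established both that $g_i^*$ is $B$-Lipschitz and that the iterates are unaffected, I would substitute $L = B$ into the iteration bound of Theorem~\ref{thm:SDCAthm2}, namely $T \ge \max\{0,\, n\log\tfrac{\epsilon_D^{(0)}\beta}{2L^2R^2 n}\} + n + \tfrac{20 n^2 L^2 R^2}{\beta\epsilon}$, to obtain exactly the claimed bound guaranteeing $\Exp[G(\bar\alphav)]\le\epsilon$. The one point requiring genuine care---and hence the main obstacle---is the iterate-containment step: because the Lipschitzing is applied coordinate-wise rather than to the full $L_1$ ball, I must confirm that the looser per-coordinate constraint $|\alpha_i|\le B$ is implied by the available $L_1$-norm bound, which it is since $|\alpha_i|\le\norm{\alphav}_1$. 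Everything else is a routine substitution of $L=B$.
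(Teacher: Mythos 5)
Your proposal is correct and follows essentially the same route as the paper: exploit separability of $\lambda\|\alphav\|_1$, apply the Lipschitzing trick coordinate-wise on $\bB=\{\alpha:|\alpha|\le B\}$, conclude via Lemma~\ref{lem:dualLipschitz} that each $\bar g_i^*$ is $B$-Lipschitz, and substitute $L:=B$ into Theorem~\ref{thm:SDCAthm2}. Your explicit check that the iterates are unchanged (monotonicity giving $|\alpha_i^{(t)}|\le\|\vc{\alphav}{t}\|_1\le \tfrac1\lambda f(\0)\le B$, so the per-coordinate boxes contain the trajectory) is exactly the point the paper handles through Theorem~\ref{thm:convGeneral} and its remark on the choice $B:=\tfrac1\lambda f(\0)$, so nothing is missing.
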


\begin{remark} We specify the different parameters of Corollary \ref{cor:l1CD} for least squares loss as well as the logistic regression loss (defined in Table \ref{tbl:MLProblems}). Both are $1$-smooth ($f^*$ is $1$-strongly convex) and we have $\beta:=1$. The initial suboptimality $\vc {\epsilon_D} 0 $ can be upper bounded by  $ \tfrac{1}{2}\|\bv\|_2^2$ for the former  and by $m \log(2)$ for the latter.  
For $B$ we choose $\frac1\lambda f(\0)$.
\end{remark}

{\bf Coordinate Descent on Elastic Net Regularized Problems.}
In Section \ref{sec:ElasticNet} we discussed how the Elastic Net problem in \eqref{eq:elasticNet} can be mapped to our setup. In the first scenario (row~2, Table \ref{tbl:MLProblems}) we note that the resulting problem is partially separable and an instance of \eqref{eq:Dsep}.
\\
In the second scenario we map  \eqref{eq:elasticNet} to $\eqref{eq:B}$ (row~3, Table \ref{tbl:MLProblems}). Assuming that the loss function $\ell$ is separable, this problem is an instance of \eqref{eq:Psep}.  The convergence guarantees when applying Algorithm~\ref{alg:SDCA} on the primal or on the dual are summarized in Corollary~\ref{cor:elasticCD}.

\begin{corollary}\label{cor:elasticCD}
Consider Algorithm \ref{alg:SDCA} for an Elastic Net regularized problem \eqref{eq:elasticNet}, running on either the primal or the dual formulation.
Then, to obtain a duality gap of  $\Exp[G( \vc{\alphav}{T})] \leq \epsilon$, it suffices to have a total of
\begin{align*}
T&\geq  (n+\tfrac{n R^2}{\lambda \eta \zeta} ) \log  ( [n+\tfrac{n R^2}{ \lambda \eta \zeta} ] \tfrac{\epsilon_D^{(0)}}{\epsilon} )
\end{align*}
 iterations for coordinate descent on the primal \eqref{eq:elasticNet} and
\begin{align*}
T&\geq  (d+\tfrac{d P^2}{\lambda \eta \zeta} ) \log (  [d+\tfrac{d P^2}{ \lambda \eta \zeta} ] \tfrac{\epsilon_D^{(0)}}{\epsilon} )
\end{align*}
for coordinate descent on the dual of \eqref{eq:elasticNet}.\vspace{-1mm}
\end{corollary}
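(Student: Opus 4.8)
The plan is to reduce both halves of the statement to a single invocation of Theorem~\ref{thm:SDCAthm5}, which already delivers exactly the linear primal-dual rate for Algorithm~\ref{alg:SDCA} whenever the separable part is strongly convex and the composed part is smooth. All that remains is to identify, in each of the two scenarios, the correct values of the strong-convexity constant $\mu$, the smoothness constant $\beta$, the column-norm bound, and the number of coordinates, and then substitute them into the iteration count of that theorem.

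First I would treat the primal formulation. I write \eqref{eq:elasticNet} in the separable form \eqref{eq:Dsep} with $f=\ell$ and $g(\alphav)=\sum_i g_i(\alpha_i)$, where $g_i(\alpha_i)=\lambda\big(\tfrac{\eta}{2}\alpha_i^2+(1-\eta)|\alpha_i|\big)$. The one elementary observation needed here is that each $g_i$ is $\lambda\eta$-strongly convex: subtracting $\tfrac{\lambda\eta}{2}\alpha_i^2$ leaves $\lambda(1-\eta)|\alpha_i|$, which is convex, so the strong convexity comes entirely from the $\Ltwo$ term and the $\Lone$ term is harmless. Assuming $\ell$ is $1/\zeta$-smooth, I then apply Theorem~\ref{thm:SDCAthm5} with $\beta=\zeta$, $\mu=\lambda\eta$, and the column bound $\|A_{:i}\|\le R$; substituting these into its iteration bound reproduces the stated primal count verbatim.

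Second, for the dual formulation, the idea is to exploit the symmetry of \eqref{eq:A}--\eqref{eq:B}. I map \eqref{eq:elasticNet} to \eqref{eq:B}, so that $f^*(\wv)=\lambda\big(\tfrac{\eta}{2}\|\wv\|_2^2+(1-\eta)\|\wv\|_1\big)$ and $g^*(-A^\top\wv)=\ell(-A^\top\wv)$, and then read this dual objective itself as an instance of \eqref{eq:Dsep} in the variable $\wv\in\R^d$: the smooth composed part $f(A\,\cdot)$ is now played by $\ell(-A^\top\cdot)$ (still $1/\zeta$-smooth, with data matrix $-A^\top$), while the separable part $\sum_i g_i$ is played by the separable and again $\lambda\eta$-strongly convex function $f^*$. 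Running Algorithm~\ref{alg:SDCA} on this reformulation is precisely coordinate descent on the dual, so Theorem~\ref{thm:SDCAthm5} applies once the substitutions induced by the transpose are tracked.

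The only place requiring care---and the main, though minor, obstacle---is the bookkeeping of this role reversal: with data matrix $-A^\top$, its columns are the rows of $A$, so the relevant column-norm bound becomes $\|A_{j:}\|\le P$ rather than $R$, the number of coordinates becomes $d$ rather than $n$, whereas $\mu=\lambda\eta$ and $\beta=\zeta$ are unchanged. Feeding $n\mapsto d$ and $R\mapsto P$ into the bound of Theorem~\ref{thm:SDCAthm5} yields the stated dual count. No step is genuinely difficult, since Theorem~\ref{thm:SDCAthm5} carries all of the convergence analysis; the proof is in essence a verification of the two parameter identifications together with the elementary computation $\mu=\lambda\eta$.
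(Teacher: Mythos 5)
Your primal half is correct and is precisely the paper's argument: map \eqref{eq:elasticNet} to \eqref{eq:Dsep} with $f=\ell$ (so $\beta=\zeta$) and $g_i(\alpha_i)=\lambda\big(\tfrac{\eta}{2}\alpha_i^2+(1-\eta)|\alpha_i|\big)$ (so $\mu=\lambda\eta$), then invoke Theorem~\ref{thm:SDCAthm5} with $n$ coordinates and column bound $R$.

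The dual half has a genuine gap: the problem you run coordinate descent on is not the dual of \eqref{eq:elasticNet}. Mapping \eqref{eq:elasticNet} to \eqref{eq:B} is a pure relabeling ($f^*$ is the penalty, $g^*$ is the loss); ``the dual'' is then the \emph{associated} problem \eqref{eq:A}/\eqref{eq:Dsep}, which involves the conjugate functions: its separable coordinates are $g_j=\ell_j^*$, one per loss component (this is exactly why the paper assumes $\ell$ separable in this scenario), each $\zeta$-strongly convex because $\ell_j$ is $1/\zeta$-smooth, and its smooth composed part is the conjugate of the Elastic Net penalty, which is $1/(\lambda\eta)$-smooth by Lemmas~\ref{lem:dualSmooth} and~\ref{lem:elasticnetconjugate}, composed with a matrix whose $d$ columns are (negated) rows of $A$, hence bounded by $P$. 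You instead keep the unconjugated functions, taking the penalty as the separable part ($\mu=\lambda\eta$) and $\ell$ as the smooth part ($\beta=\zeta$). But re-reading the \eqref{eq:B} objective itself as an instance of \eqref{eq:Dsep} with consistent bookkeeping simply undoes the relabeling and recovers \eqref{eq:elasticNet}: the formulation matrix must be $-A^\top$ so that the composed part is $\ell\big((-A^\top)^\top\wv\big)=\ell(A\wv)$ with $\wv\in\R^n$, and coordinate descent on that objective is your primal scenario again, with $n$ coordinates and column bound $R$, not $d$ and $P$. (Your version with variable $\wv\in\R^d$ and smooth part $\ell(-A^\top\wv)$ is not even dimensionally consistent with \eqref{eq:elasticNet}, where $\ell$ takes arguments in $\R^d$ while $-A^\top\wv\in\R^n$.) Your final formula nevertheless matches the corollary, but only by coincidence: Theorem~\ref{thm:SDCAthm5} depends on $\mu$ and $\beta$ solely through the product $\mu\beta$, and your assignment $(\mu,\beta)=(\lambda\eta,\zeta)$ has the same product as the correct one, $(\mu,\beta)=(\zeta,\lambda\eta)$. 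The repair is short: write down the Fenchel dual explicitly, $\min_{\vv\in\R^d}\sum_{j=1}^d\ell_j^*(v_j)+\big(\lambda(\tfrac{\eta}{2}\|\cdot\|_2^2+(1-\eta)\|\cdot\|_1)\big)^*(-A^\top\vv)$, and apply Theorem~\ref{thm:SDCAthm5} to it with $g_j=\ell_j^*$, $f$ the conjugate penalty, $d$ coordinates and column bound $P$.
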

According to Corollary \ref{cor:elasticCD}, the convergence rate is comparable for both scenarios. The constants however depend on the data matrix $A$ -- for $d\gg n$ the primal version is beneficial, whereas for $n \gg d$ the dual version is leading.

\section{Numerical Experiments}

Here we illustrate the usefulness of our framework by showcasing it for two important applications, each one showing two algorithm examples for optimizing~\eqref{eq:A}.

{\bf Lasso.}
The top row of Figure \ref{fig:SVM} shows the primal-dual convergence of Algorithm \ref{alg:SDCA} (CD) as well as the accelerated variant of CD (APPROX, \citet{Fercoq:2015kd}), both applied to the Lasso problem~\eqref{eq:A}.
We have applies the Lipschitzing trick as described in Section \ref{lipschTrick}.
This makes sure that $\wv(\alphav)$ will be always feasible for the modified dual~\eqref{eq:B}, 
and hence the duality gap can be evaluated. %

{\bf SVM.} It was shown in \cite{ShalevShwartz:2013wl}
that if CD (SDCA) is run on the dual SVM formulation, and we consider an "average" solution (over last few iterates), then the duality gap evaluated at averaged iterates has a sub-linear convergence rate $\mathcal{O}(1/t)$.
As a consequence of Theorem \ref{thm:convBoundSup}, we have that the APPROX algorithm \citep{Fercoq:2015kd} will provide the same sub-linear convergence in duality gap, but holding for the iterates themselves, not only for an average. %
On the bottom row of Figure \ref{fig:SVM} we compare CD with its accelerated variant on two benchmark datasets.\footnote{Available from  \href{https://www.csie.ntu.edu.tw/~cjlin/libsvmtools/datasets/}{csie.ntu.edu.tw/$\sim$cjlin/libsvmtools/datasets}.} We have chosen $\lambda=1/n$.

\begin{figure}[t]
\centering
 \includegraphics[scale=0.17]{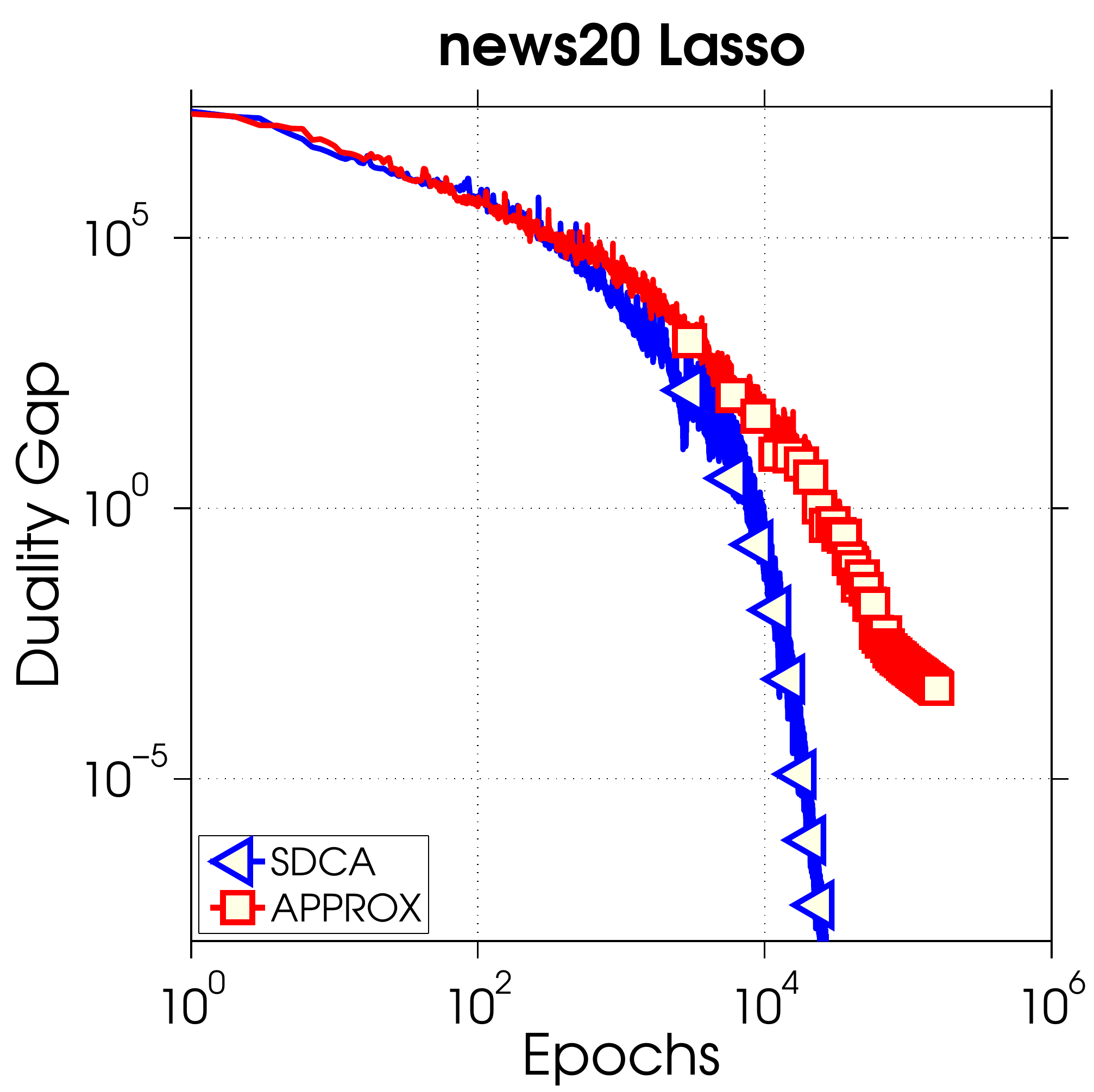} 
\includegraphics[scale=0.17]{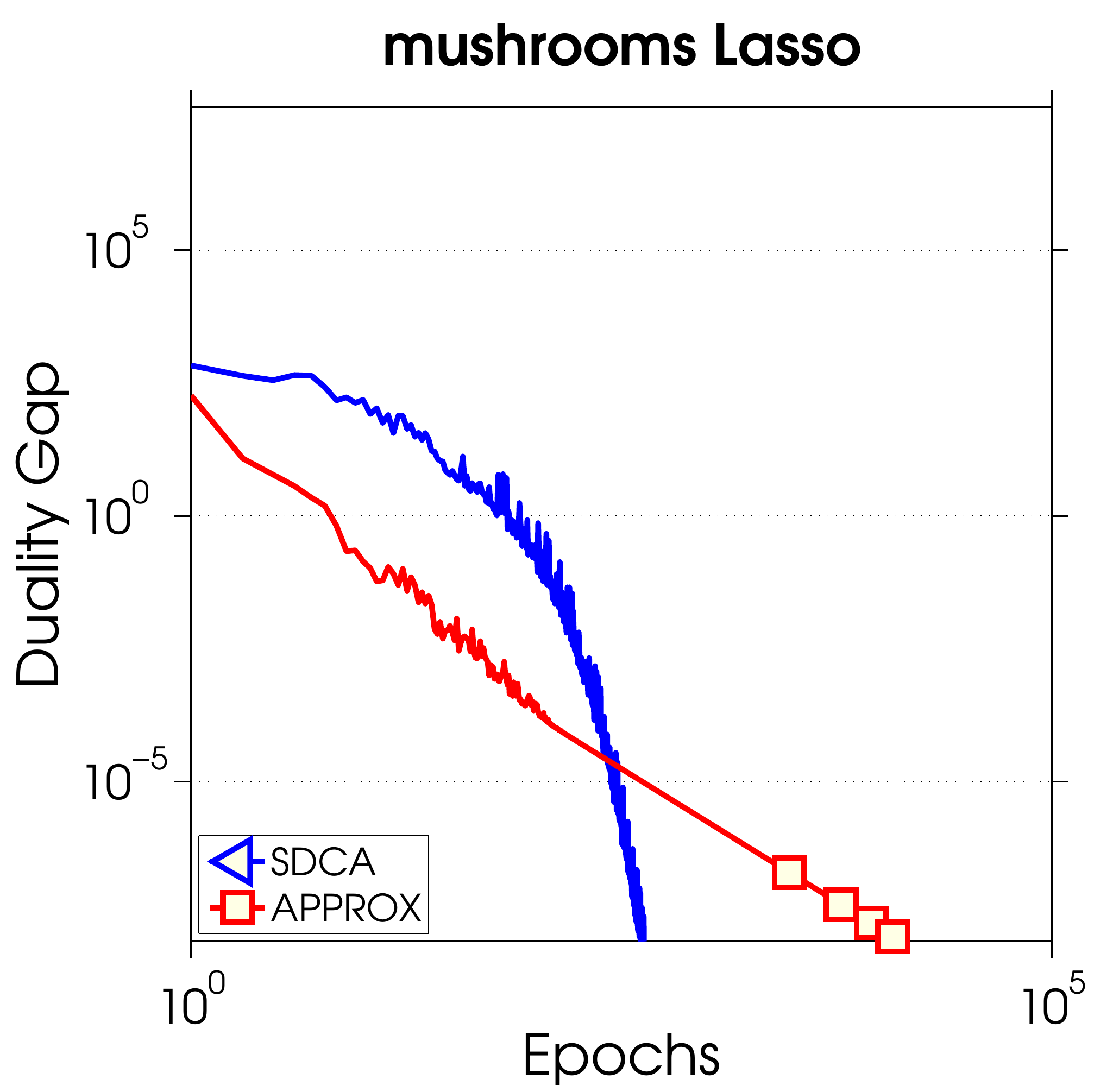}

\includegraphics[scale=0.17]{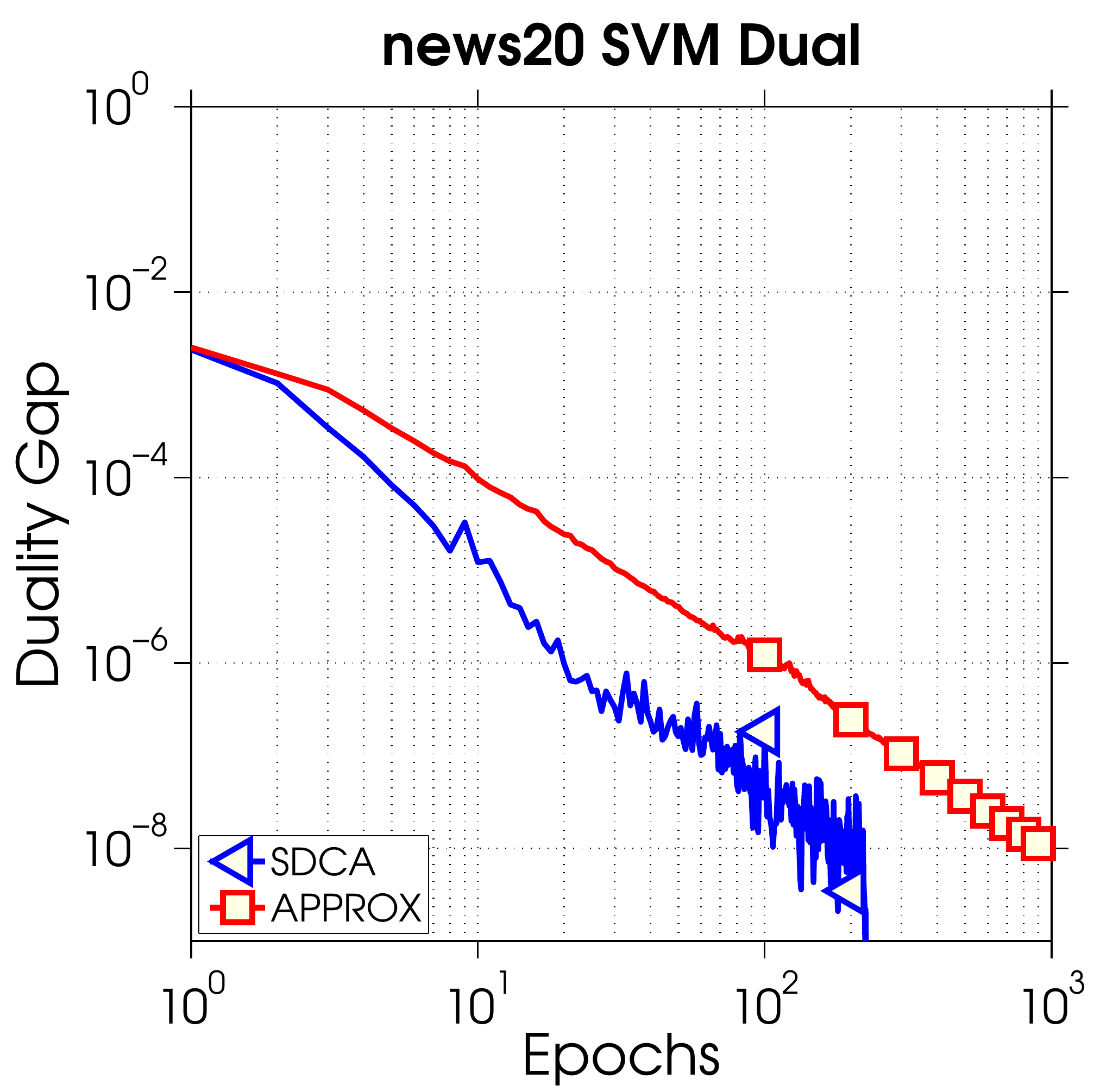}
\includegraphics[scale=0.17]{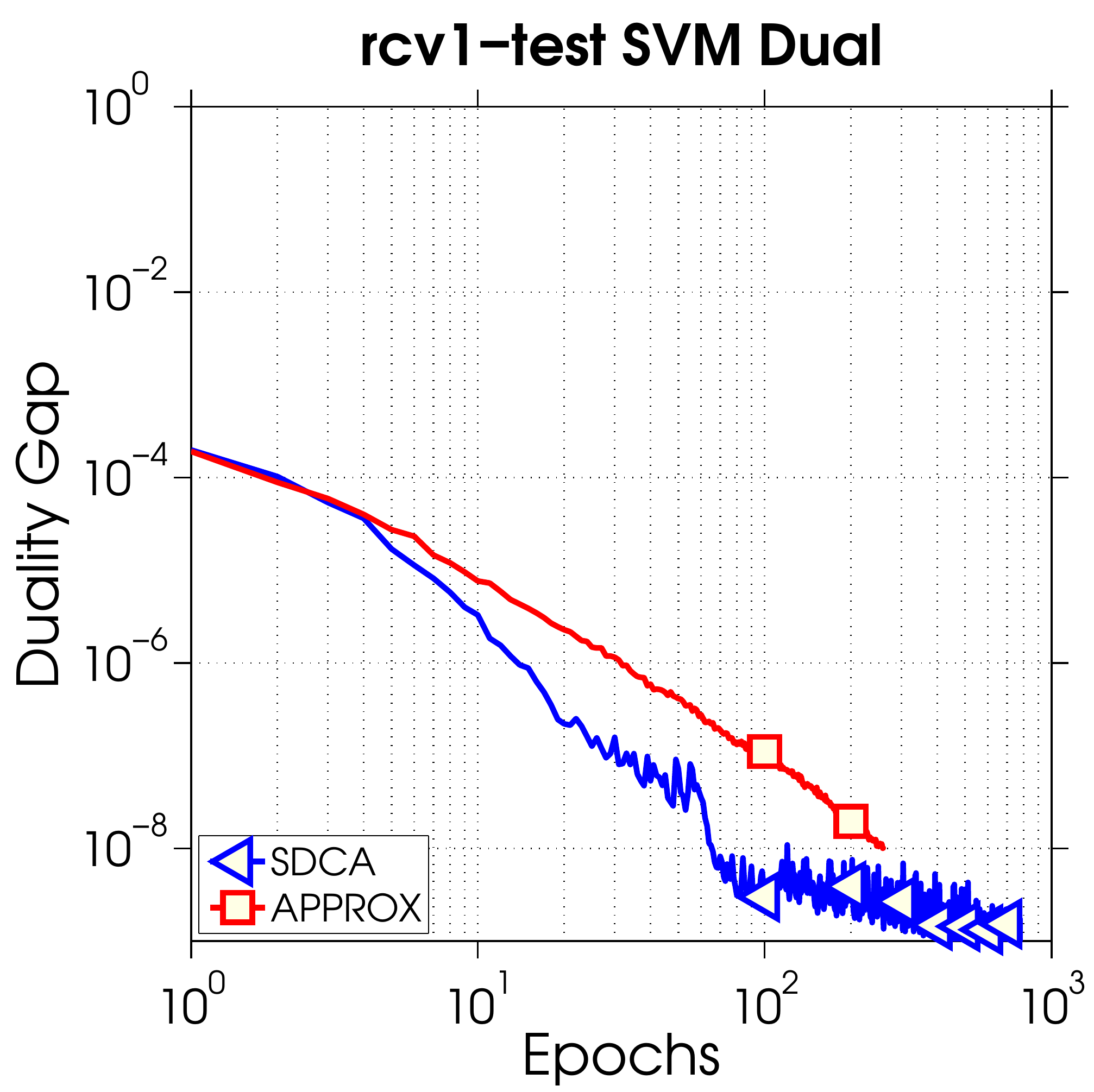}\vspace{-1mm}
\caption{Comparison of CD with its accelerated variant -- APPROX on Lasso and SVM problems.}
\label{fig:SVM} %
\end{figure}

\vspace{-1mm}
\section{Conclusions}
We have presented a general framework allowing to equip existing optimization algorithms with primal-dual certificates.
For future research, it will be interesting to study more applications and algorithms fitting into the studied problem structure, including more cases of structured sparsity, and generalizations to matrix problems. 

\paragraph{Acknowledgments.}
We thank Francis Bach, Michael P. Friedlander, Ching-pei Lee, Dmytro Perekrestenko, Aaditya Ramdas, Virginia Smith and an anonymous reviewer for fruitful discussions.

 \twocolumn
{\small
\bibliographystyle{icml2016}
\bibliography{bibliography}
}
 \onecolumn

\clearpage
\appendix
\setlength{\belowdisplayskip}{5pt} \setlength{\belowdisplayshortskip}{3pt}
\setlength{\abovedisplayskip}{5pt} \setlength{\abovedisplayshortskip}{3pt}
\part*{Appendix}

\section{Basic Definitions}

\begin{definition}[$L$-Lipschitz Continuity]
A function $h: \R^d \to \R$ is \emph{$L$-Lipschitz continuous} w.r.t. a norm $\|.\|$ if $\forall \av,\bv \in \R^d$, we have\vspace{-2mm}
\begin{equation}
 | h(\av) - h(\bv) | \leq L \| \av-\bv \| \, .
\end{equation}
\end{definition}

\begin{repdefinition}{def:lbounded}[$B$-Bounded Support]
A function $h: \R^d \to \R$ has \emph{$B$-bounded support} if its effective domain is bounded by $B$ w.r.t. a norm $\|.\|$, i.e.,
\begin{equation}
  h(\uv) < + \infty  \ \Rightarrow \  \|\uv\| \le L \, .
\end{equation}
\end{repdefinition}

\begin{definition}\label{def:levelset}
The $\delta$-level set of a function $h:\R^d\rightarrow \R$ is defined as $\bL(\delta):=\left\{\xv : h(\xv)\leq \delta \right\}$.
\end{definition}

\begin{definition}[$L$-Smoothness]
A function $h:\R^d\rightarrow\R$ is called \emph{$L$-smooth} w.r.t. a norm $\|.\|$, for $L>0$, if
it is differentiable and its derivative is $L$-Lipschitz continuous w.r.t. $\|.\|$,
or equivalently
\begin{equation}
h(\uv) \leq h(\wv) + \langle \nabla h(\wv), \uv-\wv \rangle + \frac{L}{2} \| \uv-\wv \|^2  \qquad\forall \uv,\wv\in\R^d \, .
\label{eq:smooth}
\end{equation}
\end{definition}

\begin{definition}[$\mu$-Strong Convexity]
A function $h:\R^d\rightarrow\R$ is called \emph{$\mu$-strongly convex} w.r.t. a norm $\|.\|$, for $\mu\ge0$, if
\begin{equation}
h(\uv) \geq h(\wv) + \langle \nabla h(\wv), \uv-\wv \rangle + \frac{\mu}{2} \| \uv-\wv \|^2  \qquad\forall \uv,\wv\in\R^d \, .
\label{eq:strongconv}
\end{equation}
And analogously if the same holds for all subgradients, in the case of a general closed convex function $h$.
\end{definition}

\section{Convex Conjugates}\label{sec:conjugates}
We recall some basic properties of convex conjugates, which we use in the paper.

The convex conjugate of a function $f: \R^d\rightarrow \R$ is defined as 
\begin{equation}
f^*(\vv) := \sup_{\uv\in\R^d} \vv^\top \uv - f(\uv) \, .
\end{equation}
Some useful properties, see \citep[Section 3.3.2]{Boyd:2004uz}:
\begin{itemize}
\item Double conjugate: \hspace{2em}
$(f^*)^* = f$ if $f$ is closed and convex.
\item Value Scaling:  (for $\alpha>0$) \hspace{2em}
$
f(\vv) = \alpha g(\vv) 
\qquad\Rightarrow\qquad
f^*(\wv) = \alpha g^*(\wv/\alpha)    \, .
$
\item Argument Scaling:  (for $\alpha\ne0$) \hspace{2em}
$
f(\vv) = g(\alpha \vv) 
\qquad\Rightarrow\qquad
f^*(\wv) = g^*(\wv/\alpha) \, .
$
\item Conjugate of a separable sum: \hspace{2em}
$
f(\vv)=\sum_i \phi_i(v_i)
\qquad\Rightarrow\qquad
f^*( \wv ) = \sum_i \phi_i^* ( w_i ) \, .
$
\end{itemize}

\begin{replemma}{lem:dualLipschitz}[{Duality between Lipschitzness and L-Bounded Support. A generalization of~\citep[Corollary 13.3.3]{Rockafellar:1997ww}}]
Given a proper convex function $g$, it holds that
$g$ has $L$-bounded support w.r.t. the norm $\|.\|$ if and only if 
$g^*$ is $L$-Lipschitz w.r.t. the dual norm $\|.\|_*$.
\end{replemma}

\begin{lemma}[{Duality between Smoothness and Strong Convexity, \citep[Theorem 6]{Kakade:2009wh}}]
\label{lem:dualSmooth}
Given a closed convex function~$f$, it holds that
$f$ is $\mu$-strongly convex w.r.t. the norm $\|.\|$
if and only if
$f^*$ is $(1/{\mu})$-smooth w.r.t. the dual norm~$\|.\|_*$.
\end{lemma}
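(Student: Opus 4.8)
The plan is to prove both implications directly from the Fenchel--Young relation together with a one-variable optimization, exploiting the double-conjugate identity $(f^*)^*=f$ (valid since $f$ is closed convex) so that the two directions become mirror images of the same calculation. The only ingredients needed are: the Fenchel--Young equality $f(w)+f^*(\theta)=\langle w,\theta\rangle \iff \theta\in\partial f(w) \iff w\in\partial f^*(\theta)$; the quadratic bounds defining $\mu$-strong convexity \eqref{eq:strongconv} and $L$-smoothness \eqref{eq:smooth}; and the fact that the norm dual to $\|\cdot\|_*$ is $\|\cdot\|$ itself, so that $\sup_{d\ne0}\langle z,d\rangle/\|d\|_*=\|z\|$.

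For the direction ``$f$ strongly convex $\Rightarrow$ $f^*$ smooth'', I would first observe that $\mu$-strong convexity makes $\langle\theta,\cdot\rangle-f(\cdot)$ strongly concave and coercive, so its supremum is attained at a unique point $w(\theta)$; this renders $f^*$ finite on all of $\R^d$ and differentiable with $\nabla f^*(\theta)=w(\theta)$ and $\theta\in\partial f(w(\theta))$. Writing $w_i:=\nabla f^*(\theta_i)$ and expanding $f^*(\theta_2)-f^*(\theta_1)-\langle\nabla f^*(\theta_1),\theta_2-\theta_1\rangle$ via Fenchel--Young, the cross terms collapse to $\langle\theta_2-\theta_1,w_2-w_1\rangle-(f(w_2)-f(w_1))$. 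Bounding $f(w_2)-f(w_1)$ below by the strong-convexity inequality at $w_1$ and then using $\langle\theta_2-\theta_1,w_2-w_1\rangle\le\|\theta_2-\theta_1\|_*\|w_2-w_1\|$ leaves a scalar expression of the form $ba-\tfrac\mu2 a^2$ in $a:=\|w_2-w_1\|$; maximizing over $a\ge0$ yields exactly $\tfrac1{2\mu}\|\theta_2-\theta_1\|_*^2$, which is the smoothness upper bound \eqref{eq:smooth} for $f^*$.

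For the converse ``$f^*$ smooth $\Rightarrow$ $f$ strongly convex'', I would set $h:=f^*$ (so $h$ is $\tfrac1\mu$-smooth and, by closedness, $f=h^*$) and prove directly that $h^*$ is $\mu$-strongly convex. Fix $w_1$ with $\theta_1\in\partial f(w_1)$, equivalently $w_1=\nabla h(\theta_1)$. For arbitrary $w_2$, lower-bound $f(w_2)=h^*(w_2)\ge\langle w_2,\theta_1+td\rangle-h(\theta_1+td)$ for a free scalar $t$ and direction $d$, and apply the smoothness upper bound to $h(\theta_1+td)$. After the cancellations the bound becomes $t\langle w_2-w_1,d\rangle-\tfrac{t^2}{2\mu}\|d\|_*^2$; optimizing first over $t$ and then over the direction $d$ (where $\sup_{d\ne0}\langle w_2-w_1,d\rangle/\|d\|_*=\|w_2-w_1\|$) produces the term $\tfrac\mu2\|w_2-w_1\|^2$, i.e.\ the strong-convexity inequality \eqref{eq:strongconv} for $f$. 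I expect the main obstacle to be the general (non-Euclidean) norm bookkeeping rather than any deep difficulty: one must justify attainment and uniqueness of the conjugate's maximizer and the differentiability of $f^*$ in the first direction, and must invoke the dual-of-dual-norm identity correctly when optimizing over $d$ in the second, since a single naive Cauchy--Schwarz step points the wrong way and it is precisely the two-stage maximization that makes the constants match.
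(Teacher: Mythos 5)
Your proposal is correct, but note that the paper does not actually prove this lemma: it is imported by citation from Kakade et al.\ (Theorem~6), so there is no in-paper proof to compare against. Your argument is the standard conjugate-duality proof and matches the cited source in spirit: Fenchel--Young equality to identify $\nabla f^*(\theta_i)=w_i$ with $\theta_i\in\partial f(w_i)$, then a two-stage scalar maximization to convert the $\tfrac{\mu}{2}\|\cdot\|^2$ lower bound into the $\tfrac{1}{2\mu}\|\cdot\|_*^2$ upper bound and vice versa; your closing remark is exactly right that in general norms the direction $d$ must be optimized (using $\sup_{d\ne0}\langle z,d\rangle/\|d\|_*=\|z\|$) rather than fixed by a single Cauchy--Schwarz step, and your attention to attainment, uniqueness, and differentiability of $f^*$ in the first direction (coercivity plus strict concavity of $w\mapsto\langle\theta,w\rangle-f(w)$, singleton subdifferential) covers the points the paper's smoothness definition requires. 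One small bookkeeping slip: in the first direction the cross terms collapse to $\langle\theta_2,w_2-w_1\rangle-\bigl(f(w_2)-f(w_1)\bigr)$, not $\langle\theta_2-\theta_1,w_2-w_1\rangle-\bigl(f(w_2)-f(w_1)\bigr)$; the missing $\langle\theta_1,w_2-w_1\rangle$ is precisely the linear term of the strong-convexity inequality at $w_1$, so after you apply that inequality the two bookkeepings coincide and the final bound $\|\theta_2-\theta_1\|_*\,a-\tfrac{\mu}{2}a^2\le\tfrac{1}{2\mu}\|\theta_2-\theta_1\|_*^2$ is unaffected. Also note your proof establishes the quadratic upper bound \eqref{eq:smooth}, which suffices here because the paper adopts that inequality as its (equivalent) definition of smoothness, and your second direction proves strong convexity exactly at points where $\partial f(w_1)\ne\emptyset$, which again matches the paper's subgradient-based definition \eqref{eq:strongconv}.
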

\begin{lemma}[Conjugates of Indicator Functions and Norms]~\vspace{-5mm}
\label{lem:NormConjugates}
\begin{enumerate}
\item[i)] The conjugate of the indicator function $\id_{\cC}$ of a set $\cC\subset \R^n$ (not necessarily convex) is the support function of the set $\cC$, that is
\[
\id_{\cC}^*(\xv) = \sup_{\sv\in\cC} \langle \sv,\xv\rangle
\]
\item[ii)] The conjugate of a norm is the indicator function of the unit ball of the dual norm.
\end{enumerate}
\end{lemma}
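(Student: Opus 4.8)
The plan is to prove both parts directly from the definition of the convex conjugate, $h^*(\xv) = \sup_{\uv\in\R^n} \langle \xv, \uv\rangle - h(\uv)$, with no appeal to deeper duality machinery.

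For part i), I would simply substitute $h = \id_{\cC}$, where $\id_{\cC}(\uv)=0$ for $\uv\in\cC$ and $+\infty$ otherwise. Any $\uv\notin\cC$ contributes $-\infty$ to the supremum and is therefore irrelevant, so the supremum effectively ranges only over $\uv\in\cC$; on that set the $-\id_{\cC}(\uv)$ term vanishes, leaving $\id_{\cC}^*(\xv) = \sup_{\uv\in\cC}\langle\xv,\uv\rangle$, which is exactly the claimed support function. Note that no convexity of $\cC$ is used anywhere, consistent with the ``not necessarily convex'' hypothesis in the statement.

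For part ii), let $g(\uv) := \|\uv\|$ and recall the dual norm $\|\xv\|_* := \sup_{\|\uv\|\le 1}\langle\xv,\uv\rangle$. I would compute $g^*(\xv) = \sup_{\uv}\, \langle\xv,\uv\rangle - \|\uv\|$ by splitting on whether $\xv$ lies in the dual unit ball. If $\|\xv\|_*\le 1$, then the inequality $\langle\xv,\uv\rangle \le \|\xv\|_*\,\|\uv\| \le \|\uv\|$ (which is just the defining inequality of the dual norm, obtained by normalizing $\uv$) forces every term $\langle\xv,\uv\rangle-\|\uv\|\le 0$, while $\uv=\0$ attains the value $0$; hence $g^*(\xv)=0$. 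If $\|\xv\|_* > 1$, I pick a witness $\uv_0$ with $\|\uv_0\|\le 1$ and $\langle\xv,\uv_0\rangle>1$, and evaluate along the ray $\uv = t\uv_0$ for $t>0$: the objective is at least $t(\langle\xv,\uv_0\rangle-1)$, which tends to $+\infty$, so $g^*(\xv)=+\infty$. Combining the two cases shows $g^*$ equals the indicator function of $\{\xv : \|\xv\|_*\le 1\}$, as claimed.

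Both arguments are short and essentially mechanical. The only step needing a little care is the scaling argument in the infeasible case of part ii): one must exhibit an \emph{unbounded direction} of ascent rather than merely a point with positive objective value, which is where positive homogeneity of the norm is used. Everything else reduces to unwinding the definitions of the conjugate and of the dual norm.
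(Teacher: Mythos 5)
Your proof is correct and matches the paper's route: the paper disposes of this lemma purely by citation to \citep[Examples 3.24 and 3.26]{Boyd:2004uz}, and your computation---restricting the supremum to $\cC$ in part i), and the case split $\|\xv\|_*\le 1$ versus $\|\xv\|_*>1$ with the unbounded ray $t\uv_0$ in part ii)---is exactly the standard argument those cited examples contain. In effect you have supplied the details the paper delegates to the reference, including the one genuinely necessary care point you correctly flag: in the infeasible case one must produce a direction of unbounded ascent via positive homogeneity, not merely a point with positive objective value.
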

\begin{proof}
\citep[Examples 3.24 and 3.26]{Boyd:2004uz}
\end{proof}

\section{Primal-Dual Relationship}\label{app:duality}
The relation of the primal and dual problems \eqref{eq:A} and \eqref{eq:B} is a special case of the concept of Fenchel Duality. Using the combination with the linear map $A$ as in our case, the relationship is called \emph{Fenchel-Rockafellar Duality}, see, e.g., \citep[Theorem 4.4.2]{Borwein:2005ge} or \citep[Proposition 15.18]{Bauschke:2011ik}.

For completeness, we here illustrate this correspondence with a self-contained derivation of the duality.

Starting with the formulation \eqref{eq:A}, we introduce a helper variable $\vv\in \R^d$. The optimization problem~\eqref{eq:A} becomes:
\begin{equation}
\label{eq:constrainedprimal}
\min_{\alphav \in\R^{n}} \quad  f(\vv) + g( \alphav) \quad \text{such that} \ \vv =A\alphav \, .
\end{equation}
Introducing dual variables $\wv = [w_1, \dots, w_d]$,  the Lagrangian is given by:
$$ L(\alphav, \vv; \wv) := f(\vv) +   g(\alphav) + \wv^\top\left(A\alphav-\vv\right) \, .$$
The dual problem follows by taking the infimum with respect to $\alphav$ and $\vv$:
\begin{align}
\inf_{\alphav, \vv} L(\wv, \alphav, \vv) & =   \inf_{\vv} \left\{ f(\vv) - \wv^\top \vv \right\} + \inf_{\alphav} \left\{ g(\alphav) +  \wv^\top A\alphav\right\} \notag \\
& =  - \sup_{\vv} \left\{  \wv^\top \vv - f(\vv) \right\}- \sup_{\alphav} \left\{(-\wv^\top A)\alphav -  g(\alphav) \right\} \notag\\
& = - f^*(\wv) - g^*(-A^\top \wv) \label{eq:Lagrangian}\, .
\end{align}
We change signs and turn the  maximization of the dual problem \eqref{eq:Lagrangian} into a minimization and thus we arrive at the dual formulation $\eqref{eq:B}$ as claimed:
$$
    \min_{\wv \in \R^{d}} \quad \Big[ \ 
    \bP(\wv) := f^*(\wv) + g^*(-A^\top \wv) \ \Big] \, .
$$

\section{Proof of Lemma \ref{lemma:dualityGapAndDualSuboptimality}}
\label{sec:proofLemma1}
The proof is partially motivated by proofs in
\cite{ma2015adding,ShalevShwartz:2013wl}
but come with some crucial unique steps and tricks.

\begin{proof}[Proof of Lemma \ref{lemma:dualityGapAndDualSuboptimality}]
We have
\begin{align*}
\bD(\alphav)-\bD(\alphav^\star)
&=\bD(\alphav)-\min_{\delta \alphav} \bD(\alphav + \Delta\alphav)
\\
&=\max_{\delta \alphav}
\Big(
g(\alphav)
-g(\alphav+\Delta\alphav)
 +f(A \alphav)
 -f(A (\alphav+\Delta\alphav))
\Big)
\\
&=\max_{s \in [0,1]}
\Big(
g(\alphav)
-g(\alphav+s(\uv-\alphav))
 +f(A \alphav)
 - f(A (\alphav+s(\uv-\alphav)))
\Big)
\\
&\geq 
g(\alphav)
-g(\alphav+s(\uv-\alphav))
 +f(A \alphav)
- f(A (\alphav+s(\uv-\alphav)).
  \tagthis \label{adsfsafsafa}
\end{align*}

Here we have chosen $\Delta\alphav := s(\uv - \alphav)$ with $\uv$ defined as in  \eqref{asdfafsafsa} and some $s\in[0,1]$.  Note that for $\uv$ to be well defined, i.e., the subgradient in  \eqref{asdfafsafsa} not to be empty, we need the domain of $g^*$ to be the whole space. For $\mu>0$ this is given by strong convexity of $g$, while for $\mu=0$ this follows from the bounded support assumption on $g$. (For the duality of Lipschitzness and bounded support, see our Lemma \ref{lem:dualLipschitz}).

Now, we can use the fact, that function 
$f: \R^d \to \R$ has Lipschitz continuous gradient w.r.t. some norm $\|.\|_f$, with constant $1/\beta$,  
to obtain
\begin{align*}
\bD(\alphav)-\bD(\alphav^\star)
&\overset{\eqref{adsfsafsafa}}{\geq} 
g(\alphav)
-g(\alphav+s(\uv-\alphav))
+f(A \alphav)
 - f(A \alphav)
  -\ve{\nabla f(A \alphav) }{ A s(\uv-\alphav)}
  -
\frac{1}{2\beta} 
\|   A  s(\uv-\alphav) \|_f^2
\\
&=
g(\alphav)
-g(\alphav+s(\uv-\alphav))
 -\ve{\nabla f(A \alphav) }{ A s(\uv-\alphav)}
 - 
\frac{1}{2\beta} 
\|   A  s(\uv-\alphav) \|_f^2.
\tagthis \label{fasdfsadfasfdsa}
\end{align*}
Now, we will use a strong convexity property of a function $g$ w.r.t. $\|.\|_g$ to obtain
\begin{align*}
\bD(\alphav)-\bD(\alphav^\star)
&\overset{\eqref{fasdfsadfasfdsa}}{\geq}  
s
\underbrace{\Big(
g(\alphav)
- g(\uv)
-\ve{\nabla f(A \alphav) }{ A (\uv-\alphav)}
\Big)}_{\Lambda}
+
\frac{s^2}2
\Big(
\frac{\mu(1-s) }{s}
  \|\uv-\alphav\|_g^2
    - 
 \frac1\beta  
\|   A   (\uv-\alphav) \|_f^2
\Big).
\tagthis \label{asdfsafsafsa}
 \end{align*}

Now let us examine the relation of the equation above with duality gap.
Therefore we use the definition of the optimization problems \eqref{eq:A} and \eqref{eq:B}, and the definition of the convex conjugates to write the duality gap as:
\begin{align*}
G(\alphav )=\bP(\wv(\alphav))-(-\bD(\alphav))
&=g^*(-A^\top \wv(\alphav)) +g(\alphav)+ f^*(\wv(\alphav))
 +   f(A \alphav)
\\
&=g^*(-A^\top \wv(\alphav)) +g(\alphav)+ f^*(\nabla f(A\alphav))
 +   f(A \alphav) 
\\
&=g^*(-A^\top \wv(\alphav)) +g(\alphav)+ 
\ve{ \wv(\alphav)} {A \alphav}. 
\label{asfdsafasfsa}
\tagthis 
\end{align*}
where we have used the mapping $\wv(\alphav) = \nabla f(A\alphav)$. %

Now, let us analyze
the expression $\Lambda$ from
\eqref{asdfsafsafsa}. We have
\begin{align*}
\Lambda
&=
g(\alphav)
+\ve{\wv(  \alphav) }{ A  \alphav }
-g(\uv)
-\ve{\wv(  \alphav) }{ A \uv}.
\tagthis \label{adsfasfa}
 \end{align*}
Now, using the convex conjugate maximal property
and \eqref{asdfafsafsa}
we have
\begin{equation}
 g(\uv) = \ve{\uv}{-A^\top\wv(\alphav)} - g^*(-A^\top \wv(\alphav)).
 \label{adfsafsafsafa}
\end{equation}
Plugging
\eqref{adfsafsafsafa}, \eqref{asfdsafasfsa}
and \eqref{adsfasfa}
into \eqref{asdfsafsafsa}
gives us
\begin{align*}
\bD(\alphav)-\bD(\alphav^\star)
&\geq  
s
G(\alphav)
+
\frac{s^2}2
\Big(
\frac{\mu(1-s) }{s}
  \|\uv-\alphav\|_g^2
    - 
 \frac1\beta  
\|   A   (\uv-\alphav) \|_f^2
\Big).
 \end{align*}
and \eqref{eq:dualityGapAndDualSuboptimality} follows.
\end{proof}

\section{Proof of Theorem \ref{thm:convFastSCg}}

\begin{proof}
Let us upper bound the second term in
\eqref{eq:dualityGapAndDualSuboptimality} as given in the main Lemma~\ref{lemma:dualityGapAndDualSuboptimality}.
Recall the definition of the data complexity parameter $\sigma := \big(\max_{\alphav\ne0} \|A\alphav\|_f/\|\alphav\|_g\big)^2$.
We have
\begin{align*}
 \tfrac{\mu(1-s) }{s}
  \|\uv-\alphav\|_g^2
    - 
 \tfrac1\beta  
\|   A   (\uv-\alphav) \|_f^2
\geq  
\left[ \tfrac{\mu(1-s) }{s}
-\frac \sigma\beta\right]
  \|\uv-\alphav\|_g^2.
\end{align*}
Now, if we choose 
$
s = 
\frac{\mu}{ \frac \sigma\beta + \mu}   
$
then this term vanishes, and therefore
 \begin{align*}
 \frac{\mu}{\mu + \frac\sigma\beta }
\Exp[G(\vc{\alphav}{t})]
&\ \overset{\eqref{eq:dualityGapAndDualSuboptimality}}{\leq} \ 
\Exp[  \bD(\vc{\alphav}{t})-\bD(\alphav^\star)]
\ \overset{\eqref{eq:afcewwa}}{\leq} \ 
  (1-C)^t \ D.
\end{align*}
 After multiplying the equation above by
 $\frac{\frac\sigma\beta  +\mu}{\mu}$ 
 and requiring RHS to be $\leq \epsilon$
we will get 
\begin{align*}
 \frac{\frac\sigma\beta +\mu}{\mu} (1-C)^t D & \leq \epsilon,
\\
  t  &  \geq \frac{\log \frac{\mu \epsilon}{D(\frac\sigma\beta +\mu)}}{\log (1-C)}
\end{align*}
 and our claimed convergence bound \eqref{eqafsdfasdfas} follows.  
\end{proof}

\section{Proof of Theorem \ref{thm:convFastLipsch}}

\begin{proof}
From Lemma \ref{lemma:dualityGapAndDualSuboptimality} and the definition of $\sigma$
we have that
\begin{align}
s \label{Eq:afsdfawvfaw}
\Exp[
G(\vc{\alphav}{t})]
&\ \overset{\eqref{eq:dualityGapAndDualSuboptimality} }{\leq}\ 
\Exp\big[
\bD(\vc{\alphav}{t}) - \bD(\alphav^\star)
+
  \tfrac{s^2}{2\beta}  
\|   A   (\uv-\alphav) \|_f^2
\big]
\ \overset{\eqref{eq:afcewwa}}{\leq}\ 
(1-C)^t \ D
+
  \frac\sigma \beta \tfrac{s^2}2  
\Exp[\|     \uv-\alphav  \|_g^2] \ .
\end{align}

Now using Lemma \ref{lem:dualLipschitz}, because $g^*$ is $L$-Lipschitz w.r.t. the norm $\|.\|_*$, we have that $g$ is $L$-bounded w.r.t. the norm $\|.\|=\|.\|_g=\|.\|_f$ and hence for $\alphav\in \dom(g)$ we obtain $\|\alphav\|<L$. 
From the standard characterization of Lipschitzness as by bounded subgradient norm (see e.g. \citep[Lemma 2.6]{ShalevShwartz:2011dz} or other references) we have for any $\uv \in \partial g^*(\xv)$ that $\|\uv\|\leq L$, and hence for any $\uv, \alphav$ we must have 
$\|\uv - \alphav\|^2\leq 2L^2$ by the triangle inequality.

Therefore we conclude that
\begin{align}
\Exp[
G(\vc{\alphav}{t})] 
\overset{\eqref{Eq:afsdfawvfaw}}{\leq} 
 \frac1s (1-C)^t D
+
 s \frac \sigma\beta  L^2.
 \label{Eq:affeawvfrwaefc}
\end{align}
Now, let us choose 
$\bar s = \min\{1, \frac{\epsilon \beta}{2  \sigma L^2}\}$.
To have the RHS of 
\eqref{Eq:affeawvfrwaefc}
$\leq \frac\epsilon2$ it is enough to choose 
$$
t \geq T
=\frac{\log (\bar s \frac\epsilon{2D})}{ (1-C)}
$$
and the claimed convergence bound \eqref{eq:fsarwjvlawjs} follows.
\end{proof}

\section{Proof of Theorem \ref{thm:convBoundSup}}

\begin{proof}
Using Lemma \ref{lemma:dualityGapAndDualSuboptimality} and the bound $\|\uv - \alphav\|^2\leq 2L^2$ derived in the previous section,
we have that 
\begin{align*} 
 \Exp[ G(\vc{\alphav}{t})]
&\ \overset{\eqref{eq:dualityGapAndDualSuboptimality}}{\leq}\ 
\frac1s
\Exp[\bD(\vc{\alphav}{t})-\bD(\alphav^\star) ]
  +
\frac{s}2
  \frac\sigma\beta  L^2 \ \overset{\eqref{eq:sublinearAlgorithm}}{\leq}\  
\frac1s
\frac{C}{D(t)}
  +
\frac{s}2
 \frac\sigma\beta  L^2.
 \label{eq:afsewfwavfwav}\tagthis 
\end{align*}
Now, by choosing 
$s = \sqrt{ \frac{C}{D(t)} \frac{2\beta}{\sigma L^2} } \overset{\eqref{eq:asfdwafeawf}}{\in} [0,1]$
we obtain that
\begin{align} 
 \Exp[ G(\vc{\alphav}{t})]
&\overset{\eqref{eq:afsewfwavfwav}}{\leq}   
\sqrt{\frac{2 C \sigma L^2}{\beta  D(t)}}.
\end{align}
We see that the assumption \eqref{eq:asfdwafeawf} guarantees that the RHS of the above inequality becomes $\leq \epsilon$, as claimed.
\end{proof}

We note that in the special case of constrained optimization, motivated by the Frank-Wolfe algorithm, 
\citep[Theorem 2]{LacosteJulien:2015wj} has shown another algorithm-independent bound for the convergence in duality gap, also requiring order of $1/\epsilon^2$ steps to reach accuracy $\epsilon$. On the other hand, for the algorithm-specific result by \citep[Prop 4.2]{Bach:2015bz} for the case of an extended Frank-Wolfe algorithm, $1/\epsilon$ steps are shown to be sufficient.

\section{Proof of Lemma \ref{lem:dualLipschitz}}
Lemma \ref{lem:dualLipschitz} generalizes the result of \citep[Corollary 13.3.3]{Rockafellar:1997ww} from the $L_2$-norm to a general norm $\|.\|$. 
It can be equivalently formulated as follows:

\begin{lemma}
Let $f:\R^n \rightarrow \R$ be a proper convex function. In order that dom($f^*$) be $L$-bounded w.r.t $\|.\|_*$ (a real number $L\geq 0$), it is necessary and sufficient that $f$ be finite everywhere and that 
\[|f(\zv)-f(\xv)| \leq L \|\zv-\xv\|\;\;\;\forall \xv,\zv \]
\end{lemma}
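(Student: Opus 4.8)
The plan is to prove the equivalent formulation of Lemma~\ref{lem:dualLipschitz}, namely that $\dom(f^*)$ is $L$-bounded w.r.t.\ $\|.\|_*$ if and only if $f$ is finite everywhere and $L$-Lipschitz w.r.t.\ $\|.\|$. I would establish both directions by unwinding the definition of the convex conjugate together with the standard characterization of Lipschitzness via bounded subgradients. The core fact driving everything is that, for a closed convex $f$, a vector $\vv$ lies in $\dom(f^*)$ essentially when it can appear as a subgradient of $f$ somewhere, so controlling the size of the subgradients is the same as controlling the size of $\dom(f^*)$.

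First I would prove the direction ``$L$-Lipschitz $\Rightarrow$ $\dom(f^*)$ is $L$-bounded.'' Suppose $f$ is finite everywhere and satisfies $|f(\zv)-f(\xv)|\le L\|\zv-\xv\|$. For any $\vv$ with $\|\vv\|_* > L$, I would pick, using the definition of the dual norm, a direction $\uv$ with $\|\uv\|\le 1$ and $\vv^\top\uv > L$. Then evaluating along the ray $\xv + t\uv$ gives
\begin{equation*}
f^*(\vv) \ge \vv^\top(\xv+t\uv) - f(\xv+t\uv) \ge \vv^\top\xv - f(\xv) + t\big(\vv^\top\uv - L\big),
\end{equation*}
where the Lipschitz bound controls $f(\xv+t\uv)\le f(\xv)+Lt\|\uv\|\le f(\xv)+Lt$. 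Letting $t\to\infty$ forces $f^*(\vv)=+\infty$, so $\vv\notin\dom(f^*)$; hence every point of $\dom(f^*)$ has dual norm at most $L$, which is exactly $L$-boundedness.

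For the converse, ``$\dom(f^*)$ is $L$-bounded $\Rightarrow$ $f$ is finite and $L$-Lipschitz,'' I would use biconjugation: since $f$ is proper convex (and I may invoke closedness/the double-conjugate property from Section~\ref{sec:conjugates}), one has $f(\xv)=f^{**}(\xv)=\sup_{\vv}\big(\vv^\top\xv - f^*(\vv)\big)$, and the supremum may be restricted to $\vv\in\dom(f^*)$, which by hypothesis satisfies $\|\vv\|_*\le L$. Then for any $\xv,\zv$,
\begin{equation*}
f(\zv)-f(\xv) = \sup_{\vv\in\dom(f^*)}\big(\vv^\top\zv - f^*(\vv)\big) - \sup_{\vv\in\dom(f^*)}\big(\vv^\top\xv - f^*(\vv)\big) \le \sup_{\|\vv\|_*\le L} \vv^\top(\zv-\xv) = L\|\zv-\xv\|,
\end{equation*}
using the standard ``sup of differences $\le$ sup of the difference'' estimate and the definition of the dual norm. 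Symmetry in $\xv,\zv$ gives the absolute value, and finiteness of $f$ follows because the restricted supremum over a bounded set of an affine family is finite everywhere.

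The main obstacle I anticipate is the technical care needed around closedness and properness so that biconjugation $f=f^{**}$ is valid, and the interchange of restricting the supremum to $\dom(f^*)$: I would want to confirm that no mass of the biconjugate comes from the boundary/recession behavior outside the effective domain, and that $f^*$ is proper so the restricted supremum is genuinely finite. These are routine convex-analysis facts (and the $L_2$ case is exactly \citep[Corollary 13.3.3]{Rockafellar:1997ww}), so the only real content beyond the cited corollary is replacing the Euclidean norm by a general norm $\|.\|$ and its dual $\|.\|_*$, which enters cleanly through the identity $\sup_{\|\vv\|_*\le L}\vv^\top\dv = L\|\dv\|$ used in both directions.
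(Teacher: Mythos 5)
Your proof is correct, but it follows a genuinely different route from the paper's. The paper tracks Rockafellar's original argument for Corollary 13.3.3: after reducing to closed $f$, it introduces the recession function $f0^+(\yv)=\sup_{\xv\in\dom(f)}\,f(\xv+\yv)-f(\xv)$, uses the facts that $f0^+$ is the support function of $\dom(f^*)$ (Rockafellar, Theorem 13.3) and that $\dom(f^*)$ is bounded if and only if $f0^+$ is finite everywhere (Theorem 10.5), rewrites the Lipschitz condition as $f0^+(\yv)\le L\|\yv\|$ for all $\yv$, and concludes by identifying $L\|\cdot\|$ as the support function of the dual-norm ball of radius $L$ (via Corollary 13.2.2 and a short dual-norm computation), so that comparison of support functions gives the containment of $\dom(f^*)$ in that ball. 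You instead work directly from the definition of the conjugate: one direction by a ray argument (if $\|\vv\|_*>L$, pick $\uv$ with $\|\uv\|\le 1$ and $\vv^\top\uv>L$, so that $\vv^\top(\xv+t\uv)-f(\xv+t\uv)$ grows linearly in $t$ and $f^*(\vv)=+\infty$), and the converse by biconjugation $f=f^{**}$ with the supremum restricted to the bounded set $\dom(f^*)$, followed by the sup-of-differences estimate and $\sup_{\|\vv\|_*\le L}\vv^\top(\zv-\xv)=L\|\zv-\xv\|$. Your approach buys self-containedness: it needs only the conjugate definition, biconjugation, and dual-norm duality, with no recession-function or support-function machinery. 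The paper's approach buys transparency of the claim that the lemma is a ``generalization of Corollary 13.3.3,'' since the only new ingredient over Rockafellar is that $L\|\cdot\|$ is the support function of the scaled dual ball. Both proofs carry exactly the same technical debt, namely the reduction to closed $f$ (so that $f=f^{**}$, respectively so that Rockafellar's theorems apply, using that $f$ and its closure share the same conjugate and that boundedness of $\dom(f^*)$ forces finiteness and hence closedness of $f$); the paper dispatches this in one sentence, and you correctly flag it, together with properness of $f^*$ (which legitimizes your sup-of-differences step and the finiteness claim), as the points requiring care --- these are indeed routine and do not constitute a gap.
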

\begin{proof}
We follow the line of proof in \citep{Rockafellar:1997ww}.

It can be assumed that $f$ is closed, as $f$ and its closure $cl(f)$ have the same conjugate, and the Lipschitz condition is satisfied by $f$ if and only if it is satisfied by $cl(f)$. 
Note that from  \citep[Theorem 13.3]{Rockafellar:1997ww} we know 
\begin{equation} \label{eq:Deff0}
f0^+(\yv):=\sup_{\xv\in \dom(f)}f(\xv+\yv)- f(\xv)
\end{equation}
is the support function of $\dom(f^*)$. Further, from \citep[Theorem 10.5]{Rockafellar:1997ww} we know: $\dom(f^*)$ is bounded if and only if  $f0^+$ is finite everywhere.
In more detail, the Lipschitz condition on $f$, choosing $\zv=\xv+\yv$, is equivalent to having
\[f(\xv+\yv)- f(\xv)\leq L \|\yv\|,\;\;\; \forall \xv,\yv\]
and by definition \eqref{eq:Deff0} of $f0^+$ this is equivalent to 
\[f0^+(\yv)\leq L \|\yv\|,\;\;\; \forall \yv\]
Note that $g(\yv):=L\|\yv\|$ is a finite positively homogenous convex function, we know by \citep[Corollary 13.2.2]{Rockafellar:1997ww} that it is the support function of a non-empty bounded convex set. Call this set $\bS$.
We will later show that $\bS=L \bB$ where $\bB$ is the $\|.\|_*$-norm ball.
 Hence $f0^+(\yv)\leq g$ means $cl(\dom f^*)\subset \bS$, as $f0^+$ is the support function of $\dom(f^*)$. And this shows that the Lipschitz condition holds for some $L$ if and only if $\xv^\star \in \bS\;\;\forall \xv^\star\in \dom(f^*)$ and hence $\|\xv^\star\|_*\leq L$ and $f^*$ is $L$-bounded for every $\xv^\star \in \dom(f^*)$.
 \newline
 It remains to show that $g=L\|\yv\|$ is the support function of $\bS=L \bB$:
 The support function of a set $\cC$ is defined as 
\[\delta_\cC (\yv):= \sup_{\xv\in \cC}\langle \yv,\xv\rangle.\] 
By the definition of the dual norm we find
\[L \|\yv\| = L \max_{\xv:\|\xv\|_*\leq1} \langle \yv,\xv\rangle  =  \max_{\xv:\|\xv\|_*\leq L} \langle \yv,\xv\rangle \]
and hence $g$ is the support function of the set $\bS:=\{\xv:\|\xv\|_*\leq L\}=L \bB$.
\end{proof}

\section{Duality Gaps}

\subsection{Duality Gap for Lasso (Using Lipschitzing)}
\label{sec:DualityGapLasso}

Consider the Lasso problem given in \eqref{eq:LassoObj}, which we directly map to our primal optimization problem \eqref{eq:A}. 
We use the $L_1$-norm ball of radius $B$ to apply the Lipschitzing trick, i.e. $\bB:=\left\{\xv:\|\xv\|_1\leq B\right\}$ and modify the $L_1$ norm term $g(\alphav):=\lambda \|\alphav\|_1$ as suggested in \eqref{eq:Lip}. The convex conjugate hence becomes:
\begin{equation}
\bar g^*(\uv) =B\left[\|\uv\|_\infty - \lambda\right]_+\label{eq:dualL1}
\end{equation}
and using the optimality condition  \eqref{eq:opt_f} the gap follows immediately as
\begin{align*}
\bar G(\alphav) &=f(A\alphav) + f^*(\wv(\alphav))+ B \left[\| A^\top\wv(\alphav)\|_\infty-\lambda\right]_+ +\lambda \|\alphav\|_1\\
&=\langle \nabla f(A \alphav), A \alphav\rangle+ B \left[\| A^\top\wv(\alphav)\|_\infty-\lambda\right]_+ +\lambda \|\alphav\|_1
\end{align*}
 where we used the first-order optimality $f^*(\wv(\alphav)) = \langle \wv(\alphav), A \alphav\rangle - f(A\alphav)$ by definition of $\wv(\alphav)$.
\begin{proof}
From \eqref{eq:barg*} we know
\begin{equation}\bar g^*(\uv) = 
\begin{cases}
0 & \|\uv\|_\infty \leq \lambda\\
\displaystyle\max_{ \alphav: \| \alphav\|_1 \leq B } \uv^\top \alphav - \lambda\|\alphav\|_1& \text{else}
\end{cases}.
\end{equation}
It remains to consider the non zero part, where $ \|\uv\|_\infty > \lambda$. 
Lets consider the optimization 
\[\max_{ \alphav: \| \alphav\|_1 \leq B } \uv^\top \alphav - \lambda\|\alphav\|_1.\]
Assume the largest element of $\uv$ is at index $i$ and we know $[\uv]_i>\lambda$, hence the maximizing $\alphav$ in $\{ \alphav: \| \alphav\|_1 \leq B \}$ is the vector with a single entry of value $B$ at position $i$, and \eqref{eq:dualL1} follows.
\end{proof}

\subsection{Duality Gap for Elastic Net Regularized Problems}
\label{sec:DualityGapElastic}

Consider the Elastic Net regularized problem formulation \eqref{eq:elasticNet}. The conjugate dual of the Elastic Net regularizer is given in Lemma \ref{lem:elasticnetconjugate}. We first map \eqref{eq:elasticNet} to \eqref{eq:A}, as suggested in row 2 of Table \ref{tbl:MLProblems}. Then its dual counterpart \eqref{eq:B} is given by:
\[\bP(\wv) = \ell^*(\wv) + \tfrac{1}{2\eta \lambda}\sum_{i=1}^n \left(\left [\left|A_{:i}^\top\wv\right|-(1-\eta)\lambda\right ]_+\right)^2.\]
In light of the optimality condition \eqref{eq:opt_f} we now use the definition of the primal-dual correspondence $\wv(\alphav) = \nabla \ell(A \alphav)$, and hence obtain
\begin{align*}
G(\alphav) &=(A\alphav)^\top\nabla \ell(A\alphav)+\tfrac{1}{2\eta \lambda}\sum_{i =1}^n \left( \left[\left|A_{:i}^\top\wv\right|-(1-\eta)\lambda\right]_+\right)^2 +\lambda \left(\frac{\eta}{2} \|\alphav\|_2^2 + (1-\eta)\|\alphav\|_1\right).
\end{align*}
Note that when alternatively mapping \eqref{eq:elasticNet} to \eqref{eq:B} (as suggested in row 3 of Table \ref{tbl:MLProblems}) instead of \eqref{eq:A}, the duality gap function will be similarly structured, but in that case the variable mapping $\wv(\alphav)$ will be defined using the gradient of the Elastic Net penalty, instead of $\ell$.

\section{Convergence Results for Coordinate Descent}
\label{sec:CDproofs}
For the proof of  Theorem \ref{thm:SDCAthm5} and \ref{thm:SDCAthm2} and we will need the following Lemma, which we will prove below in Section \ref{sec:lemBasic}. This lemma allows us to lower bound the expected per-step improvement in terms of $\bD$ for coordinate descent algorithms on $\bD$. 

\begin{lemma}\label{lemma:basic}%
Consider problem formulation \eqref{eq:Psep} and \eqref{eq:Dsep}. Let $f$ be $1/\beta$-smooth w.r.t. a norm $\|.\|$. Further, let $g_i$ be $\mu$-strongly convex with convexity parameter $\mu \geq 0$ $\forall i\in[n]$. For the case $\mu=0$ we need the additional assumption of $g_i$ having bounded support.
Then for any iteration $t$  and any $s\in [0,1]$, it holds that
\begin{align}
&\Exp[ \bD(\vc{\alphav}{t-1}) - \bD(\vc{\alphav}{t})] \geq \frac{s}{n} G(\alphav^{t-1}) -\frac{s^2 F^{(t)}}{2},
\end{align}
where
\begin{align}
 \vc{F}{t}&:= \frac{1}{n}\sum_{i=1}^n\left(  \frac{1}{ \beta}\|A_{:i}\|^2-\frac{(1-s)\mu}{ s}\right)  \Exp\left[\big(\vc{u_i}{t-1}-\vc{\alpha_i}{t-1}\big)^2\right],\label{eq:F}
\end{align} 
and $\vc{u_i}{t-1} \in \partial g_i^*(-A_{:i}^\top\wv(\vc{\alphav}{t-1}))$. The expectations are with respect to the random choice of coordinate $i$ in step $t$.
\end{lemma}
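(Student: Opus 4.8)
The plan is to mirror the proof of Lemma~\ref{lemma:dualityGapAndDualSuboptimality}, but carried out one coordinate at a time, exploiting the separability of $g$. First I would condition on $\vc{\alphav}{t-1}=:\alphav$ and fix the coordinate $i\in[n]$ drawn in step~$t$. Since step~4 of Algorithm~\ref{alg:SDCA} performs the \emph{exact} minimization along coordinate $i$, the realized decrease equals $\max_{\Delta\alpha_i}\big[\bD(\alphav)-\bD(\alphav+\ev_i\Delta\alpha_i)\big]$, which in particular dominates the decrease of any single trial step; I would therefore test $\Delta\alpha_i:=s(u_i-\alpha_i)$ with $u_i\in\partial g_i^*(-A_{:i}^\top\wv(\alphav))$. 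Because only the $i$-th coordinate moves, every term $g_j$ with $j\neq i$ cancels and
\[
\bD(\alphav)-\bD(\alphav+\ev_i\Delta\alpha_i)=\big[f(A\alphav)-f(A\alphav+A_{:i}\Delta\alpha_i)\big]+\big[g_i(\alpha_i)-g_i(\alpha_i+\Delta\alpha_i)\big].
\]

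Next I would bound the two brackets separately. For the first, $1/\beta$-smoothness of $f$ gives
\[
f(A\alphav)-f(A\alphav+A_{:i}\Delta\alpha_i)\geq-\ve{\nabla f(A\alphav)}{A_{:i}}\Delta\alpha_i-\tfrac{1}{2\beta}\|A_{:i}\|^2(\Delta\alpha_i)^2,
\]
while for the second the segment form of $\mu$-strong convexity of $g_i$ (ordinary convexity when $\mu=0$) gives
\[
g_i(\alpha_i)-g_i(\alpha_i+s(u_i-\alpha_i))\geq s\big[g_i(\alpha_i)-g_i(u_i)\big]+\tfrac{\mu}{2}s(1-s)(u_i-\alpha_i)^2.
\]
Substituting $\Delta\alpha_i=s(u_i-\alpha_i)$ and grouping by powers of $s$ produces, for the fixed coordinate $i$, a bound of the form $s\Lambda_i+\tfrac{s^2}{2}\big(\tfrac{\mu(1-s)}{s}-\tfrac1\beta\|A_{:i}\|^2\big)(u_i-\alpha_i)^2$, where $\Lambda_i:=g_i(\alpha_i)-g_i(u_i)-\ve{\nabla f(A\alphav)}{A_{:i}}(u_i-\alpha_i)$. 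Averaging over the uniform choice of $i$ introduces the factor $1/n$ in front of both contributions, and the second-order part is precisely $-\tfrac{s^2}{2}F^{(t)}$ by the definition~\eqref{eq:F}.

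It then remains to identify $\tfrac1n\sum_i\Lambda_i$ with $\tfrac1n G(\alphav)$. For this I would invoke the gap identity~\eqref{asfdsafasfsa}, $G(\alphav)=g^*(-A^\top\wv(\alphav))+g(\alphav)+\ve{\wv(\alphav)}{A\alphav}$, together with separability $g^*(-A^\top\wv)=\sum_i g_i^*(-A_{:i}^\top\wv)$ and the Fenchel equality implied by $u_i\in\partial g_i^*(-A_{:i}^\top\wv)$, namely $g_i^*(-A_{:i}^\top\wv)=-A_{:i}^\top\wv\,u_i-g_i(u_i)$; substituting makes the cross terms collapse and yields $G(\alphav)=\sum_i\Lambda_i$. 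The main obstacle, exactly as in Lemma~\ref{lemma:dualityGapAndDualSuboptimality}, is guaranteeing that $u_i$ is well defined, i.e.\ that $\partial g_i^*$ is nonempty at the relevant argument: for $\mu>0$ this holds because $g_i^*$ is differentiable (Lemma~\ref{lem:dualSmooth}), and for $\mu=0$ it holds because the bounded-support assumption on $g_i$ makes $g_i^*$ finite, hence subdifferentiable, everywhere (Lemma~\ref{lem:dualLipschitz}). A minor bookkeeping point is that the whole inequality is first derived conditionally on $\vc{\alphav}{t-1}$ with the sole randomness being the coordinate sampled at step~$t$, after which one passes to the total expectation.
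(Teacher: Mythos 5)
Your proposal is correct and follows essentially the same route as the paper's own proof: exactness of the coordinate minimization dominated by the trial step $\Delta\alpha_i = s(u_i-\alpha_i)$, the segment form of $\mu$-strong convexity for $g_i$, $1/\beta$-smoothness of $f$ on the rank-one perturbation $A_{:i}\Delta\alpha_i$, the Fenchel equality from $u_i \in \partial g_i^*(-A_{:i}^\top\wv)$ to rewrite the first-order terms as the per-coordinate gap contributions, and averaging over the uniform coordinate choice. Your handling of the well-definedness of $u_i$ (smoothness of $g_i^*$ for $\mu>0$, bounded support for $\mu=0$) and of conditioning on $\vc{\alphav}{t-1}$ also matches the paper.
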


\subsection{Proof of Theorem \ref{thm:SDCAthm5}}
\label{sec:proofThm5}

For the following, we again assume that the columns of the data matrix $A$ are scaled such that $\|A_{:i}\| \leq R$ for all $i\in[n]$ and $\|A_{j:}\| \leq P$ for all $ j\in[d]$ and some norm $\|.\|$.

\begin{reptheorem}{thm:SDCAthm5}
Consider Algorithm \ref{alg:SDCA} applied to \eqref{eq:Dsep}. Assume $f$ is a $1/\beta$-smooth function w.r.t. the norm $\|.\|$. Then, if $g_i$ is $\mu$-strongly convex for all~$i$, it suffices to have a total number of iterations of 
\begin{align*}
T&\geq \left(n+\tfrac{n R^2}{\mu \beta}\right) \log\left( \left[n+\tfrac{n R^2}{\mu \beta}\right] \tfrac{\epsilon_D^{(0)}}{\epsilon}\right)
\end{align*}
to get $\Exp[G(\vc{\alphav}{T})] \leq \epsilon$.
Moreover, to obtain an expected duality gap of $\Exp[G(\bar \alphav)] \leq \epsilon$ it suffices to have $T>T_0$ with 
\begin{align*}
T_0&\geq \left(n+\tfrac{n R^2}{\mu \beta}\right) \log\left( \left[n+\tfrac{n R^2}{\mu \beta}\right] \tfrac{\epsilon_D^{(0)}}{(T-T_0)\epsilon}\right)
\end{align*}
where $ \epsilon_D^{(0)}$ is the initial suboptimality in $\bD(\alphav)$.
\end{reptheorem}

\begin{proof}
The proof of this theorem is motivated by proofs in \citep{ShalevShwartz:2013wl} but generalizing their result in \citep[Theorem 5]{ShalevShwartz:2013wl}.
To prove Theorem \ref{thm:SDCAthm5} we apply Lemma \ref{lemma:basic} with $s=\frac{\mu \beta}{R^2+\mu \beta}\in[0,1]$. This choice of $s$ implies $\vc{F}{t}\leq0\;\;\forall t$ as defined in \eqref{eq:F}. Hence, 
\[
\Exp\big[ \bD(\vc{\alphav}{t-1}) - \bD(\vc{\alphav}{t}) \big]
\geq \frac s n G(\vc \alphav {t-1}) = \frac{s}{n} \bP(\vc{\wv}{t-1})+\bD(\vc{\alphav}{t-1}) .
\]
Here expectations are over the choice of coordinate $i$ in step $t$, conditioned on the past state at $t-1$.
Let $\epsilon_D^{(t)}$ denote the suboptimality $\epsilon_D^{(t)} := D(\vc \alphav t)-D(\alphav^\star)$. As
$\epsilon_D^{(t-1)} \leq  P(\vc \wv{t-1})+D(\vc \alphav{t-1})$ and $D(\vc \alphav {t-1})-D(\vc \alphav t)= \vc{ \epsilon_D}{t-1}-\vc{\epsilon_D}{t}$, we obtain
\begin{eqnarray*}
\Exp[\vc{\epsilon_D}{t}]
\leq \left(1-\frac{s}{n}\right) \vc{\epsilon_D}{t-1}
\leq \left(1-\frac{s}{n}\right)^t  \vc{\epsilon_D}{0}
\leq \exp(-st/n) \vc{\epsilon_D}{0}
\leq \exp\left(-\frac{\mu \beta t}{n(R^2+ \mu\beta)}\right) \vc{\epsilon_D}{0}.
\end{eqnarray*}
To finally upper bound the expected suboptimality as $\Exp[\vc{\epsilon_D}{t}]\leq \epsilon_D$ we need
\[t \geq \left(\frac{nR^2}{\mu \beta}+n\right)\log\left(\frac{ \vc{\epsilon_D}{0}}{\epsilon_D}\right).\] 
And towards obtaining small duality gap as well, we observe that at all times
 \begin{equation}\label{eq:Dpd}
\Exp[G(\vc{ \alphav}{t-1})]
\leq \frac{n}{s}\Exp[\vc{\epsilon_D}{t-1}-\vc{\epsilon_D}{t}]
\leq \frac{n}{s} \Exp[\vc{\epsilon_D}{t-1}] \ .
\end{equation}
Hence with $\epsilon\geq\frac{n}{s}\Exp[\vc{\epsilon_D}{t}]$ we must have a duality gap smaller than  $\epsilon$. Therefore we require %
\[
t \geq \left(\frac{nR^2}{\mu \beta}+n\right)\log\left(\left(n+\frac{n R^2}{\mu \beta}\right)\frac{ \epsilon_D^{0}}{\epsilon}\right).
\]
This proves the first part of Theorem \ref{thm:SDCAthm5} and the second part follows immediately if we sum \eqref{eq:Dpd} over $t=T_0,...,T-1$.
\end{proof}

\begin{remark}
From Theorem \ref{thm:SDCAthm5} it follows that for $T=2T_0$ and $T_0\geq n+\frac{n R^2}{\mu \beta}$ we need 
\[T\geq 2 \left(n+\frac{n R^2}{\mu \beta} \right)\log\left(\frac{\epsilon_D^{(0)}}{\epsilon}\right)\]
\end{remark}

\paragraph{Recovering SDCA as a Special Case.}

\begin{corollary} \label{cor:Thm5SDCA}
We recover Theorem 5 of \cite{ShalevShwartz:2013wl} as a special case of our Theorem \ref{thm:SDCAthm5}. We consider their optimization objectives, namely
\begin{align}
    \label{eq:primalSDCA}
    \bP(\wv)& := \frac{1}{n}\sum_{i=1}^n \Phi_i( \xv_i^\top \wv) + \frac{\lambda}{2}\|\wv\|^2   \\
    \label{eq:dualSDCA}
    \bD(\alphav) &:=-\left[ \frac{1}{n}\sum_{i=1}^n -\Phi^*_i(-\alpha_i) - \frac{\lambda}{2}\left\|\frac{1}{\lambda n}\sum_{i=1}^n \alpha_i \xv_i\right\|^2\right] .
\end{align}
where $\xv_i$ are the columns of the data matrix $X$.  We assume that   $\Phi_i^*(\alphav)$ is $\gamma$-strongly convex for $i\in[n]$. We scale the columns of $X$ such that $ \|\xv_i\|\leq 1$. Hence, we find that 
\begin{align*}
T_0&\geq \left(n+\frac{1}{\gamma \lambda}\right) \log\left(\left[n+\frac{1}{\gamma \lambda}\right]\frac{1}{\epsilon (T-T_0)}\right)
\end{align*}
iterations are sufficient to obtain a duality gap of $\Exp[\bP( \bar\wv )-(-\bD(\bar \alphav))]\leq  \epsilon$.
\end{corollary}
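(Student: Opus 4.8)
The plan is to treat this as a pure specialization of Theorem~\ref{thm:SDCAthm5}: I will build an explicit dictionary identifying the SDCA objectives \eqref{eq:primalSDCA}--\eqref{eq:dualSDCA} with the separable forms \eqref{eq:Psep}--\eqref{eq:Dsep}, read off the four constants $\beta$, $\mu$, $R$ and $\epsilon_D^{(0)}$ that the theorem needs, and substitute. Concretely, I would set $f(\vv):=\frac{\lambda}{2}\|\vv\|^2$, take $A:=\frac{1}{\lambda n}X$ so that its columns are $A_{:i}=\frac{1}{\lambda n}\xv_i$, and put $g_i(\alpha_i):=\frac1n\Phi_i^*(-\alpha_i)$. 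With these choices $f(A\alphav)=\frac{\lambda}{2}\|\frac{1}{\lambda n}\sum_i\alpha_i\xv_i\|^2$ and $\sum_i g_i(\alpha_i)=\frac1n\sum_i\Phi_i^*(-\alpha_i)$, so $\bD$ in \eqref{eq:Dsep} reproduces \eqref{eq:dualSDCA} exactly.

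Next I would verify that the induced dual \eqref{eq:Psep} reproduces \eqref{eq:primalSDCA}. Using the value- and argument-scaling rules for conjugates recalled in Appendix~\ref{sec:conjugates} together with $\Phi_i^{**}=\Phi_i$, one gets $g_i^*(y)=\frac1n\Phi_i(-ny)$, hence $g_i^*(-A_{:i}^\top\wv)=\frac1n\Phi_i(\frac1\lambda\xv_i^\top\wv)$, while $f^*(\wv)=\frac1{2\lambda}\|\wv\|^2$. After the harmless rescaling $\wv'=\wv/\lambda$ of the primal variable this is precisely $\frac1n\sum_i\Phi_i(\xv_i^\top\wv')+\frac{\lambda}{2}\|\wv'\|^2$, and under the same rescaling the primal-dual map $\wv(\alphav)=\nabla f(A\alphav)$ becomes SDCA's $\frac{1}{\lambda n}X\alphav$, so the framework gap $G=\bP(\wv(\alphav))-(-\bD(\alphav))$ coincides with the gap $\bP(\bar\wv)-(-\bD(\bar\alphav))$ in the statement.

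It then remains to read off constants. Since $\nabla f$ is $\lambda$-Lipschitz, $f$ is $\lambda$-smooth, i.e. $\beta=1/\lambda$; $\gamma$-strong convexity of $\Phi_i^*$ together with the value scaling by $1/n$ (the sign flip preserves the modulus) makes $g_i$ strongly convex with $\mu=\gamma/n$; and $\|\xv_i\|\le1$ gives $R=\|A_{:i}\|\le\frac{1}{\lambda n}$. The satisfying payoff is the dimensionless combination
\begin{equation*}
\frac{nR^2}{\mu\beta}=\frac{n\cdot\frac{1}{\lambda^2n^2}}{\frac{\gamma}{n}\cdot\frac1\lambda}=\frac{1}{\gamma\lambda},\qquad\text{so}\qquad n+\frac{nR^2}{\mu\beta}=n+\frac{1}{\gamma\lambda}.
\end{equation*}
Plugging this into the $T_0$-bound of Theorem~\ref{thm:SDCAthm5} and invoking the standard SDCA normalization ($\Phi_i\ge0$, $\Phi_i(0)\le1$, start at $\vc\alphav0=\0$) to bound the initial suboptimality $\epsilon_D^{(0)}\le1$, monotonicity of $\log$ shows that the claimed $T_0$ (with $1$ in place of $\epsilon_D^{(0)}$) implies the theorem's hypothesis, yielding the stated gap guarantee.

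The only genuinely delicate step is this bookkeeping of scalings: choosing $A$, $f$, $g_i$ so that \emph{both} objectives and the primal-dual correspondence line up simultaneously (the $\frac{1}{\lambda n}$ in $A$, the $1/n$ in $g_i$, and the $\wv'=\wv/\lambda$ rescaling all have to be consistent), and confirming $\epsilon_D^{(0)}\le1$ under SDCA's normalization. A useful safeguard is that $nR^2/(\mu\beta)$ is invariant under the residual scaling freedom, so an arithmetic slip in any single factor would be caught by this invariance.
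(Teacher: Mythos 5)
Your proposal is correct and follows essentially the same route as the paper: specialize Theorem~\ref{thm:SDCAthm5} via a dictionary $g_i(\alpha):=\tfrac1n\Phi_i^*(-\alpha)$, read off $\mu=\gamma/n$, $R$, $\beta$, bound $\epsilon_D^{(0)}\leq 1$ via $\Phi_i(0)\leq 1$, and substitute. The only (harmless) difference is normalization: the paper sets $f^*(\wv):=\tfrac{\lambda}{2}\|\wv\|^2$ and $A:=\tfrac1n X$ (giving $\beta=\lambda$, $R=1/n$ and reproducing \eqref{eq:primalSDCA} with no change of primal variable), whereas you set $f(\vv):=\tfrac{\lambda}{2}\|\vv\|^2$ and $A:=\tfrac{1}{\lambda n}X$ (giving $\beta=1/\lambda$, $R=\tfrac{1}{\lambda n}$ plus the rescaling $\wv'=\wv/\lambda$); both choices yield the same invariant $\tfrac{nR^2}{\mu\beta}=\tfrac{1}{\gamma\lambda}$ and hence the stated bound.
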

\begin{proof}
We consider \eqref{eq:primalSDCA} and \eqref{eq:dualSDCA} as a special case of the separable problems \eqref{eq:Dsep} and \eqref{eq:Psep}.
We set $g_i(\alpha):=\frac{1}{n}\Phi_i^*(-\alpha)$ and  $f^*(\wv):=\frac{\lambda}{2} \|\wv\|^2$. In this case $\mu:=\frac{1}{n}\gamma$ and $\beta := \lambda$. Defining $A:=\frac{1}{n}X$ and using the assumption  $ \|\xv_i\|\leq 1$  we have $R:=\frac{1}{n}$ and using $\Phi_i(0)\leq1$ we have $\epsilon_D^{(0)}\leq 1$and applying Theorem \ref{thm:SDCAthm5} to this setting concludes the proof.
\end{proof}

\subsection{Proof of Theorem \ref{thm:SDCAthm2}}
\label{sec:proofThm2}
This theorem generalizes the results of \citep[Theorem 2]{ShalevShwartz:2013wl}. %

\begin{reptheorem}{thm:SDCAthm2}%
Consider Algorithm \ref{alg:SDCA} applied to \eqref{eq:Dsep}. Assume $f$ is a $1/\beta$-smooth function w.r.t. the norm $\|.\|$. Then, if $g_i^*$ is $L$-Lipschitz for all $i$,  it suffices to have a total number of iterations of 
\begin{align*}
T&\geq \max\left\{0,n\log\frac{ \epsilon_D^{(0)} \beta}{ 2L^2R^{2} n}\right\}+  n +\frac{20 n^2 L^2  R^2}{\beta \epsilon }
\end{align*}
to get $\Exp[G(\bar \alphav)] \leq \epsilon$. Moreover, when $t\geq T_0$ with
\[T_0= \max\left\{0,n\log\frac{ \epsilon_D^{(0)} \beta}{ 2L^2R^{2} n}\right\}+\frac{16 n^2L^2R^2}{ \beta \epsilon}\]
 we have the suboptimality bound of $\Exp[\bD(\alphav^{(t)})-\bD(\alphav^\star)]\leq \epsilon/2$, where $ \epsilon_D^{(0)}$ is the initial suboptimality.
\end{reptheorem}

We rely on the following Lemma:

\begin{lemma}\label{lemma:F}
Suppose that for all $i$, $g_i^*$ is $L$-Lipschitz. Let $ F^{(t)}$ be as defined in Lemma \ref{lemma:basic} (with $\mu=0$) and assume $\|A_{:i}\|\leq R\;\;\; \forall i$. Then,  $F^{(t)}\leq \frac{4L^2}{\beta}R^2\:\:\forall t$.
\end{lemma}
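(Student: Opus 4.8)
The plan is to reduce the whole statement to a single pointwise bound on the scalar quantity $\vc{u_i}{t-1} - \vc{\alpha_i}{t-1}$ that appears in the definition of $\vc{F}{t}$, and then substitute the column-norm assumption. Setting $\mu = 0$ in \eqref{eq:F} makes the second summand vanish, leaving
\begin{equation*}
\vc{F}{t} = \frac{1}{n}\sum_{i=1}^n \frac{1}{\beta}\|A_{:i}\|^2 \, \Exp\left[\big(\vc{u_i}{t-1}-\vc{\alpha_i}{t-1}\big)^2\right],
\end{equation*}
so it suffices to show that $\big(\vc{u_i}{t-1} - \vc{\alpha_i}{t-1}\big)^2 \le 4L^2$ for every $i$ and every realization of the algorithm.

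First I would bound the iterate coordinate $\vc{\alpha_i}{t-1}$. Since the iterates are produced by coordinate descent on $\bD$, any selected coordinate must keep the objective finite, so $\vc{\alpha_i}{t-1} \in \dom(g_i)$. By Lemma \ref{lem:dualLipschitz}, $L$-Lipschitzness of $g_i^*$ is equivalent to $g_i$ having $L$-bounded support, which yields $|\vc{\alpha_i}{t-1}| \le L$. Next I would bound $\vc{u_i}{t-1}$: since $\vc{u_i}{t-1} \in \partial g_i^*(-A_{:i}^\top \wv(\vc{\alphav}{t-1}))$ and $g_i^*$ is $L$-Lipschitz, the standard characterization of Lipschitzness by bounded subgradient norm (exactly as invoked in the proof of Theorem \ref{thm:convFastLipsch}) gives $|\vc{u_i}{t-1}| \le L$.

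Combining these two bounds with the triangle inequality gives $|\vc{u_i}{t-1} - \vc{\alpha_i}{t-1}| \le 2L$, hence $\big(\vc{u_i}{t-1} - \vc{\alpha_i}{t-1}\big)^2 \le 4L^2$. Substituting this together with $\|A_{:i}\| \le R$ into the displayed expression for $\vc{F}{t}$ then yields $\vc{F}{t} \le \frac{1}{n}\sum_{i=1}^n \frac{1}{\beta} R^2 \cdot 4L^2 = \frac{4L^2}{\beta}R^2$, as claimed.

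The computation is mechanical once the two pointwise bounds are in place, so the only substantive point—and thus the main thing to state carefully—is justifying $\vc{\alpha_i}{t-1} \in \dom(g_i)$, so that the bounded-support conclusion of Lemma \ref{lem:dualLipschitz} applies to the actual iterate rather than to an arbitrary point. This is immediate because a step landing on a coordinate value with $g_i = +\infty$ could never be the minimizer chosen in line~4 of Algorithm \ref{alg:SDCA}, but it is worth recording explicitly since the $\mu=0$ case relies entirely on it.
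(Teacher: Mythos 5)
Your proof is correct and follows essentially the same route as the paper's: bound $|\vc{\alpha_i}{t-1}|\le L$ via Lemma \ref{lem:dualLipschitz} (Lipschitzness of $g_i^*$ $\Leftrightarrow$ $L$-bounded support of $g_i$), bound $|\vc{u_i}{t-1}|\le L$ via the bounded-subgradient characterization of Lipschitz functions, and combine with the triangle inequality and $\|A_{:i}\|\le R$. Your explicit remark that the iterates must lie in $\dom(g_i)$ (since the coordinate minimization in Algorithm \ref{alg:SDCA} never produces an infinite objective value) is a point the paper leaves implicit, but it is a welcome clarification rather than a deviation.
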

\begin{proof}
We apply the duality of Lipschitzness and bounded support (Lemma \ref{lem:dualLipschitz}) to the case of univariate functions $g_i$.
The assumption of $g_i^*$ being $L$-Lipschitz therefore gives that $g_i$ has $L$-bounded support, and so $|\alpha_i|\leq L$ for $\alpha_i$ as defined in~\eqref{eq:F}.

Furthermore for $u_i$, by the equivalence of Lipschitzness and bounded subgradient (see e.g. \citep[Lemma 2.6]{ShalevShwartz:2011dz}) we have $|u_i|\leq L$ and thus, $|\alpha_i-u_i|^2\leq 4 L^2$.
Together with $\|A_{:i}\|\leq R$ the claimed bound on $F^{(t)}$ follows.
\end{proof}

\begin{remark} In the case of Lasso we have $g_i(\alpha_i)= \lambda |\alpha_i|$. Using the Lipschitzing trick, we replace $g_i(\cdot)$ by
\begin{eqnarray*}
\bar{g}_i(\alpha) = \begin{cases}
          \lambda |\alpha| & : \alpha \in [-B,B]  \\
            +\infty & : \text{otherwise,}
        \end{cases}
\end{eqnarray*}
which has $B$-bounded support. Hence, its conjugate
\[
    \bar{g}^*(x) = 
    \begin{cases}
            0 & : x \in [-\lambda,\lambda]  \\
            B( |x| - \lambda) & : \text{otherwise,}
        \end{cases}
\]
is $B$-Lipschitz and Lemma \ref{lem:dualLipschitz} and Lemma \ref{lemma:F} apply with $L:=B$.
\end{remark}

\begin{proof}[Proof of Theorem \ref{thm:SDCAthm2}]
Now, to prove Theorem \ref{thm:SDCAthm2}, let $F=\max_t \vc{F}{t}$ and recall that by Lemma \ref{lemma:F} we can upper bound $F$ by $\frac{4L^2}{\beta}R^2$.

Furthermore, our main Lemma \ref{lemma:basic} on the improvement per step tells us that
\begin{align}
\label{eq:lemma:basic2}
&\Exp\big[ \bD(\vc{\alphav}{t-1})- \bD(\vc{\alphav}{t})\big] \geq \frac{s}{n} G(\vc \alphav {t-1}) -\frac{s^2}{2} F, 
\end{align}

With $\vc {\epsilon_D}t = D(\vc \alphav t)-D(\alphav^\star)\leq G(\vc \alphav t)$ and $D(\vc \alphav t)-D(\vc \alphav {t-1})= \epsilon_D^{(t-1)}-\vc {\epsilon_D}t$, this  implies
\begin{eqnarray*}
\Exp\big[\vc {\epsilon_D}{t-1}-\vc {\epsilon_D}{t}\big]&\geq&\frac{s}{n} \vc {\epsilon_D}{t-1} -  \frac{ s^2  F}{2}\\
\vc {\epsilon_D}{t-1}-\Exp[\vc {\epsilon_D}{t}]&\geq&\frac{s}{n} \vc {\epsilon_D}{t-1} -\frac{ s^2  F}{2}\\
\Exp[\vc {\epsilon_D}{t}]&\leq&\left(1-\frac{s}{n}\right) \vc {\epsilon_D}{t-1} +\frac{ s^2  F}{2}\\
\end{eqnarray*}
Expectations again only being over the choice of $i$ in steps $t$. 
We next show that this inequality can be used to bound the suboptimality as
\begin{equation}\label{eq:subOpt}
\Exp[\vc {\epsilon_D}{t}]\leq\frac{2 F n^2}{2n+t-t_0}
\end{equation}
for $t\geq t_0=\max\left\{0,n \log\left(\frac{2\epsilon_D^{(0)}}{F n}\right)\right\}$.
Indeed, let us choose $s:=1$. Then at $t=t_0$, we have
\begin{eqnarray*}
\Exp[\epsilon_D^{(t)}]&\leq& \left(1-\frac{1}{n}\right)^t \epsilon_D^{(0)} +\sum_{i=0}^{t-1} \left(1-\frac{1}{n}\right)^i \frac{ F}{2}\\
&\leq& \left(1-\frac{1}{n}\right)^t \epsilon_D^{(0)} +\frac{1}{1-(1-1/n)} \frac{ F}{2}\\
&\leq& e^{-t/n} \epsilon_D^{(0)} + \frac{n  F }{2}\\
&\leq&   F n
\end{eqnarray*}
For $t>t_0$ we use an inductive argument. Suppose the claim holds for $t-1$, therefore
\begin{eqnarray*}
\Exp[\epsilon_D^{(t)}]&\leq& \left(1-\frac{s}{n}\right) \Exp[\epsilon_D^{(t-1)}] +\frac{s^2 F}{2}\\
&\leq& \left(1-\frac{s}{n}\right)\frac{2  F n^2}{2n+(t-1)-t_0} +\frac{s^2 F}{2}
\end{eqnarray*}
choosing $s=\frac{2n}{2n+t-1-t_0}\in[0,1]$ yields
\begin{eqnarray*}
\Exp[\epsilon_D^{(t)}]&\leq& \left(1-\frac{2}{2n+t-1-t_0}\right)\frac{2  F n^2}{2n+(t-1)-t_0} +\left(\frac{2n}{2n+t-1-t_0}\right)^2\frac{ F}{2}\\
&=& \left(1-\frac{2}{2n+t-1-t_0}\right)\frac{2  F n^2}{2n+(t-1)-t_0} +\left(\frac{1}{2n+t-1-t_0}\right)\frac{2 F n^2}{2n+t-1-t_0}\\
&=& \left(1-\frac{1}{2n+t-1-t_0}\right)\frac{2  F n^2}{2n+(t-1)-t_0} \\
&=& \frac{2  F n^2}{(2n+t-1-t_0)} \frac{2n+t-2-t_0}{2n+t-1-t_0}\\
&\leq& \frac{2  F n^2}{(2n+t-t_0)}
\end{eqnarray*}
This proves the bound \eqref{eq:subOpt} on the suboptimality.
To get a result on the duality gap we sum \eqref{eq:lemma:basic2} over the interval $t=T_0+1,...,T$ and obtain
\[\Exp[  \bD(\vc{\alphav}{T_0})-\bD(\vc{\alphav}{T})] \geq \frac{s}{n} \Exp\left[\sum_{t=T_0+1}^T P(\wv^{(t-1)})+D(\alphav^{(t-1)})\right] - (T-T_0)\frac{s^2}{2}  F,\]
and rearranging terms we get
\[ \Exp\left[\frac{1}{T-T_0}\sum_{t=T_0+1}^T P(\wv^{(t-1)})+D(\alphav^{(t-1)})\right] \leq \frac{n}{s (T-T_0)}  \Exp[ \bD(\vc{\alphav}{T_0})- \bD(\vc{\alphav}{T}) ]+\frac{s n}{2}  F,\]
Now if we choose $\bar \wv, \bar \alphav$ to be  the average vectors over $t \in \{T_0+1,T\}$, then the above implies 
\[\Exp[G(\bar \alphav)] =  \Exp\left[ P(\bar \wv)+D(\bar \alphav)\right] \leq \frac{n}{s (T-T_0)}  \Exp[  \bD(\vc{\alphav}{T_0})-\bD(\vc{\alphav}{T}) ]+\frac{s n}{2}  F,\]
If $T\geq n+T_0$ and $T_0\geq t_0$, we can set $s=n/(T-T_0)$ and combining this with \eqref{eq:subOpt} we obtain
\begin{eqnarray*} 
\Exp[G(\bar \alphav)] =& \leq&   \Exp[ \bD(\vc{\alphav}{T_0})- \bD(\vc{\alphav}{T}) ]+\frac{ F n^2}{2 (T-T_0)} \\
& \leq&   \Exp[   \bD(\vc{\alphav}{T_0})-\bD(\alphav^\star)]+\frac{ F n^2}{2 (T-T_0)} \\
& \leq&  \frac{2  F n^2}{2n+t-t_0}+\frac{  F n^2}{2 (T-T_0)} 
\end{eqnarray*}
A sufficient condition to upper bound the duality gap by $\epsilon$ is that $T_0\geq \frac{4 F n^2}{ \epsilon}-2n+t_0$ and $T\geq T_0 + \frac{ F n^2}{\epsilon }$ which also implies $\Exp[  \bD(\vc{\alphav}{T_0})-\bD(\alphav^\star) ]\leq \epsilon/2$. Since we further require $T_0\geq t_0$ and $T-T_0\geq n$, the overall number of required iterations has to satisfy
\[
T_0\geq \max\Big\{t_0,\frac{4 F n^2}{ \epsilon}-2n+t_0\Big\} \;\;\text{and}\;\; T-T_0\geq\max\Big\{n,\frac{ F n^2}{\epsilon}\Big\}
\]
Using Lemma \ref{lemma:F} we can bound the total number of required iterations to reach a duality gap of $\epsilon$ by
\begin{eqnarray*}
T &\geq &T_0 + n +\frac{ n^2 }{\epsilon }\frac{4 L^2  R^2}{\beta}\\
 &\geq &t_0 + \frac{4 n^2}{ \epsilon}\frac{4 L^2  R^2}{\beta}+  n +\frac{ n^2 }{\epsilon }\frac{4 L^2  R^2}{\beta}\\
&\geq & \max\Big\{0,n\log\Big( \frac{\epsilon_D^{(0)}\beta}{2 n R^2 L^2}\Big)\Big\} +  n +\frac{20 n^2 L^2  R^2}{\beta \epsilon }
\end{eqnarray*}
which concludes the proof of Theorem \ref{thm:SDCAthm2}.
\end{proof}

\paragraph{Recovering SDCA as a Special Case.}

\begin{corollary} \label{cor:Thm2SDCA}
We recover Theorem 2 of \cite{ShalevShwartz:2013wl} as a special case of our Theorem \ref{thm:SDCAthm2}. We consider the optimization objectives in \eqref{eq:primalSDCA} and \eqref{eq:dualSDCA}. We assume that   $\Phi_i(\alphav)$ is $M$-Lipschitz for $i\in[n]$ and $\Phi_i(0)\leq 1$. We scale the columns of $X$ such that $ \|\xv_i\|\leq 1$. Hence, we find that 
\begin{align*}
T&\geq T_0 + n +\frac{4 M^2 }{\lambda \epsilon}\\
&\geq \max\{0,n\log(0.5 n \lambda M^{-2} )\}+  n +\frac{20  M^2 }{\lambda \epsilon }
\end{align*}
iterations are sufficient to obtain a duality gap of $\Exp[G(\bar \alphav)]\leq  \epsilon$.
\end{corollary}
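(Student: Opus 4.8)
The plan is to obtain this as a direct specialization of Theorem~\ref{thm:SDCAthm2}, following the same dictionary already used in Corollary~\ref{cor:Thm5SDCA}. First I would identify the SDCA objectives \eqref{eq:primalSDCA} and \eqref{eq:dualSDCA} as instances of the separable problems \eqref{eq:Psep} and \eqref{eq:Dsep} by setting $g_i(\alpha) := \tfrac1n \Phi_i^*(-\alpha)$, $f^*(\wv) := \tfrac\lambda2\|\wv\|^2$ (equivalently $f(\zv) = \tfrac1{2\lambda}\|\zv\|^2$), and $A := \tfrac1n X$, so that $A\alphav = \tfrac1n\sum_i \alpha_i\xv_i$ reproduces exactly the quadratic term in \eqref{eq:dualSDCA}.

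Next I would read off the four constants entering Theorem~\ref{thm:SDCAthm2}. The smoothness constant is $\beta = \lambda$, since $f^* = \tfrac\lambda2\|\cdot\|^2$ is $\lambda$-strongly convex and hence $f$ is $1/\lambda$-smooth by the duality of smoothness and strong convexity (Lemma~\ref{lem:dualSmooth}). The Lipschitz constant is $L = M$: by assumption $\Phi_i$ is $M$-Lipschitz, so by Lemma~\ref{lem:dualLipschitz} its conjugate $\Phi_i^*$ has $M$-bounded support; the factor $\tfrac1n$ in $g_i$ rescales the value but not the domain, so $g_i$ again has $M$-bounded support, and applying Lemma~\ref{lem:dualLipschitz} once more yields that $g_i^*$ is $M$-Lipschitz. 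The column-norm bound is $R = \tfrac1n$, because $\|A_{:i}\| = \tfrac1n\|\xv_i\| \le \tfrac1n$ under the scaling $\|\xv_i\|\le 1$. Finally I would bound the initial suboptimality by $\epsilon_D^{(0)} \le 1$: since $\bD(\0) = \tfrac1n\sum_i \Phi_i^*(0) = -\tfrac1n\sum_i\inf_u\Phi_i(u) \le 0$ (using nonnegativity of the losses), while weak duality together with $\Phi_i(0)\le1$ gives $-\bD(\alphav^\star) = \bP(\wv^\star) \le \bP(\0) = \tfrac1n\sum_i\Phi_i(0) \le 1$, so that $\epsilon_D^{(0)} = \bD(\0) - \bD(\alphav^\star) \le 1$.

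With these identifications the last step is pure substitution into Theorem~\ref{thm:SDCAthm2}. Plugging $\beta=\lambda$, $L=M$, $R=\tfrac1n$, $\epsilon_D^{(0)}\le1$ makes the data-dependent quantities collapse: $\tfrac{20 n^2 L^2 R^2}{\beta\epsilon} = \tfrac{20 M^2}{\lambda\epsilon}$, $\tfrac{16 n^2 L^2 R^2}{\beta\epsilon} = \tfrac{16 M^2}{\lambda\epsilon}$, $\tfrac{4 n^2 L^2 R^2}{\beta\epsilon} = \tfrac{4M^2}{\lambda\epsilon}$, and $n\log\tfrac{\epsilon_D^{(0)}\beta}{2L^2R^2 n} = n\log(0.5\, n\lambda M^{-2})$, which reassemble into the two claimed bounds $T \ge T_0 + n + \tfrac{4M^2}{\lambda\epsilon}$ and $T \ge \max\{0,n\log(0.5 n\lambda M^{-2})\} + n + \tfrac{20 M^2}{\lambda\epsilon}$.

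The main obstacle I anticipate is not any deep step but the bookkeeping of the two distinct $\tfrac1n$ rescalings (inside $A$ and inside $g_i$): one must check that the $\tfrac1n$ in $g_i$ affects only the value of the conjugate and not its support, so that $L = M$ rather than $M/n$, and that the $R = 1/n$ scaling is exactly what cancels the $n^2$ appearing in the iteration complexity of Theorem~\ref{thm:SDCAthm2}, leaving bounds that depend on $n$ only through the additive $n$ and the logarithm. Verifying $\epsilon_D^{(0)}\le1$ also tacitly requires nonnegativity of the losses, which is standard for the SDCA setting.
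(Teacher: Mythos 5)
Your proposal is correct and takes essentially the same route as the paper's proof: the identical dictionary ($g_i(\alpha)=\tfrac1n\Phi_i^*(-\alpha)$, $f^*(\wv)=\tfrac\lambda2\|\wv\|^2$, $A=\tfrac1nX$, giving $\beta=\lambda$, $L=M$, $R=\tfrac1n$, $\epsilon_D^{(0)}\le 1$) followed by direct substitution into Theorem~\ref{thm:SDCAthm2}. The only cosmetic differences are that the paper obtains $L=M$ by explicitly computing $g_i^*(x)=\tfrac1n\Phi_i(-nx)$ via conjugate scaling rather than by your double application of Lemma~\ref{lem:dualLipschitz}, and it cites Lemma~20 of \cite{ShalevShwartz:2013wl} for $\epsilon_D^{(0)}\le 1$ where you reprove it directly via weak duality (correctly flagging the nonnegativity of the losses that this requires).
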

\begin{proof}
We consider \eqref{eq:primalSDCA} and \eqref{eq:dualSDCA} as a special case \eqref{eq:Psep} and \eqref{eq:Dsep}.
We set $g_i(\alpha):=\frac{1}{n}\Phi_i^*(-\alpha)$ and  $f^*(\wv):=\frac{\lambda}{2} \|\wv\|^2$. Hence, $g_i^*(\wv^\top \av_i)=\tfrac{1}{n} \Phi_i(-n \wv^\top \av_i) $ is $M$-Lipschitz and we set $L:=M$ and $\beta := \lambda$.
We further use Lemma \citep[Lemma 20]{ShalevShwartz:2013wl} with $\Phi_i(0)\leq 1$ to bound the primal suboptimality as  $\epsilon_D^{(0)}\leq 1$.  Finally, by the assumption $\|\xv_i\|\leq 1$ and the definition $A:=\frac{1}{n}X$  we have $R:=\frac{1}{n}$ and applying Theorem \ref{thm:SDCAthm2} to this setting concludes the proof.
\end{proof}

\subsection{Proof of Lemma \ref{lemma:basic}}
\label{sec:lemBasic}
\begin{proof}
The proof of Lemma \ref{lemma:basic} is motivated by the proof of  \citep[Lemma 19]{ShalevShwartz:2013wl} but we adapt it to apply to a much more general setting. First note that the one step improvement in the dual objective can be written as
\begin{eqnarray*}
 \bD(\vc{\alphav}{t-1}) - \bD(\vc{\alphav}{t}) = \sum_{i = 1}^{n} g_i(\vc{\alpha_i}{t-1}) + f(A\vc{\alphav}{t-1}) - \left[  \sum_{i = 1}^{n} g_i(\vc{\alpha_i}{t}) + f(A\vc{\alphav}{t})   \right]
\end{eqnarray*}
Note that in a single step of SDCA $\vc{\alphav}{t-1}\rightarrow \vc{\alphav}{t}$ only one dual coordinate is changed. Without loss of generality we assume this coordinate to be $i$. Writing $\vv(\alphav) = A\alphav$, we find
\begin{eqnarray*}
 \bD(\vc{\alphav}{t-1}) - \bD(\vc{\alphav}{t})& =& \underbrace{ \left[ g_i(\vc{\alpha_i}{t-1}) + f(\vv(\vc{\alphav}{t-1}))   \right]}_{(\Gamma)}-\underbrace{\left[  g_i(\vc{\alpha_i}{t}) +f(\vv(\vc{\alphav}{t}))\right]}_{(\Lambda)} .
\end{eqnarray*}
In the following, let us denote the columns of the matrix $A$ by $\av_i$ for $i\in [n]$ for reasons of readability. Then, by definition of the update we have for all $s\in[0,1]$:
\begin{eqnarray*}
(\Lambda)& =&    g_i(\vc{\alpha_i}{t}) + f(\vv(\vc{\alphav}{t}))\\
& =& \min_{\Delta \alpha_i}\left[ g_i(\vc{\alpha_i}{t-1}+\Delta\alpha_i) +  f(\vv(\vc{\alphav}{t-1}+\ev_i \Delta \alpha_i))\right]\\
& =& \min_{\Delta \alpha_i}\left[ g_i(\vc{\alpha_i}{t-1}+\Delta\alpha_i) +  f\left(\vv(\vc{\alphav}{t-1})+\av_i \Delta\alpha_i\right)\right]\\
&\leq&\left[ g_i(\vc{\alpha_i}{t-1}+s(\vc{u_i}{t-1}-\vc{\alpha_i}{t-1})) + f\left(\vv(\vc{\alphav}{t-1})+\av_i s(\vc{u_i}{t-1}-\vc{\alpha_i}{t-1})\right)\right]\\
\end{eqnarray*}
where we chose $\Delta \alpha_i = s(\vc{u_i}{t-1}-\vc{\alpha_i}{t-1})$ with $\vc{-u_i}{t-1}\in \partial g_i^*(\av_i^\top \vc{\wv}{t-1}) $ for $s\in[0,1]$. As in Section \ref{sec:proofLemma1}, in order for $\uv$ to be well-defined, we again need the assumption on $g$ to be strongly convex ($\mu>0$), or to have bounded support.
\newline
Using $\mu$-strong convexity of $g_i$, namely
\begin{eqnarray*} g_i(\alpha_i^{(t-1)}+s(u_i^{(t-1)}-\alpha_i^{(t-1)}))& =&g_i(s u_i^{(t-1)}+(1-s)\alpha_i^{(t-1)})\\&\leq& s g_i(u_i^{(t-1)})+(1-s)g_i(\alpha_i^{(t-1)}) - \tfrac{\mu}{2}s(1-s)(u_i^{(t-1)}-\alpha_i^{(t-1)})^2,
\end{eqnarray*}
and $\frac{1}{\beta}$-smoothness of $f$
\[f(\vv(\alphav^{(t-1)})+s(u_i^{(t-1)}-\alpha_i^{(t-1)})\av_i) \leq f(\vv(\alphav^{(t-1)}))+\big\langle\nabla f(\vv(\alphav^{(t-1)})),s(u_i^{(t-1)}-\alpha_i^{(t-1)})\av_i\big\rangle +\tfrac{1}{2 \beta}\big\|s(u_i^{(t-1)}-\alpha_i^{(t-1)})\av_i\big\|^2\]
we find that
\begin{eqnarray*}
(\Lambda)&\leq& \left[ s g_i(u_i^{(t-1)})+(1-s)g_i(\alpha_i^{(t-1)}) - \tfrac{\mu}{2}s(1-s)(u_i^{(t-1)}-\alpha_i^{(t-1)})^2 \right] \\&& + \left[     f(\vv(\alphav^{(t-1)}))+\langle\nabla f(\vv(\alphav^{(t-1)})),s(u_i^{(t-1)}-\alpha_i^{(t-1)})\av_i\rangle +\tfrac{1}{2 \beta}\big\|s(u_i^{(t-1)}-\alpha_i^{(t-1)})\av_i\big\|^2 \right].
\end{eqnarray*}
We further note that from the optimality condition \eqref{eq:opt_f} we have $\wv(\alphav) = \nabla f(\vv(\alphav)) $ and rearranging terms yields:
\begin{eqnarray*}
(\Lambda)&\leq&   s g_i(u_i^{(t-1)})-s g_i(\alpha_i^{(t-1)})  - \tfrac{\mu}{2} s(1-s)(u_i^{(t-1)}-\alpha_i^{(t-1)})^2\\&&+ \underbrace{ g_i(\alpha_i^{(t-1)})+   f(\vv(\alphav^{(t-1)}))}_{(\Gamma)}+s(u_i^{(t-1)}-\alpha_i^{(t-1)})\av_i^\top \wv(\alphav^{(t-1)}) +\tfrac{1}{2 \beta}\|s(u_i^{(t-1)}-\alpha_i^{(t-1)})\av_i\|^2 .
\end{eqnarray*}
Using this inequality to bound $\bD(\vc{\alphav}{t-1})- \bD(\vc{\alphav}{t})=(\Gamma)-(\Lambda)$  yields:
\begin{eqnarray}
\bD(\vc{\alphav}{t-1})- \bD(\vc{\alphav}{t})  &\geq& -   s g_i(\vc{u_i}{t-1})+s g_i(\vc{\alpha_i}{t-1})-s\vc{u_i}{t-1}\av_i^\top \wv(\vc{\alphav}{t-1})+s \vc{\alpha_i}{t-1}\av_i^\top \wv(\vc{\alphav}{t-1})\nonumber \\
&&+ \tfrac{\mu}{2}s(1-s)(\vc{u_i}{t-1}-\vc{\alpha_i}{t-1})^2 -\tfrac{1}{2 \beta}\|s(\vc{u_i}{t-1}-\vc{\alpha_i}{t-1})\av_i\|^2.\nonumber\\
&\overset{(i)}{\geq}& s\left[g_i^*(-\av_i^\top\wv(\vc{\alphav}{t-1}))+ g_i(\vc{\alpha_i}{t-1})+\vc{\alpha_i}{t-1}\av_i^\top \wv(\vc{\alphav}{t-1}) \right.\label{eqn:D-D}\\
&&\left.+ \tfrac{\mu}{2}(1-s)(\vc{u_i}{t-1}-\vc{\alpha_i}{t-1})^2 -\tfrac{s}{2 \beta}\|(\vc{u_i}{t-1}-\vc{\alpha_i}{t-1})\av_i\|^2\right].\nonumber
\end{eqnarray}
Note that for $(i)$ we used the optimality condition \eqref{eq:opt_gstar} which translates to  $u\in \partial g^*(-\av_i^\top \wv) $ and yields $g(u)=-u\av_i^\top\wv - g^*(-\av_i^\top\wv)$. Similarly, by again exploiting the primal-dual optimality condition we have $f^*(\nabla f(\vv)) = \vv ^\top \nabla f(\vv) - f(\vv)$ and hence we can write the duality gap as:
\begin{eqnarray*}
G(\alphav)=\bP(\wv)-(- \bD({\alphav})) &=&  \sum_{i=1}^n g_i^*(-\av_i^\top \wv)+f^*(\wv)-\left[ -   \sum_{i=1}^n g_i(\alpha_i)-f(A\alphav)\right]\\
&=&  \sum_{i=1}^n\left[g_i^*(-\av_i^\top \wv)+g_i(\alpha_i)\right]+f^*(\wv)+f(A\alphav)\\
&=&  \sum_{i=1}^n\left[g_i^*(-\av_i^\top \wv)+g_i(\alpha_i)\right] +(A\alphav)^\top \wv\\
&=&  \sum_{i=1}^n\left[g_i^*(-\av_i^\top \wv)+g_i(\alpha_i) +\alpha_i\av_i^\top \wv\right]
\end{eqnarray*}
using this we can write  the expectation of \eqref{eqn:D-D} with respect to $i$ as
\begin{eqnarray*}
\Exp\left[ \bD(\vc{\alphav}{t-1})- \bD(\vc{\alphav}{t}) \right] &\geq& s \left(\frac{1}{n} G(\vc{\alphav}{t-1}) \right)
-\frac{s^2}{2} \underbrace{ \left[\frac{1}{n}\sum_{i=1}^n \Exp\left[\big(\vc{u_i}{t-1}-\vc{\alpha_i}{t-1}\big)^2\right]\left(  \frac{1}{ \beta}\|\av_i\|^2-\frac{(1-s)\mu}{ s}\right)\right]}_{ F^{(t)}}.
\end{eqnarray*}
And we have obtained that 
\begin{eqnarray*}
\Exp\left[ \bD(\vc{\alphav}{t-1})-\bD(\vc{\alphav}{t}) \right] &\geq& \frac{s}{n} G(\vc{\alphav}{t-1}) -\frac{s^2 }{2} F^{(t)}.
\end{eqnarray*}
For the expectation being over the choice of coordinate $i$ in step $t$.
\end{proof}

\begin{corollary} \label{cor:Lemma19SDCA}
We recover Lemma 19 of \cite{ShalevShwartz:2013wl} as a special case of our Lemma \ref{lemma:basic}. 
We consider their pair of primal and dual optimization objectives, \eqref{eq:primalSDCA} and \eqref{eq:dualSDCA}
where $\xv_i$ are the columns of the data matrix $X$. Assume that $\Phi^*_i$ is $\gamma$-strongly convex for $i\in[n]$, where we allow $\gamma=0$. Then, for any $t$, any $s\in[0,1]$ and $- \vc{\hat u_i}{t-1} \in \partial \Phi_i(\xv_i^\top\wv(\vc{\alphav}{t-1}))$ we have 
\begin{align}
\label{eq:lemma:D-D}
&\Exp[ -\bD(\vc{\alphav}{t}) -(- \bD(\vc{\alphav}{t-1}))] \\
& \geq \frac{s}{n} \Exp[\bP(\thetav^{t-1})-(-\bD(\alphav^{t-1}))] -\left(\frac{s}{n}\right)^2 \frac{\hat  F^{(t)}}{2\lambda}, \notag
\end{align}
where
\begin{align}
\label{eq:lemma:F}
 \vc{\hat F}{t}&:= \frac{1}{n}\sum_{i=1}^n\left( \|\xv_i\|^2-\frac{\gamma(1-s)\lambda n}{ s}\right)  \\
&\hspace{2cm} \cdot\Exp\left[\big(\vc{\hat u_i}{t-1}-\vc{\alpha_i}{t-1}\big)^2\right]\nonumber,
\end{align}
\end{corollary}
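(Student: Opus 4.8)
The plan is to specialize Lemma~\ref{lemma:basic} using exactly the dictionary between the SDCA objectives and the general separable formulation \eqref{eq:Dsep}/\eqref{eq:Psep} already set up in Corollary~\ref{cor:Thm5SDCA}. Concretely, I would set $g_i(\alpha) := \tfrac1n \Phi_i^*(-\alpha)$, $f^*(\wv) := \tfrac\lambda2\|\wv\|^2$ (equivalently $f(\vv) = \tfrac1{2\lambda}\|\vv\|^2$, whose gradient $\tfrac1\lambda\vv$ is $\tfrac1\lambda$-Lipschitz, so $f$ is $1/\beta$-smooth with $\beta=\lambda$), and $A := \tfrac1n X$, so that $A_{:i} = \tfrac1n \xv_i$. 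A short conjugate computation using the value/argument scaling rules then confirms that \eqref{eq:Dsep} and \eqref{eq:Psep} reduce exactly to \eqref{eq:dualSDCA} and \eqref{eq:primalSDCA}, so Lemma~\ref{lemma:basic} applies verbatim and the only remaining work is to translate each of its quantities.

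The substance is thus purely a matter of matching terms. First I would identify the strong-convexity constant: since $\Phi_i^*$ is $\gamma$-strongly convex and reflection and scaling by $\tfrac1n$ preserve strong convexity, $g_i$ is $\tfrac\gamma n$-strongly convex, giving $\mu=\tfrac\gamma n$ (the case $\gamma=0$ falls under the bounded-support clause of Lemma~\ref{lemma:basic}). Next I would compute the subgradient appearing in \eqref{eq:F}: from $g_i^*(y)=\tfrac1n\Phi_i(-ny)$ the chain rule gives $\partial g_i^*(-A_{:i}^\top\wv)=-\partial\Phi_i(\xv_i^\top\wv)$, so the auxiliary variable $\vc{u_i}{t-1}$ of Lemma~\ref{lemma:basic} coincides with the variable $\vc{\hat u_i}{t-1}$ defined by $-\vc{\hat u_i}{t-1}\in\partial\Phi_i(\xv_i^\top\wv(\vc{\alphav}{t-1}))$, and in particular $(\vc{u_i}{t-1}-\vc{\alpha_i}{t-1})^2=(\vc{\hat u_i}{t-1}-\vc{\alpha_i}{t-1})^2$. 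Finally I would identify the primal iterate: $\wv(\alphav)=\nabla f(A\alphav)=\tfrac1{\lambda n}\sum_i\alpha_i\xv_i=\thetav$, so the framework gap $G(\alphav)=\bP(\wv(\alphav))-(-\bD(\alphav))$ is precisely $\bP(\thetav)-(-\bD(\alphav))$, matching the middle term of \eqref{eq:lemma:D-D}.

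With these identifications the inequality of Lemma~\ref{lemma:basic} already \emph{is} \eqref{eq:lemma:D-D}, modulo rewriting the error constant, so the last computational step is to check $\tfrac{s^2}2 F^{(t)}=\big(\tfrac sn\big)^2\tfrac{\hat F^{(t)}}{2\lambda}$. Substituting $\beta=\lambda$, $\|A_{:i}\|^2=\tfrac1{n^2}\|\xv_i\|^2$ and $\mu=\tfrac\gamma n$ into \eqref{eq:F} and factoring $\tfrac1{\lambda n^2}$ out of each summand turns the bracket $\big(\tfrac1\beta\|A_{:i}\|^2-\tfrac{(1-s)\mu}s\big)$ into $\tfrac1{\lambda n^2}\big(\|\xv_i\|^2-\tfrac{\gamma(1-s)\lambda n}s\big)$, i.e.\ exactly $\tfrac1{\lambda n^2}$ times the bracket in \eqref{eq:lemma:F}; hence $F^{(t)}=\hat F^{(t)}/(\lambda n^2)$ and collecting the prefactors $\tfrac{s^2}2\cdot\tfrac1{\lambda n^2}=\big(\tfrac sn\big)^2\tfrac1{2\lambda}$ gives the claimed identity. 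The main obstacle is strictly bookkeeping---keeping the several factors of $\tfrac1n$ coming from the definitions of $A$, $g_i$ and $\mu$ straight through the conjugate, the subgradient, and the error term---rather than any conceptual difficulty.
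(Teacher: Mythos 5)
Your proof is correct and is essentially the paper's own argument: the same dictionary ($g_i(\alpha)=\tfrac1n\Phi_i^*(-\alpha)$, $f^*(\wv)=\tfrac{\lambda}{2}\|\wv\|^2$, $A=\tfrac1n X$, hence $\mu=\gamma/n$, $\beta=\lambda$, $\wv(\alphav)=\tfrac{1}{\lambda n}\sum_i\alpha_i\xv_i$), followed by term-by-term translation of Lemma~\ref{lemma:basic}. If anything, your bookkeeping is tighter than the paper's: the paper's proof asserts $\vc{\hat u_i}{t-1}=-n\vc{u_i}{t-1}$, whereas your identification $\vc{\hat u_i}{t-1}=\vc{u_i}{t-1}$ (from $\partial g_i^*(y)=-\partial\Phi_i(-ny)$, where the outer factor $\tfrac1n$ cancels the inner chain-rule factor $-n$) is the one under which $F^{(t)}=\hat F^{(t)}/(\lambda n^2)$ and hence $\tfrac{s^2}{2}F^{(t)}=\bigl(\tfrac{s}{n}\bigr)^2\tfrac{\hat F^{(t)}}{2\lambda}$ actually holds, so the constants in \eqref{eq:lemma:D-D} and \eqref{eq:lemma:F} come out exactly as claimed.
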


\begin{proof}
We set $g_i(\alphav):=\frac{1}{n}\Phi_i^*(-\alphav)$ and $f^*(\wv):=\frac{\lambda}{2} \|\wv\|^2$. 
From the definition of strong convexity it immediately follows that $\mu=\frac{\gamma}{ n}$ and $\beta = \lambda$.
As our algorithm works for any data matrix $A$, we choose $A:=\frac{1}{n}X$ and scale the input vectors $\xv_i$, i.e. $\av_i=\frac{\xv_i}{n}$ before we feed $\av_i$ into the algorithm. To conclude the proof we apply Lemma \ref{lemma:basic} to this setting  and observe that
\begin{eqnarray}\label{eq:l_i}
g_i^*(\wv^\top \av_i)=\tfrac{1}{n} \Phi_i(-n \wv^\top \av_i) = \tfrac{1}{n} \Phi_i( -\wv^\top \xv_i).
\end{eqnarray}
which yields  $\vc{\hat u_i}{t-1}=-n \vc{ u_i}{t-1}$.
Further note that  the conjugate of $f^*$ is given by $f(\vv)= \frac{\lambda}{2} \left\|\frac{\vv}{\lambda}\right\|^2$ and this leads to the dual-to-primal mapping  
\[\wv = \nabla f (\vv(\alphav))= \frac{1}{\lambda}{\vv(\alphav)}=\frac{1}{\lambda}\sum_{i=1}^n {\av_i\alpha_i}=\frac{1}{\lambda n}\sum_{i=1}^n {\xv_i\alpha_i}.\]
\end{proof}

\section{Some Useful Pairs of Conjugate Functions}

\paragraph{Elastic Net.}

\begin{lemma}[Conjugate of the Elastic Net Regularizer, see e.g. \cite{Smith:2015ua}]
\label{lem:elasticnetconjugate}
For $\eta \in (0,1]$, the Elastic Net function $g_i(\alpha) := \frac{\eta}{2} \alpha^2 + (1-\eta) |\alpha|$ has the convex conjugate 
\[
    g_i^*(x) := \textstyle\frac1{2\eta} \big(\big[|x|-(1-\eta)\big]_+\big)^2 ,
\]
where $[.]_+$ is the positive part operator, $[s]_+ = s$ for $s>0$, and zero otherwise.
Furthermore, this $g_i^*$ is $1/\eta$-smooth.
\end{lemma}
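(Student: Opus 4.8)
The plan is to compute the conjugate directly from its definition $g_i^*(x) = \sup_{\alpha\in\R}\{x\alpha - g_i(\alpha)\}$, and then to read off smoothness from the duality between strong convexity and smoothness (Lemma \ref{lem:dualSmooth}). Since $g_i$ is even, its conjugate $g_i^*$ is even as well, so it suffices to evaluate the supremum for $x \geq 0$ and then substitute $|x|$ at the end.

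First I would reduce the supremum to the half-line $\{\alpha \geq 0\}$. For $\alpha \leq 0$ we have $|\alpha| = -\alpha$, so the objective equals $(x+(1-\eta))\alpha - \tfrac{\eta}{2}\alpha^2$, whose derivative $(x+(1-\eta)) - \eta\alpha$ is nonnegative there whenever $x \geq 0$; hence the objective is non-decreasing on $\{\alpha \leq 0\}$ and the maximizer may be taken in $\{\alpha \geq 0\}$. On this half-line the objective is the concave quadratic $(x-(1-\eta))\alpha - \tfrac{\eta}{2}\alpha^2$; its unconstrained stationary point is $\alpha^\star = (x-(1-\eta))/\eta$, and projecting onto $\alpha \geq 0$ gives the constrained maximizer $\alpha^\star = [\,x-(1-\eta)\,]_+/\eta$. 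In particular, when $0 \leq x \leq 1-\eta$ the maximizer is $\alpha=0$ and $g_i^*(x)=0$, which is exactly the statement that $x$ lies in the subdifferential of $g_i$ at $0$.

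Substituting $\alpha^\star$ back and simplifying (the two quadratic terms combine to a single $\tfrac{1}{2\eta}(\cdot)^2$) yields $g_i^*(x) = \tfrac{1}{2\eta}([\,x-(1-\eta)\,]_+)^2$ for $x \geq 0$; invoking the even symmetry then gives the claimed closed form $g_i^*(x) = \tfrac{1}{2\eta}([\,|x|-(1-\eta)\,]_+)^2$ on all of $\R$.

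For the smoothness claim I would argue structurally rather than by hand: $g_i(\alpha) = \tfrac{\eta}{2}\alpha^2 + (1-\eta)|\alpha|$ is the sum of the $\eta$-strongly convex quadratic $\tfrac{\eta}{2}\alpha^2$ and the convex term $(1-\eta)|\alpha|$, hence $g_i$ is $\eta$-strongly convex w.r.t. $|\cdot|$; by Lemma \ref{lem:dualSmooth} its conjugate $g_i^*$ is therefore $(1/\eta)$-smooth. As a cross-check one may differentiate the closed form to obtain $g_i^{*\prime}(x) = \tfrac{1}{\eta}\,\mathrm{sign}(x)\,[\,|x|-(1-\eta)\,]_+$, which is continuous, piecewise linear, and has slope at most $1/\eta$, confirming the $(1/\eta)$-Lipschitz gradient. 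The computation is entirely routine; the only point deserving care is the reduction to $\{\alpha \geq 0\}$ together with the correct treatment of the kink of $|\cdot|$ at the origin, which is what produces the flat region $\{|x| \leq 1-\eta\}$ where $g_i^*$ vanishes.
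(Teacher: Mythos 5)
Your proof is correct and follows essentially the same route as the paper's: direct evaluation of the supremum defining the conjugate with a case split on the sign of the maximizer, followed by the strong-convexity/smoothness duality (Lemma \ref{lem:dualSmooth}) for the $1/\eta$-smoothness claim, which the paper also invokes as an alternative to differentiating the closed form. The only cosmetic differences are that you exploit even symmetry to work on $x\ge 0$ and obtain the flat region $\{|x|\le 1-\eta\}$ directly from the projected maximizer $\alpha^\star = [\,x-(1-\eta)\,]_+/\eta$, whereas the paper fills in that region by noting that the conjugate is nonnegative, convex, and vanishes at $\pm(1-\eta)$ --- both arguments are equally valid.
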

\begin{proof}%
We start by applying the definition of convex conjugate, that is:
$$\textstyle 
\ell(x) = \max_{\alpha \in \R} \left[ x\alpha - \eta \frac{\alpha^2}{2} - (1-\eta)|\alpha| \right] \, .\vspace{-1mm}
$$

We now distinguish two cases for the optimal: $\alpha^\star \geq 0$, $\alpha^\star < 0$.
For the first case we get that
$$\textstyle 
\ell(x) = \max_{\alpha \in \R} \left[ x\alpha - \eta \frac{\alpha^2}{2} - (1-\eta)\alpha \right] \, .
$$
Setting the derivative to $0$ we get $\alpha^\star = \frac{x-(1-\eta)}{\eta}$. To satisfy $\alpha^\star \geq 0$, we must have $x \geq 1-\eta$.
Replacing with $\alpha^\star$ we thus get:
$$\textstyle 
\ell(x) = \alpha^\star (x - \frac{1}{2}\eta \alpha^\star -(1-\eta)) = \alpha^\star \left( x - \frac{1}{2} (x-(1-\eta)) - (1-\eta) \right) =
$$
$$\textstyle 
\frac{1}{2} \alpha^\star \left( x - (1-\eta) \right) = \frac{1}{2\eta} \left( x - (1-\eta) \right)^2 \, .
$$
Similarly we can show that for $x \leq -(1-\eta)$
$$\textstyle 
\ell(x) = \frac{1}{2\eta} \left( x + (1-\eta) \right)^2 \, .
$$
Finally, by the fact that $\ell(.)$ is convex, always positive, and $\ell(-(1-\eta)) = \ell(1-\eta) = 0$,
it follows that $\ell(x) = 0$ for every $x \in [-(1-\eta),1-\eta]$.

For the smoothness properties, we consider the derivative of this function $\ell(x)$ and see that $\ell(x)$ is smooth, i.e. has Lipschitz continuous gradient with constant $1/\eta$, assuming $\eta>0$. Alternatively, use Lemma~\ref{lem:dualSmooth} for the given strongly convex function $g_i$.
\end{proof}

\paragraph{Group Lasso.} 
The group Lasso regularizer is a norm on $\R^n$, and is defined as 
\[
g(\alphav)=%
\lambda \sum_{k} \|\alphav_{\calG_k}\|_2 ,\vspace{-1mm}
\]
for a fixed partition of the indices into disjoint groups, $\{1..n\} = \biguplus_{k} \calG_k$. 
Here $\alphav_{\calG_k} \in \R^{|g|}$ denotes the part of the vector $\alphav$ with indices in the group $g \subseteq [n]$.
Its dual norm is $\max_{g\in G} \|\alphav_{(g)}\|_2$. Therefore, by Lemma~\ref{lem:NormConjugates}, we obtain the conjugate
\[
g^*(\yv) = \id_{\{\yv \ |\ \max_{g\in G} \|\alphav_{(g)}\|_2 \,\le\, \lambda \}}(\yv)
\] %
See, e.g. \citep[Example 3.26]{Boyd:2004uz}.

\paragraph{Logistic Loss.}

\begin{lemma}[Conjugate and Smoothness of the Logistic Loss]
\label{lem:logisticlossconj}
The logistic classifier loss function $f$ is given as\vspace{-1mm}
\begin{equation*}
f(A\alphav) := \sum_{j=1}^d \log{(1 + \exp{(-b_j \yv_j^\top \alphav)})} \, , \vspace{-2mm}
\end{equation*}
Its conjugate $f^*$ is given as:\vspace{-2mm}
\begin{equation}
\label{eq:logisticlossconj}
f^*(\wv) := \sum_{j=1}^d \big((1+w_j b_j) \log{(1+w_j b_j)} - w_j b_j\log{(-w_j b_j)\big)} \, ,
\end{equation}
with the box constraint $-w_jb_j \in [0,1]$.

Furthermore, $f^*(\wv)$ is $1$-strongly convex over its domain if the labels satisfy $b_j\in[-1,1]$.\vspace{-1mm}
\end{lemma}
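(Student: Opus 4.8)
The plan is to use separability to reduce to a one-dimensional conjugate, and then to obtain the strong-convexity claim essentially for free from the duality between smoothness and strong convexity (Lemma~\ref{lem:dualSmooth}), rather than by differentiating the closed form. First I would write $f$ as a separable sum $f(\zv)=\sum_{j=1}^d \phi_{b_j}(z_j)$ with scalar building block $\phi_b(z):=\log(1+\exp(-bz))$ and $z_j=\yv_j^\top\alphav$. By the conjugate-of-a-separable-sum rule recalled in Section~\ref{sec:conjugates}, $f^*(\wv)=\sum_{j=1}^d \phi_{b_j}^*(w_j)$, so it suffices to analyze a single term. Hence the task splits into (a) computing $\phi_b^*$ and (b) bounding its curvature.

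For (a), I would evaluate $\phi_b^*(w)=\sup_z\big(wz-\log(1+\exp(-bz))\big)$ by stationarity: the supremum is attained where $w=\phi_b'(z)=-b/(1+\exp(bz))$. Introducing $p:=1/(1+\exp(bz))\in(0,1)$ gives $w=-bp$; the range of $\phi_b'$ being a bounded interval pins down the effective domain and yields the stated box constraint $-w_jb_j\in[0,1]$ (outside it the supremum is $+\infty$). Back-substituting $\exp(bz)=(1-p)/p$ collapses the objective to the negative binary entropy $\phi_b^*(w)=p\log p+(1-p)\log(1-p)$, with the convention $0\log0=0$ at the boundary. For binary labels $b_j\in\{-1,+1\}$ (so $b_j^2=1$) one has $p=-w_jb_j$ and $1-p=1+w_jb_j$, which reproduces the displayed formula~\eqref{eq:logisticlossconj} term by term; summing over $j$ then gives $f^*$.

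For (b), since $f$ is closed and convex we have $(f^*)^*=f$, so applying Lemma~\ref{lem:dualSmooth} to $h:=f^*$ shows that $f^*$ is $1$-strongly convex if and only if $f$ is $1$-smooth. The Hessian of $f$ is diagonal with entries $\partial^2 f/\partial z_j^2=b_j^2\exp(b_jz_j)/(1+\exp(b_jz_j))^2\le b_j^2/4\le 1/4\le1$ whenever $|b_j|\le1$, so $\nabla^2 f\preceq I$ and $f$ is $1$-smooth; the claim follows. Equivalently, one can differentiate the closed form directly to get $(\phi_b^*)''(w)=1/(b^2p(1-p))\ge 4/b^2\ge1$.

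I expect the scalar conjugate in step (a) to be the main obstacle: the change of variables $w\leftrightarrow p$, the identification of the effective domain (the box constraint and why the supremum blows up outside it), and the boundary convention all require care. A secondary point worth flagging explicitly is that the closed form~\eqref{eq:logisticlossconj} presumes $b_j^2=1$ (binary labels), whereas the $1$-strong-convexity conclusion needs only the weaker hypothesis $|b_j|\le1$ stated in the lemma.
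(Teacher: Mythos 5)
Your proof is correct, but note that the paper does not actually prove this lemma at all --- its ``proof'' is a pointer to external references (\cite{Smith:2015ua}, \cite{ShalevShwartz:2013wl}). Your derivation therefore supplies what the paper omits, and it does so using tools already available in the paper: separability of the conjugate (Section~\ref{sec:conjugates}) reduces everything to the scalar conjugate $\phi_b^*(w)=\sup_z\big(wz-\log(1+e^{-bz})\big)$; stationarity plus the substitution $p=1/(1+e^{bz})$ collapses this to the negative binary entropy $p\log p+(1-p)\log(1-p)$, with effective domain given by the closure of the range of $\phi_b'$ (the supremum is $+\infty$ outside it), matching \eqref{eq:logisticlossconj} and the box constraint $-w_jb_j\in[0,1]$ precisely when $b_j^2=1$. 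Your route to strong convexity via Lemma~\ref{lem:dualSmooth} (applied to $h:=f^*$ with $h^*=f^{**}=f$, combined with the diagonal Hessian bound $\phi_b''(z)=b^2e^{bz}/(1+e^{bz})^2\le b^2/4\le 1/4$) is clean and sidesteps any boundary issues with differentiating the closed form at $p\in\{0,1\}$; in fact it yields $4$-strong convexity, which is stronger than the claimed constant $1$. Your closing remark is also a genuine catch rather than a flaw in your argument: the closed form \eqref{eq:logisticlossconj} and its domain implicitly assume binary labels $b_j\in\{-1,+1\}$ --- for general $|b_j|\le 1$ the $j$-th term has domain $-w_jb_j\in[0,b_j^2]$ and the entropy is in $p=-w_j/b_j$ rather than $-w_jb_j$ --- whereas the strong-convexity clause needs only $|b_j|\le 1$; the lemma as stated glosses over this mismatch, and your proof makes it explicit.
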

\begin{proof}
See e.g. \cite{Smith:2015ua} or \cite{ShalevShwartz:2013wl}.
\end{proof}

\newpage
\section{More Numerical Experiments}

\includegraphics[scale=0.2]{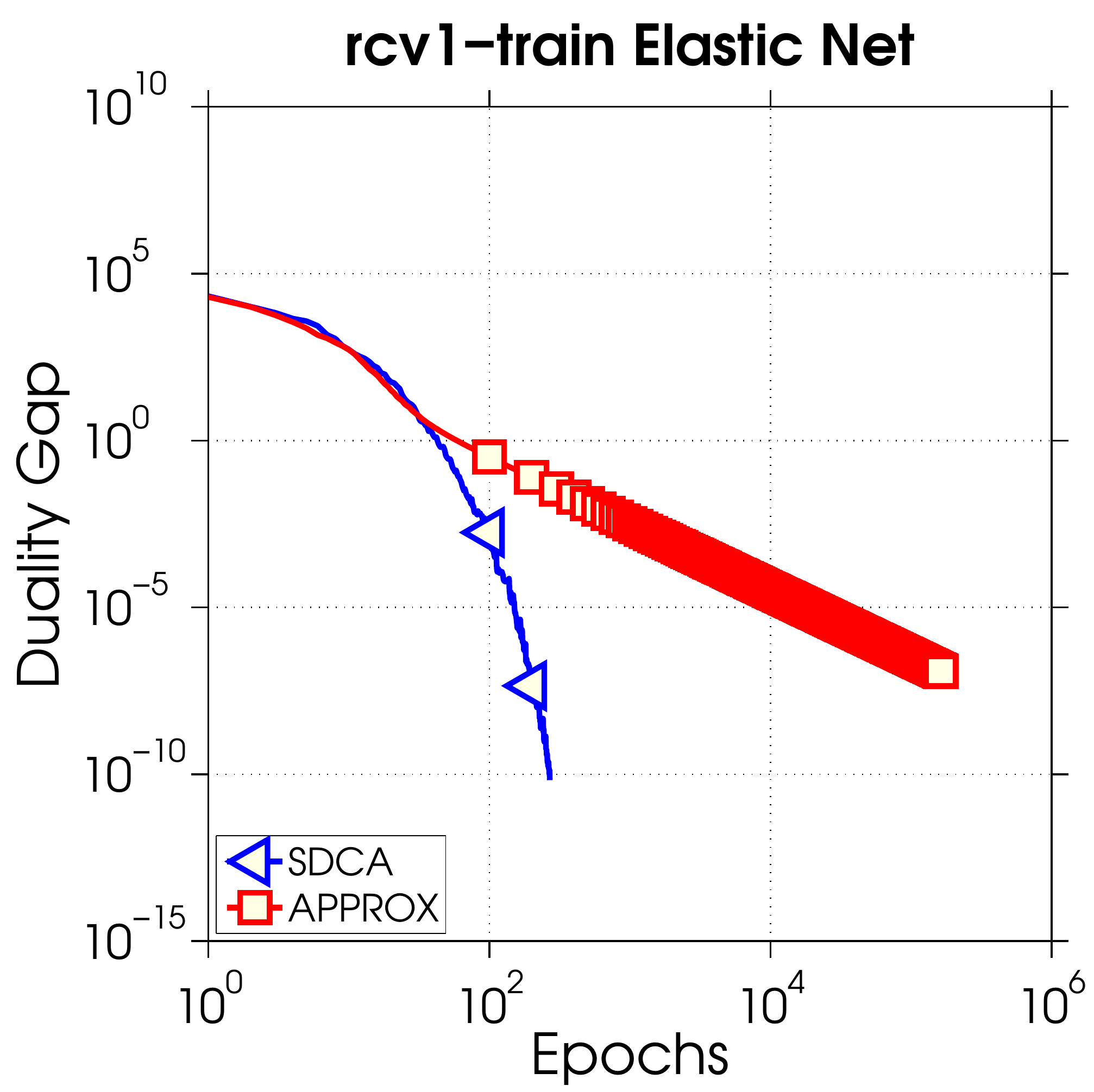}
\includegraphics[scale=0.2]{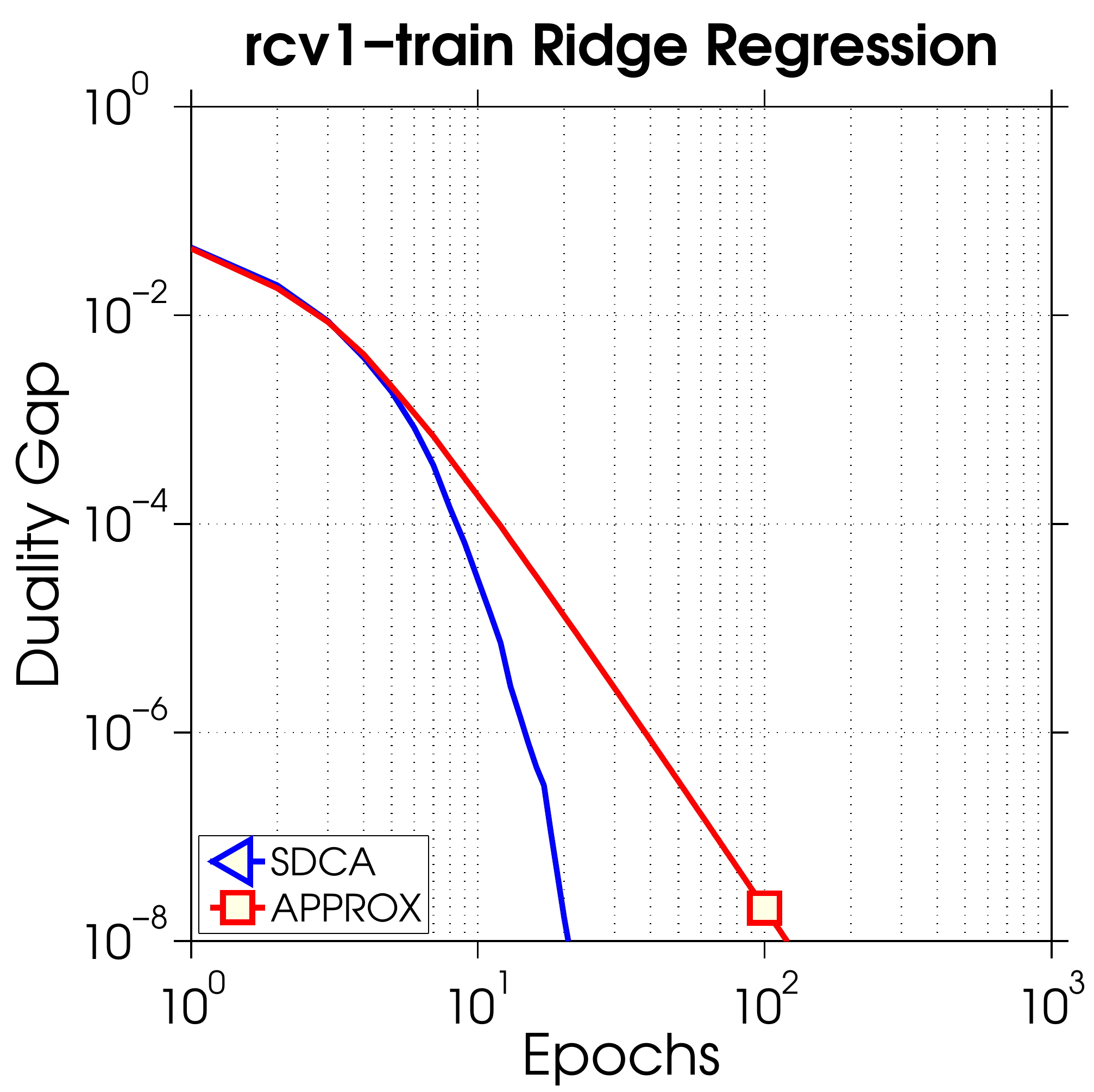}
\includegraphics[scale=0.2]{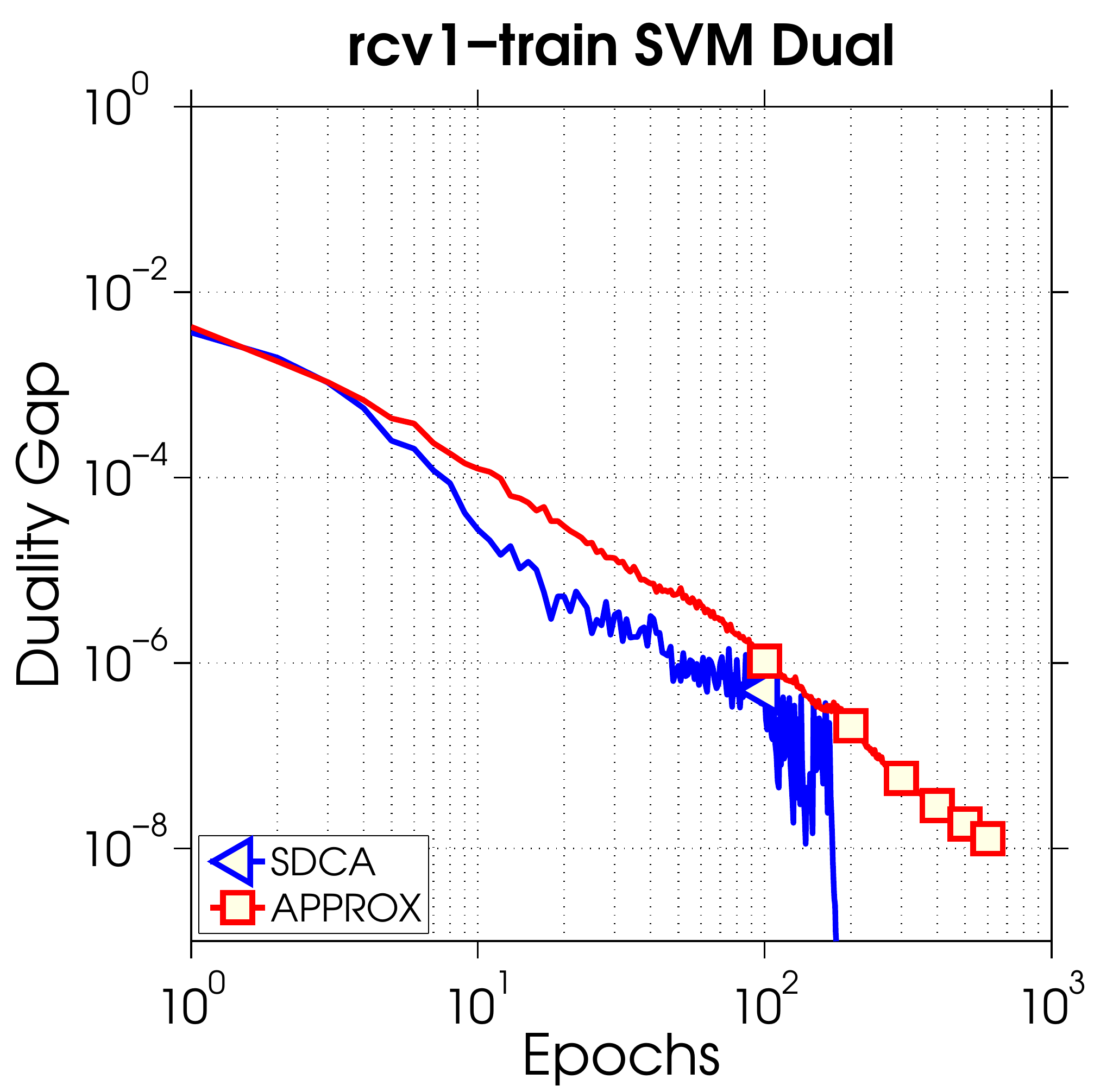}
\includegraphics[scale=0.2]{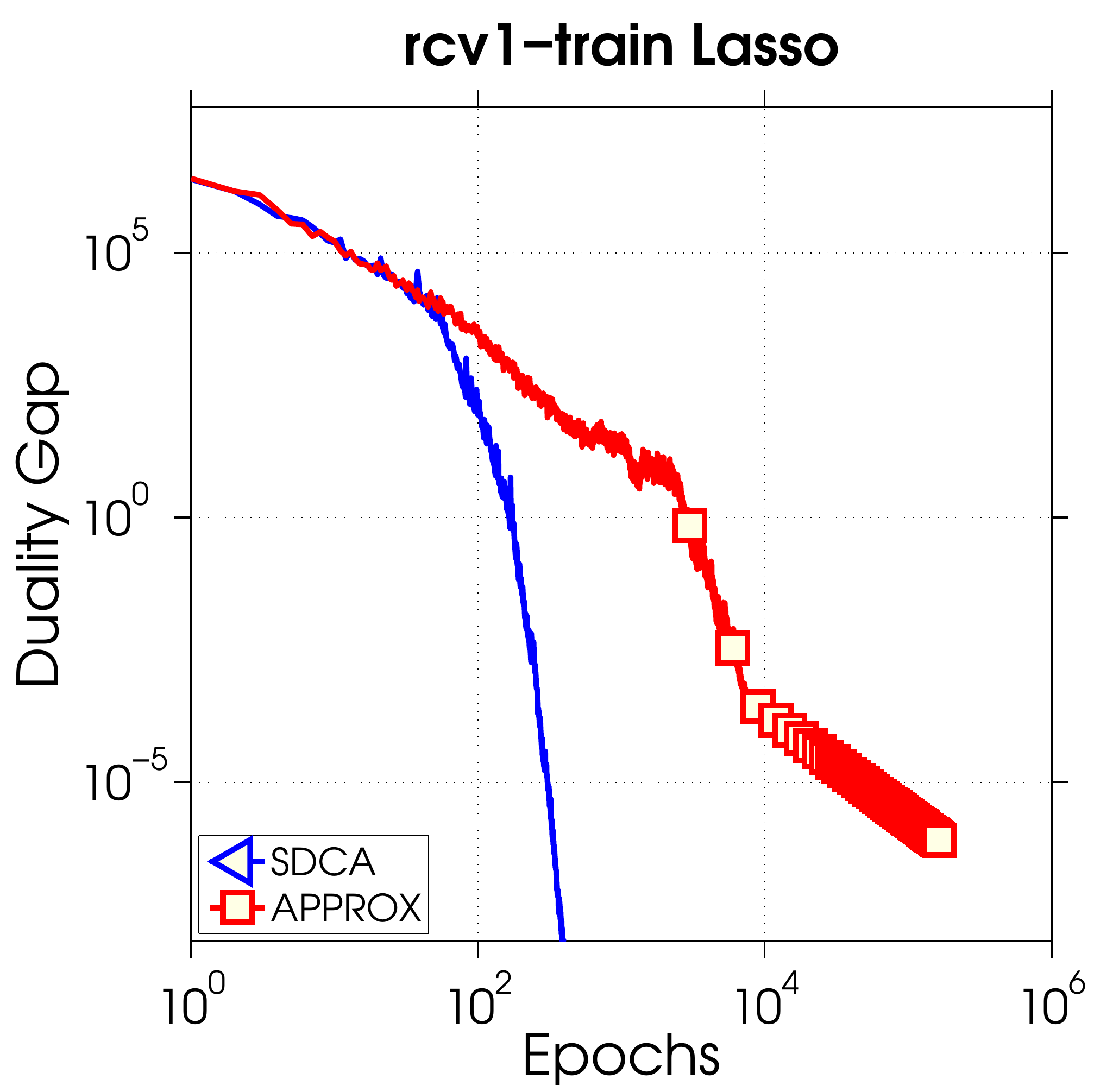}

\includegraphics[scale=0.2]{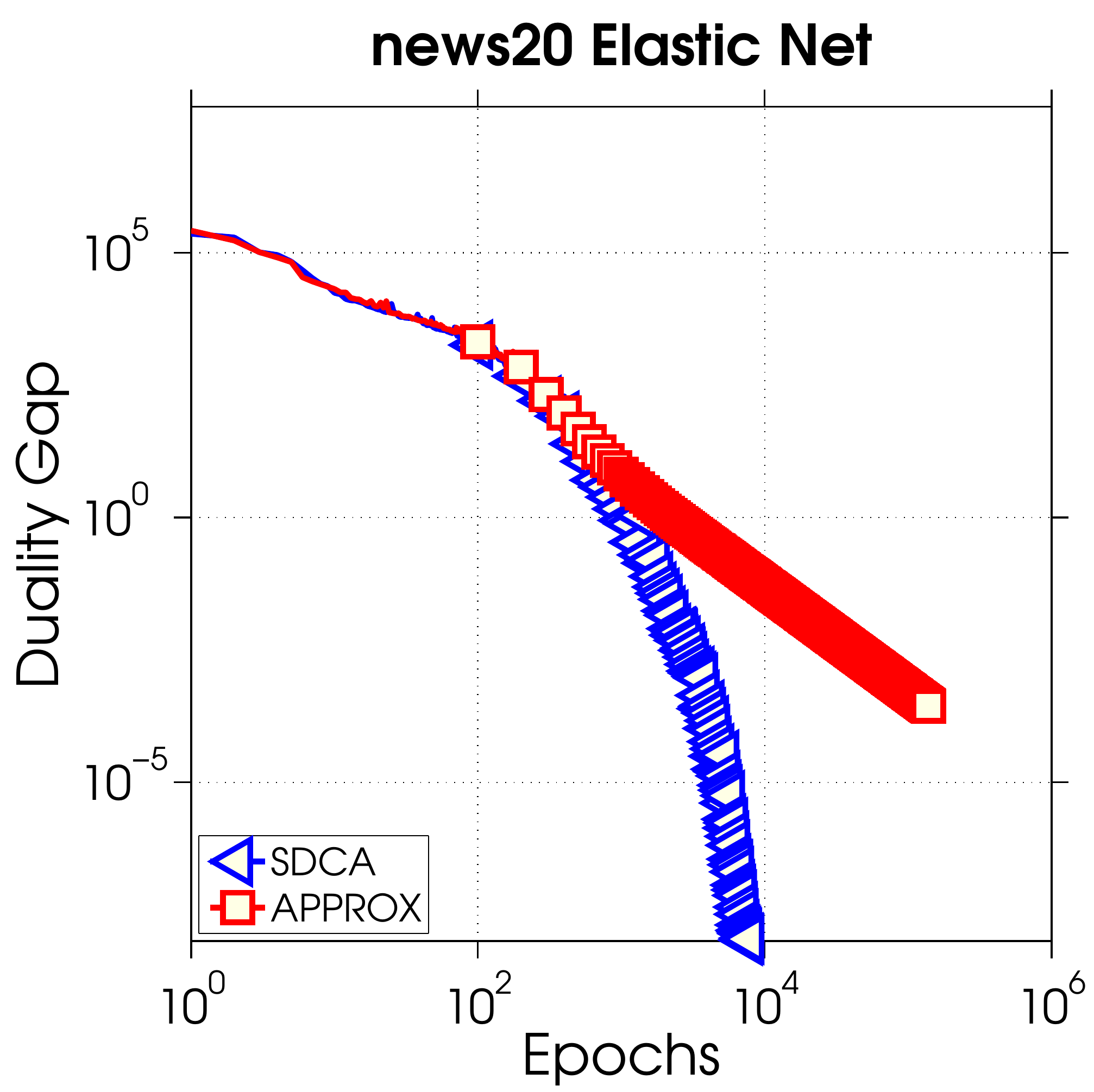}
\includegraphics[scale=0.2]{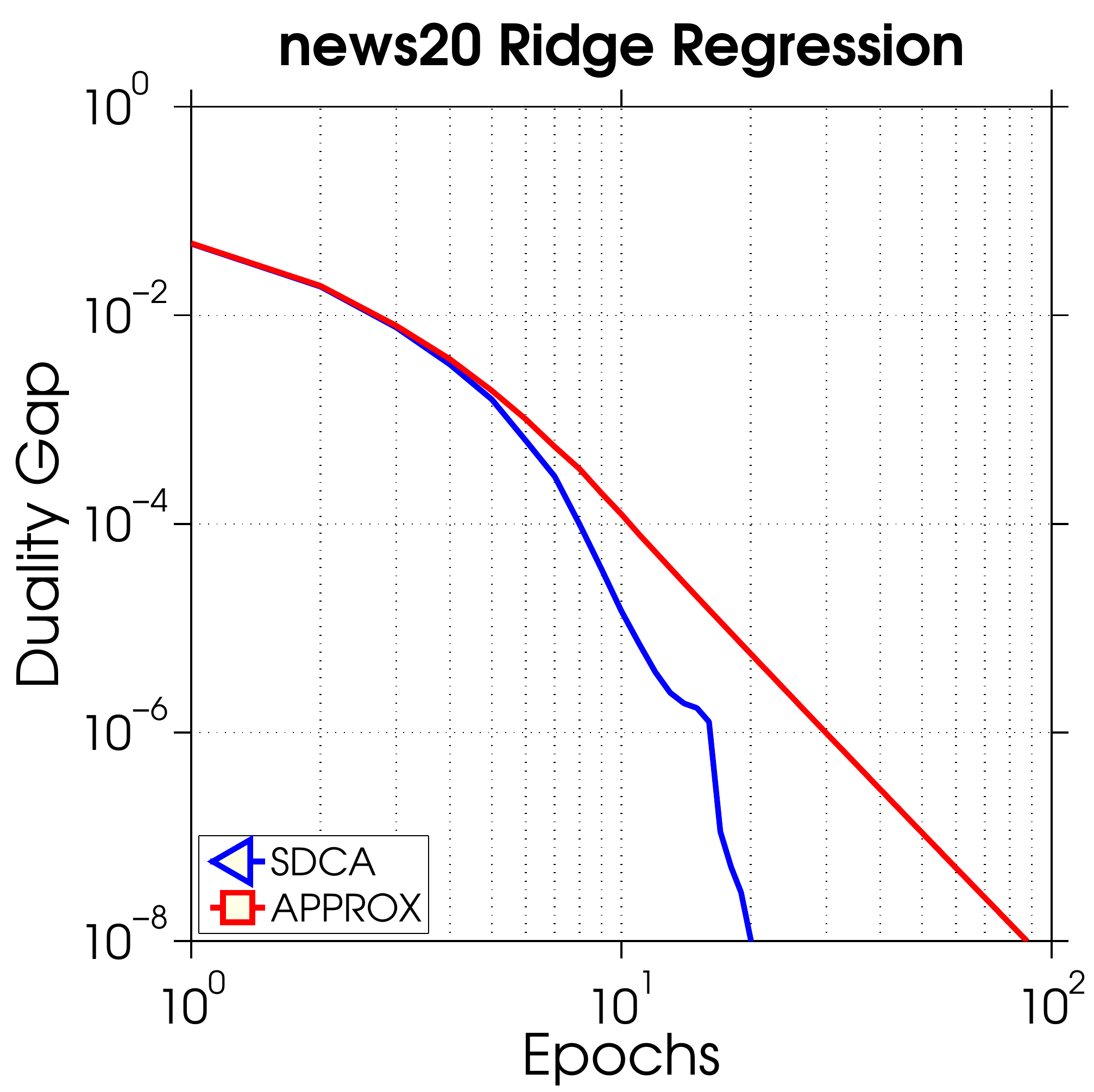}
\includegraphics[scale=0.2]{exp_2_3-eps-converted-to.pdf}
\includegraphics[scale=0.2]{exp_2_4-eps-converted-to.pdf}

\includegraphics[scale=0.2]{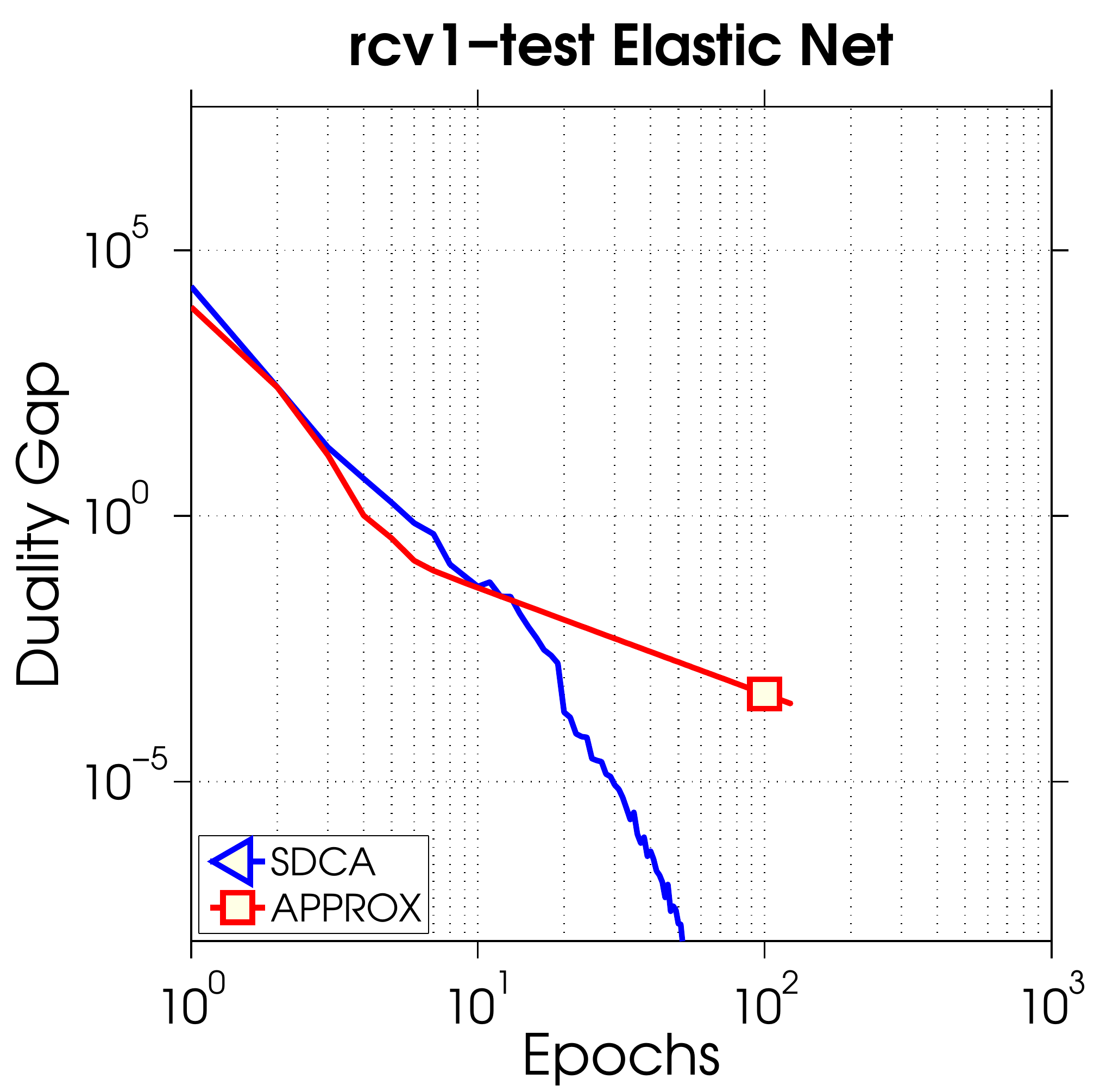}
\includegraphics[scale=0.2]{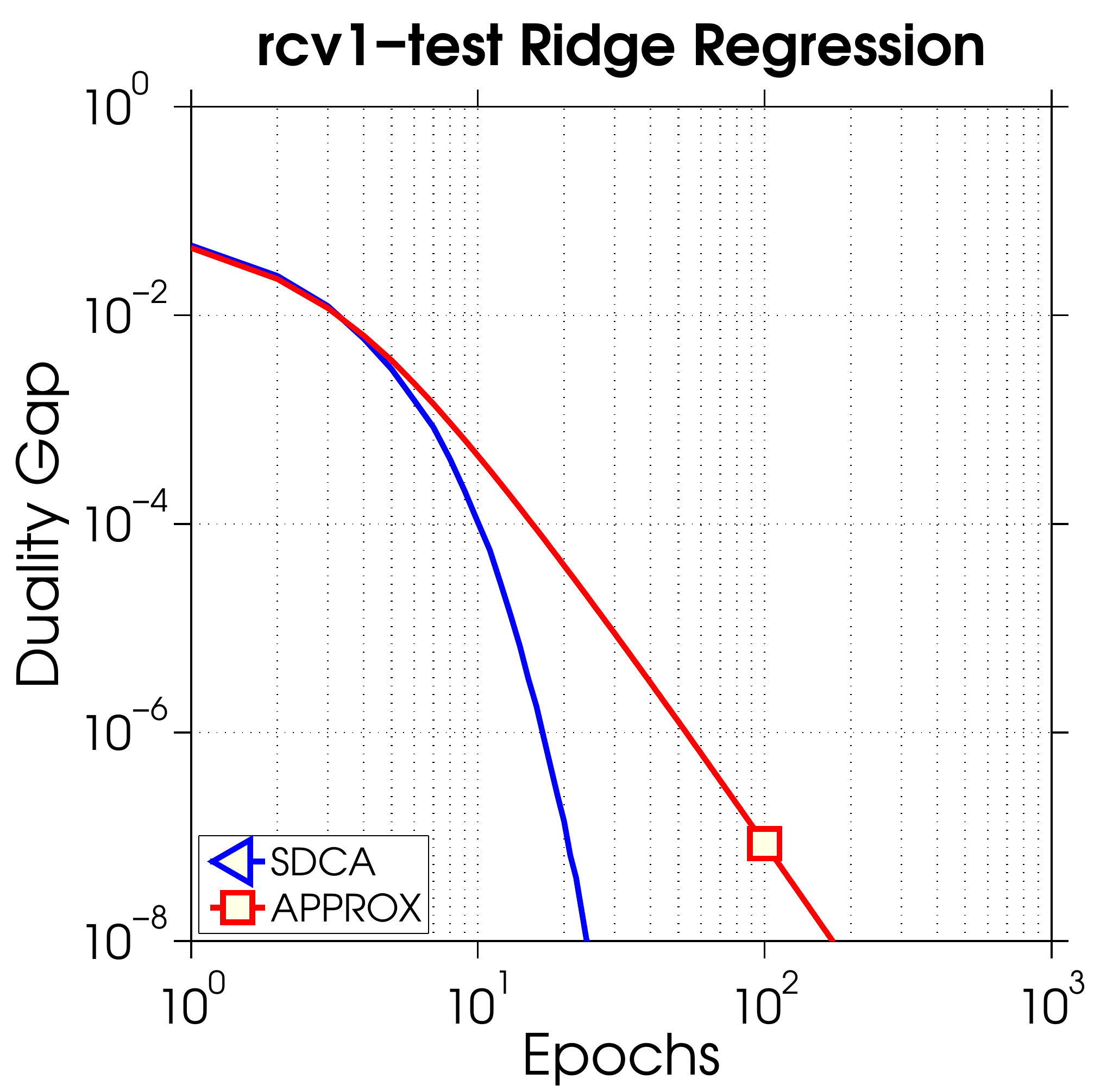}
\includegraphics[scale=0.2]{exp_6_3-eps-converted-to.pdf}
\includegraphics[scale=0.2]{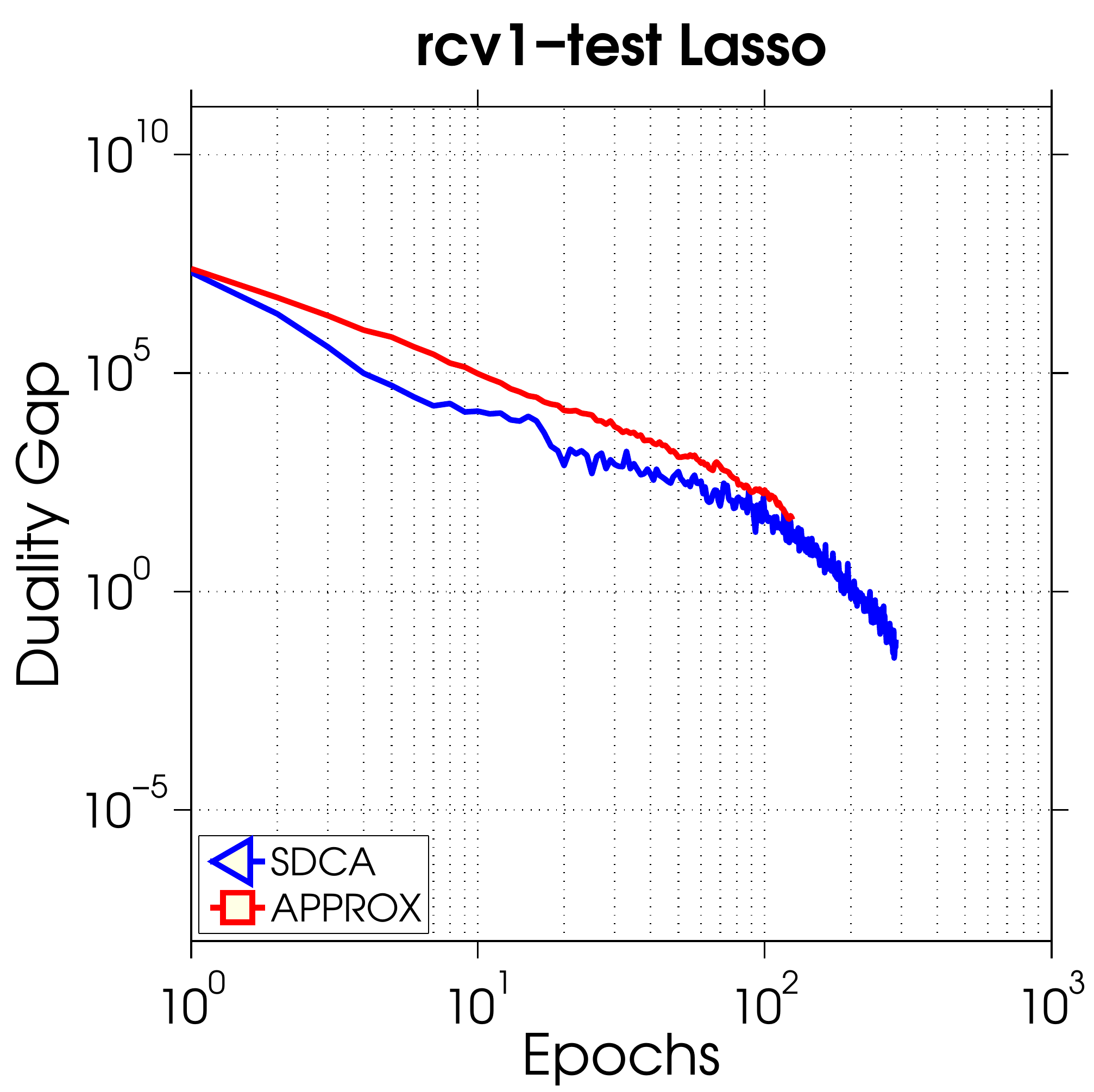}

\includegraphics[scale=0.2]{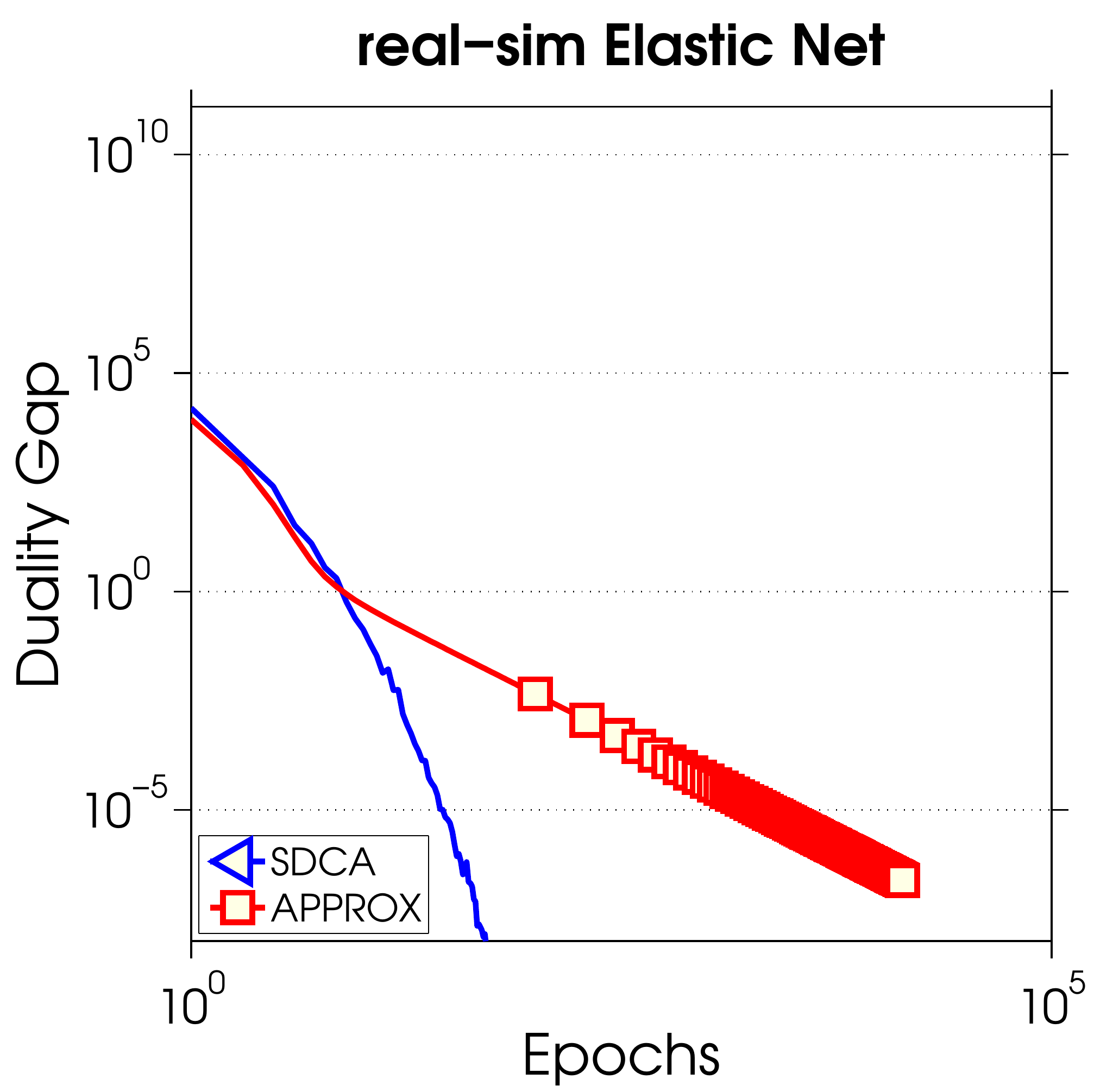}
\includegraphics[scale=0.2]{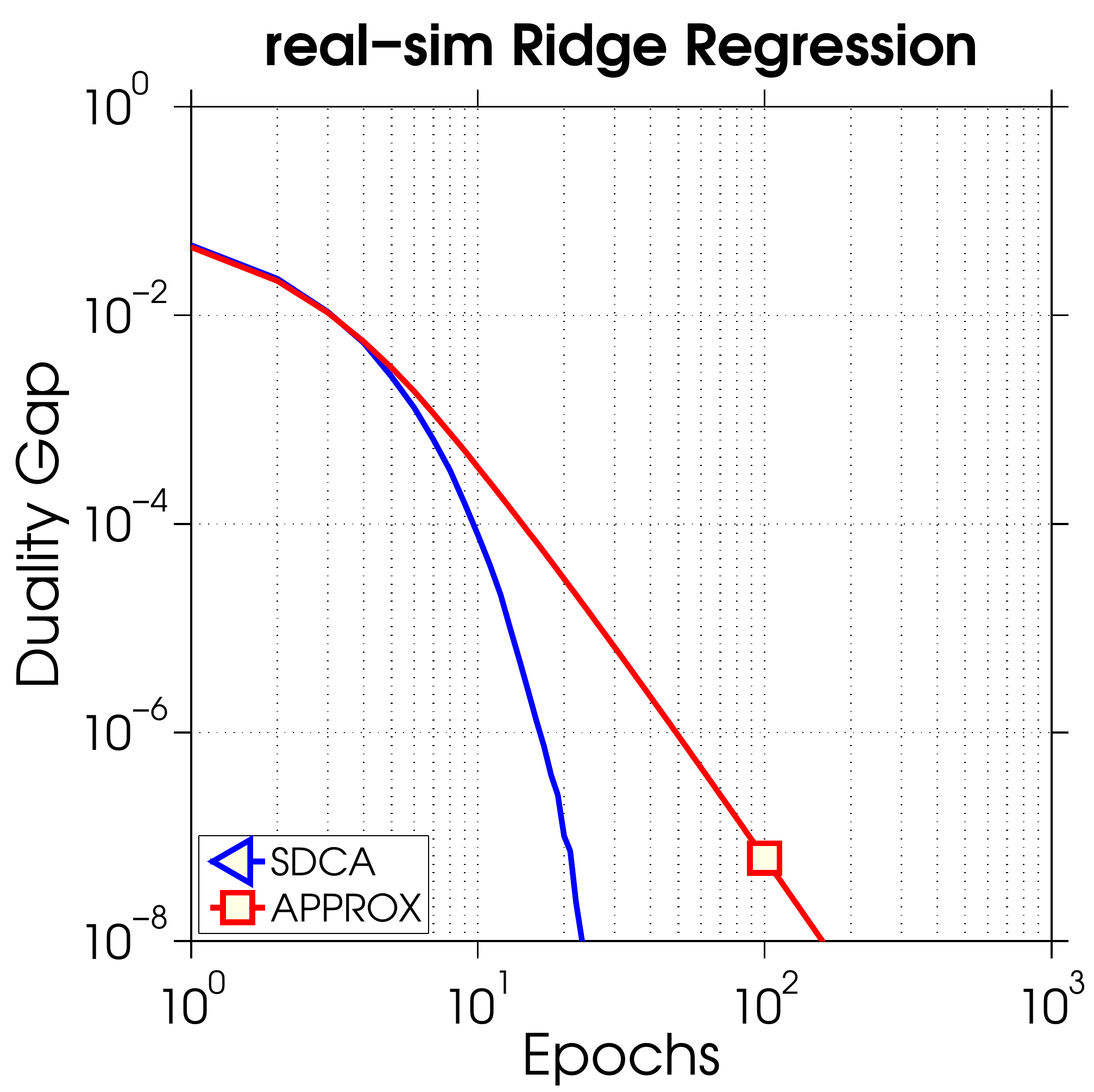}
\includegraphics[scale=0.2]{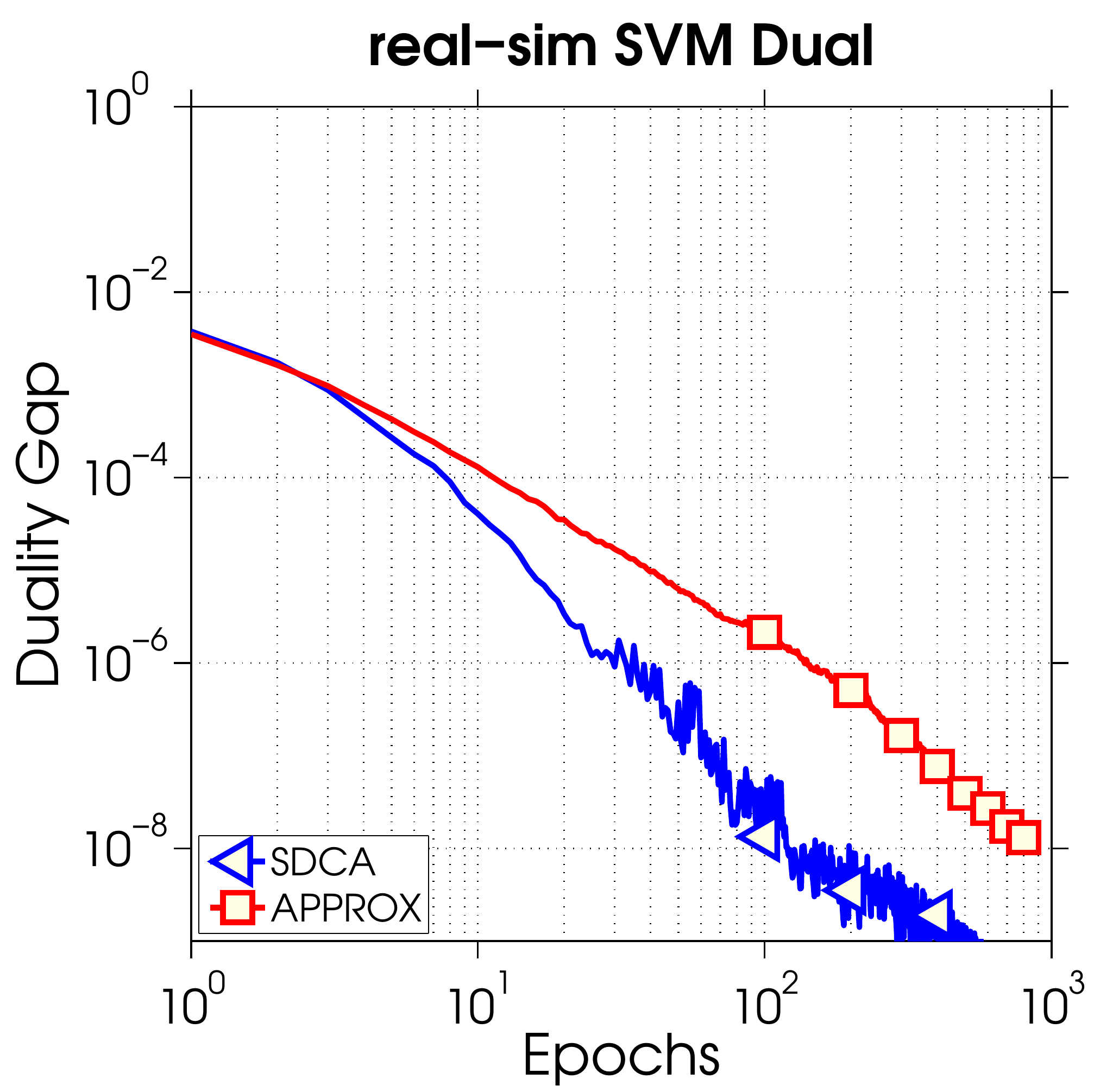}
\includegraphics[scale=0.2]{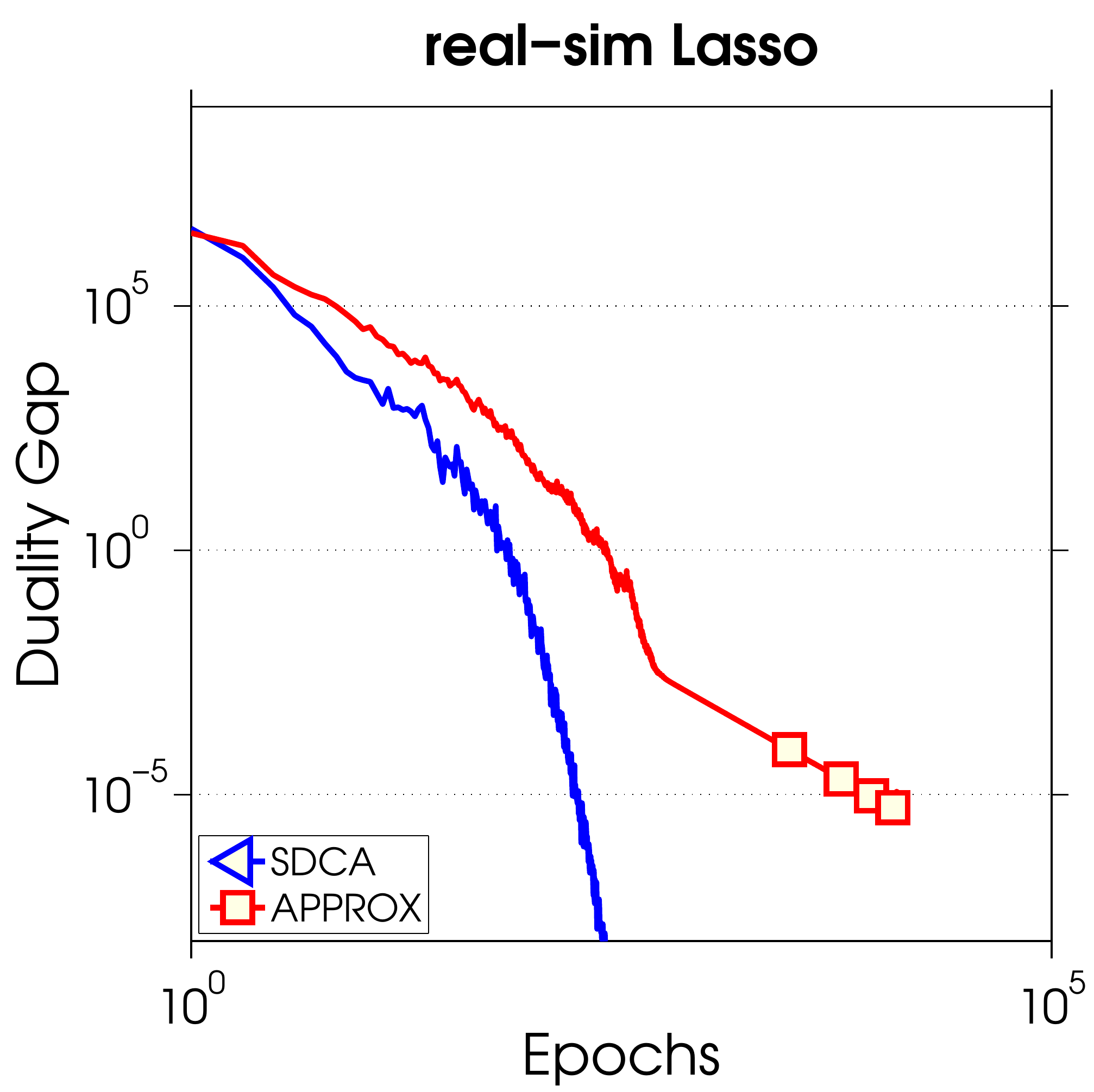}

\end{document}